\documentclass{article} % For LaTeX2e
\usepackage{iclr2019_conference,times}

% Optional math commands from https://github.com/goodfeli/dlbook_notation.
% \input{math_commands.tex}

\usepackage{hyperref}
\usepackage{url}

% add colorful text
\usepackage{color} % text color

% math
\usepackage{amssymb}
\usepackage{amsmath}
\usepackage{amsthm}
\usepackage[mathscr]{eucal}
\theoremstyle{plain}
\newtheorem*{assumption*}{Assumption}  % unnumbered assumption
\newtheorem{assumption}{Assumption}
  % assumption with section numbering

\newtheorem*{theorem*}{Theorem}
\newtheorem*{proposition*}{Proposition}  % unnumbered proposition
\newtheorem{proposition}{Proposition}
  % proposition with section numbering
\newtheorem*{proposition 1*}{Proposition 1}
\newtheorem*{lemma*}{Lemma}  % unnumbered proposition
\newtheorem{lemma_s}{Lemma}[section]  % lemma with section numbering
\usepackage{bm}
\DeclareMathOperator{\E}{\mathbb{E}}

\DeclareMathOperator{\diag}{diag}
\DeclareMathOperator{\dr}{d}  % derivative
\DeclareMathOperator{\nul}{null}
\DeclareMathOperator{\rank}{rank}
\newcommand{\rvect}[1]{\begin{bmatrix} #1 \end{bmatrix}}
\newcommand\norm[1]{\left\lVert#1\right\rVert}
\newcommand{\ds}[1]{\mathcal{#1}}  % graph or data space
\newcommand{\set}[1]{\mathbb{#1}}  % set
\newcommand{\rdv}[1]{\mathbf{#1}}  % random variable
\usepackage{accents}
\newcommand*{\dt}[1]{%
	\accentset{\mbox{\large\bfseries .}}{#1}}

% tables
\usepackage{array}
\usepackage{multirow}
\usepackage[para, flushleft]{threeparttable}  % para sets inline tablenotes
\makeatletter  % change tablenote size of threeparttable
\g@addto@macro\TPT@defaults{\footnotesize} 
\makeatother
% define column type to allow fixed width column
\newcolumntype{L}[1]{>{\raggedright\let\newline\\\arraybackslash\hspace{0pt}}m{#1}}
\newcolumntype{C}[1]{>{\centering\let\newline\\\arraybackslash\hspace{0pt}}m{#1}}
\newcolumntype{R}[1]{>{\raggedleft\let\newline\\\arraybackslash\hspace{0pt}}m{#1}}
% define lines in table
\usepackage{booktabs}
\def\toprule{\noalign{\smallskip\hrule height 1.2pt\smallskip}}
\def\midrule{\noalign{\smallskip\hrule\smallskip}}
\let\bottomrule=\toprule
% line break in table cell
\usepackage{makecell}

% list
\usepackage[inline,shortlabels]{enumitem}

% subfigures
\usepackage{graphicx}
\usepackage{subcaption}
% plot in latex
\usepackage{pgfplots}
\usepgfplotslibrary{statistics}
\definecolor{pyblue}{rgb}{0.12156863, 0.46666667, 0.70588235}
\definecolor{pyorange}{rgb}{1, 0.49803922, 0.05490196}
\definecolor{grey0}{rgb}{0.6, 0.6, 0.6}
\definecolor{grey1}{rgb}{0.7, 0.7, 0.7}
\definecolor{grey2}{rgb}{0.75, 0.75, 0.75}
\definecolor{grey3}{rgb}{0.8, 0.8, 0.8}
\definecolor{grey4}{rgb}{0.85, 0.85, 0.85}
\definecolor{grey5}{rgb}{0.9, 0.9, 0.9}
\definecolor{matlab1}{rgb}{0, 0.447, 0.741}
\definecolor{matlab2}{rgb}{0.85 0.325 0.098}
\definecolor{matlab3}{rgb}{0.929 0.694 0.125}
\definecolor{matlab4}{rgb}{0.494 0.184 0.556}
\definecolor{matlab5}{rgb}{0.466 0.674 0.188}
\definecolor{matlab6}{rgb}{0.301 0.745 0.933}
\definecolor{matlab7}{rgb}{0.635 0.078 0.184}
\usepgfplotslibrary{colorbrewer}
\pgfplotsset{
	compat=1.13,
	legend image code/.code={
		\draw[mark repeat=2,mark phase=2]
		plot coordinates {
			(0cm,0cm)
			(0.1cm,0cm)        %% default is (0.3cm,0cm)
			(0.2cm,0cm)         %% default is (0.6cm,0cm)
		};%
	}
}
% The following code defines a new mod function (fpumod) that will be used in group boxplot. This is because the default mod function may return wrong values, causing the box to shift sometimes.
\pgfmathdeclarefunction{fpumod}{2}{%
	\pgfmathfloatdivide{#1}{#2}%
	\pgfmathfloatint{\pgfmathresult}%
	\pgfmathfloatmultiply{\pgfmathresult}{#2}%
	\pgfmathfloatsubtract{#1}{\pgfmathresult}%
	\pgfmathfloatifapproxequalrel{\pgfmathresult}{#2}{\def\pgfmathresult{6}}{}%
}
% font size options: \tiny \scriptsize \footnotesize \small \normalsize \large \Large \LARGE \huge \Huge
%\usepgfplotslibrary{external}  % for extreme complex plots, export it to pdf
%\tikzexternalize

% appendix
\usepackage[titletoc,page]{appendix}

%%%%%%%%%%%%%%%%%%%%%%%%%%%%%%%%%%%%%%%%%%%%%%%%%%%%%%%%%%%%%%%%%%%%%%

\title{Improving MMD-GAN Training with Repulsive Loss Function}

% Authors must not appear in the submitted version. They should be hidden
% as long as the \iclrfinalcopy macro remains commented out below.
% Non-anonymous submissions will be rejected without review.

\author{Wei Wang\thanks{Corresponding author: weiw8@student.unimelb.edu.au} 
\\
%Department of Mechanical Engineering\\
University of Melbourne
%Melbourne, VIC 3053, Australia \\
%\texttt{\{hippo,brain,jen\}@cs.cranberry-lemon.edu} \\
%\texttt{\{weiw8\}@student.unimelb.edu.au}
\And
Yuan Sun \\
%School of Science \\
RMIT University
%Melbourne, VIC 3000, Australia \\
%\texttt{yuan.sun@rmit.edu.au} \\
\And
Saman Halgamuge \\
%Department of Mechanical Engineering\\
University of Melbourne
%Melbourne, VIC 3053, Australia \\
%\texttt{saman@unimelb.edu.au}
}

% The \author macro works with any number of authors. There are two commands
% used to separate the names and addresses of multiple authors: \And and \AND.
%
% Using \And between authors leaves it to \LaTeX{} to determine where to break
% the lines. Using \AND forces a linebreak at that point. So, if \LaTeX{}
% puts 3 of 4 authors names on the first line, and the last on the second
% line, try using \AND instead of \And before the third author name.

\iclrfinalcopy % Uncomment for camera-ready version, but NOT for submission.
\begin{document}

\maketitle

\begin{abstract}
Generative adversarial nets (GANs) are widely used to learn the data sampling process and their performance may heavily depend on the loss functions, given a limited computational budget. This study revisits MMD-GAN that uses the maximum mean discrepancy (MMD) as the loss function for GAN and makes two contributions. First, we argue that the existing MMD loss function may discourage the learning of fine details in data as it attempts to contract the discriminator outputs of real data. To address this issue, we propose a repulsive loss function to actively learn the difference among the real data by simply rearranging the terms in MMD. Second, inspired by the hinge loss, we propose a bounded Gaussian kernel to stabilize the training of MMD-GAN with the repulsive loss function. The proposed methods are applied to the unsupervised image generation tasks on CIFAR-10, STL-10, CelebA, and LSUN bedroom datasets. Results show that the repulsive loss function significantly improves over the MMD loss at no additional computational cost and outperforms other representative loss functions. The proposed methods achieve an FID score of 16.21 on the CIFAR-10 dataset using a single DCGAN network and spectral normalization. \footnote{The code is available at: \url{https://github.com/richardwth/MMD-GAN}}
\end{abstract}

\section{Introduction}
\label{sec:intro}

Generative adversarial nets (GANs) (\cite{gan}) are a branch of generative models that learns to mimic the real data generating process. GANs have been intensively studied in recent years, with a variety of successful applications (\cite{pggan,nlpgan,semigan,cyclegan,reinforce_gan}). The idea of GANs is to jointly train a generator network that attempts to produce artificial samples, and a discriminator network or critic that distinguishes the generated samples from the real ones. Compared to maximum likelihood based methods, GANs tend to produce samples with sharper and more vivid details but require more efforts to train. 

Recent studies on improving GAN training have mainly focused on designing loss functions, network architectures and training procedures. The loss function, or simply loss, defines quantitatively the difference of discriminator outputs between real and generated samples. The gradients of loss functions are used to train the generator and discriminator. This study focuses on a loss function called maximum mean discrepancy (MMD), which is well known as the distance metric between two probability distributions and widely applied in kernel two-sample test (\cite{mmdtest}). Theoretically, MMD reaches its global minimum zero if and only if the two distributions are equal. Thus, MMD has been applied to compare the generated samples to real ones directly (\cite{gmmn,gmmn2}) and extended as the loss function to the GAN framework recently (\cite{coulomb,mmd_gan_g,mmd_gan_t}).

In this paper, we interpret the optimization of MMD loss by the discriminator as a combination of attraction and repulsion processes, similar to that of linear discriminant analysis. We argue that the existing MMD loss may discourage the learning of fine details in data, as the discriminator attempts to minimize the within-group variance of its outputs for the real data. To address this issue, we propose a repulsive loss for the discriminator that explicitly explores the differences among real data. The proposed loss achieved significant improvements over the MMD loss on image generation tasks of four benchmark datasets, without incurring any additional computational cost. Furthermore, a bounded Gaussian kernel is proposed to stabilize the training of discriminator. As such, using a single kernel in MMD-GAN is sufficient, in contrast to a linear combination of kernels used in \cite{mmd_gan_g} and \cite{mmd_gan_t}. By using a single kernel, the computational cost of the MMD loss can potentially be reduced in a variety of applications. 

The paper is organized as follows. Section 2 reviews the GANs trained using the MMD loss (MMD-GAN). We propose the repulsive loss for discriminator in Section 3, introduce two practical techniques to stabilize the training process in Section 4, and present the results of extensive experiments in Section 5. In the last section, we discuss the connections between our model and existing work.

\section{MMD-GAN}
\label{sec:all_the_gans}

In this section, we introduce the GAN model and MMD loss. Consider a random variable \(\rdv{X}\in\ds{X}\) with an empirical data distribution \(P_{\rdv{X}}\) to be learned. A typical GAN model consists of two neural networks: a generator \(G\) and a discriminator \(D\). The generator \(G\) maps a latent code \(\bm{z}\) with a fixed distribution \(P_{\rdv{Z}}\) (e.g., Gaussian) to the data space \(\ds{X}\): \(\bm{y}=G(\bm{z})\in\ds{X}\), where \(\bm{y}\) represents the generated samples with distribution \(P_{G}\). The discriminator \(D\) evaluates the scores \(D(\bm{a})\in\set{R}^d\) of a real or generated sample \(\bm{a}\). This study focuses on image generation tasks using convolutional neural networks (CNN) for both \(G\) and \(D\).

Several loss functions have been proposed to quantify the difference of the scores between real and generated samples: \(\{D(\bm{x})\}\) and \(\{D(\bm{y})\}\), including the minimax loss and non-saturating loss~(\cite{gan}), hinge loss~(\cite{implicit}), Wasserstein loss~(\cite{wgan,wgan_gp}) and maximum mean discrepancy (MMD) (\cite{mmd_gan_g,mmd_gan_t}) (see Appendix~\ref{sec:loss_literature} for more details). Among them, MMD uses kernel embedding \(\bm{\phi}(\bm{a})=k(\cdot,\bm{a})\) associated with a characteristic kernel \(k\) such that \(\bm{\phi}\) is infinite-dimensional and \(\langle\bm{\phi}(\bm{a}),\bm{\phi}(\bm{b})\rangle_{\ds{H}}=k(\bm{a},\bm{b})\). The squared MMD distance between two distributions \(P\) and \(Q\) is
\begin{equation}\label{Eq:mmd}
M_k^2(P,Q)=\norm{\bm{\mu}_{P}-\bm{\mu}_{Q}}_{\ds{H}}^2=\set{E}_{\bm{a},\bm{a}'\sim P}[k(\bm{a},\bm{a}')]+\set{E}_{\bm{b},\bm{b}'\sim Q}[k(\bm{b},\bm{b}')]-2\set{E}_{\bm{a}\sim P,\bm{b}\sim Q}[k(\bm{a},\bm{b})]
\end{equation}
The kernel \(k(\bm{a},\bm{b})\) measures the similarity between two samples \(\bm{a}\) and \(\bm{b}\). \cite{mmdtest} proved that, using a characteristic kernel \(k\), \(M_k^2(P,Q)\ge0\) with equality applies if and only if \(P=Q\). 

In MMD-GAN, the discriminator \(D\) can be interpreted as forming a new kernel with \(k\): \(k\circ D(\bm{a},\bm{b})=k(D(\bm{a}), D(\bm{b}))=k_D(\bm{a},\bm{b})\). If \(D\) is injective, \(k\circ D\) is characteristic and \(M_{k\circ D}^2(P_{\rdv{X}},P_{G})\) reaches its minimum if and only if \(P_{\rdv{X}}=P_{G}\) (\cite{mmd_gan_g}). Thus, the objective functions for \(G\) and \(D\) could be (\cite{mmd_gan_g,mmd_gan_t}):
\begin{gather}
\min_{G}L_{G}^{\text{mmd}}=M_{k\circ D}^2(P_{\rdv{X}},P_{G})=\set{E}_{P_G}[k_D(\bm{y},\bm{y}')]-2\set{E}_{P_{\rdv{X}},P_{G}}[k_D(\bm{x},\bm{y})]+\set{E}_{P_{\rdv{X}}}[k_D(\bm{x},\bm{x}')] \label{eq:L_G} \\
\min_{D}L_D^{\text{att}}=-M_{k\circ D}^2(P_{\rdv{X}},P_{G})=2\set{E}_{P_{\rdv{X}},P_{G}}[k_D(\bm{x},\bm{y})]-\set{E}_{P_{\rdv{X}}}[k_D(\bm{x},\bm{x}')]-\set{E}_{P_G}[k_D(\bm{y},\bm{y}')] \label{eq:L_D}
\end{gather}
MMD-GAN has been shown to be more effective than the model that directly uses MMD as the loss function for the generator \(G\)~(\cite{mmd_gan_g}). 

\cite{gan_convergence} showed that MMD and Wasserstein metric are weaker objective functions for GAN than the Jensen–Shannon (JS) divergence (related to minimax loss) and total variation (TV) distance (related to hinge loss). The reason is that convergence of \(P_G\) to \(P_{\rdv{X}}\) in JS-divergence and TV distance also implies convergence in MMD and Wasserstein metric. Weak metrics are desirable as they provide more information on adjusting the model to fit the data distribution (\cite{gan_convergence}). \cite{gan_stable} proved that the GAN trained using the minimax loss and gradient updates on model parameters is locally exponentially stable near equilibrium, while the GAN using Wasserstein loss is not. In Appendix~\ref{sec:gan_stable}, we demonstrate that the MMD-GAN trained by gradient descent is locally exponentially stable near equilibrium.

\section{Repulsive Loss Function}
\label{sec:repulsive_loss}

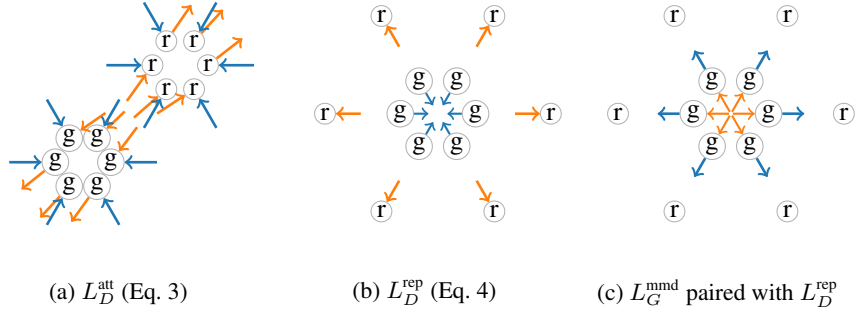
\begin{figure}[tb]
	\centering
	\begin{subfigure}[t]{0.26\linewidth}
		\centering
		\begin{tikzpicture}
		\begin{axis}[%
		width=1.4\textwidth,height=1.4\textwidth,
		axis line style={draw=none},
		xmin=-3.0, xmax=6.5, ymin=-3.0, ymax=6.5, 
		tick style={draw=none}, yticklabels={,,}, xticklabels={,,},
		%legend pos=north west, legend style={font=\footnotesize, draw=none, row sep=-2.0pt}
		]
		% g
		\node [circle, draw=grey1, inner sep=0.6pt] (g1) at (axis cs:1.0, 0.0){g};
		\node [circle, draw=grey1, inner sep=0.6pt] (g2) at (axis cs:0.5, 0.866){g};
		\node [circle, draw=grey1, inner sep=0.6pt] (g3) at (axis cs:-0.5, 0.866){g};
		\node [circle, draw=grey1, inner sep=0.6pt] (g4) at (axis cs:-1.0, 0.0){g};
		\node [circle, draw=grey1, inner sep=0.6pt] (g5) at (axis cs:-0.5, -0.866){g};
		\node [circle, draw=grey1, inner sep=0.6pt] (g6) at (axis cs:0.5, -0.866){g};
		% r
		\node [circle, draw=grey1, inner sep=0.8pt] (r1) at (axis cs:4.5, 3.5){r};
		\node [circle, draw=grey1, inner sep=0.8pt] (r2) at (axis cs:4, 4.366){r};
		\node [circle, draw=grey1, inner sep=0.8pt] (r3) at (axis cs:3.0, 4.366){r};
		\node [circle, draw=grey1, inner sep=0.8pt] (r4) at (axis cs:2.5, 3.5){r};
		\node [circle, draw=grey1, inner sep=0.8pt] (r5) at (axis cs:3.0, 2.634){r};
		\node [circle, draw=grey1, inner sep=0.8pt] (r6) at (axis cs:4.0, 2.634){r};
		% arrows for G
		\node (a1) at (axis cs:2.2, 1.6){}; \draw[->, color=pyorange, line width=1pt](a1)--(g1);
		\node (a2) at (axis cs:1.8342, 2.156){}; \draw[->, color=pyorange, line width=1pt](a2)--(g2);
		\node (a3) at (axis cs:1.1412,2.009){}; \draw[->, color=pyorange, line width=1pt](a3)--(g3);
		\node (a4) at (axis cs:-2.5617,-1.2494){}; \draw[->, color=pyorange, line width=1pt](g4)--(a4);
		\node (a5) at (axis cs:-1.8579,-2.3344){}; \draw[->, color=pyorange, line width=1pt](g5)--(a5);
		\node (a6) at (axis cs:-0.6678,-2.4896){}; \draw[->, color=pyorange, line width=1pt](g6)--(a6);
		
		\node (re1) at (axis cs:3.0, 0.0){}; \draw[->, color=pyblue, line width=1pt](re1)--(g1);
		\node (re2) at (axis cs:1.5, 2.598){}; \draw[->, color=pyblue, line width=1pt](re2)--(g2);
		\node (re3) at (axis cs:-1.5,2.598){}; \draw[->, color=pyblue, line width=1pt](re3)--(g3);
		\node (re4) at (axis cs:-3.0,0.0){}; \draw[->, color=pyblue, line width=1pt](re4)--(g4);
		\node (re5) at (axis cs:-1.5,-2.598){}; \draw[->, color=pyblue, line width=1pt](re5)--(g5);
		\node (re6) at (axis cs:1.5,-2.598){}; \draw[->, color=pyblue, line width=1pt](re6)--(g6);
		
		% arrows for D
		\node (da1) at (axis cs:6.0787, 4.7279){}; \draw[->, color=pyorange, line width=1pt](r1)--(da1);
		\node (da2) at (axis cs:5.3511, 5.8407){}; \draw[->, color=pyorange, line width=1pt](r2)--(da2);
		\node (da3) at (axis cs:4.1326,6.0144){}; \draw[->, color=pyorange, line width=1pt](r3)--(da3);
		%\node (da4) at (axis cs:3.6625,5.1275){}; \draw[->, color=pyorange, line width=1pt](r4)--(da4);
		%\node (da5) at (axis cs:4.5029,3.9536){}; \draw[->, color=pyorange, line width=1pt](r5)--(da5);
		%\node (da6) at (axis cs:5.6704,3.7339){}; \draw[->, color=pyorange, line width=1pt](r6)--(da6);
		\node (da4) at (axis cs:1.3375,1.8725){}; \draw[->, color=pyorange, line width=1pt](da4)--(r4);
		\node (da5) at (axis cs:1.4971,1.3144){}; \draw[->, color=pyorange, line width=1pt](da5)--(r5);
		\node (da6) at (axis cs:2.3296,1.5341){}; \draw[->, color=pyorange, line width=1pt](da6)--(r6);
		
		\node (dre1) at (axis cs:6.5, 3.5){}; \draw[->, color=pyblue, line width=1pt](dre1)--(r1);
		\node (dre2) at (axis cs:5.0, 6.098){}; \draw[->, color=pyblue, line width=1pt](dre2)--(r2);
		\node (dre3) at (axis cs:2.0,6.098){}; \draw[->, color=pyblue, line width=1pt](dre3)--(r3);
		\node (dre4) at (axis cs:0.5,3.5){}; \draw[->, color=pyblue, line width=1pt](dre4)--(r4);
		\node (dre5) at (axis cs:2.0,0.902){}; \draw[->, color=pyblue, line width=1pt](dre5)--(r5);
		\node (dre6) at (axis cs:5.0,0.902){}; \draw[->, color=pyblue, line width=1pt](dre6)--(r6);
		\end{axis}
		\end{tikzpicture}
		\caption{\(L_D^{\text{att}}\) (Eq.~\ref{eq:L_D})\label{fig:attractive_D}}
	\end{subfigure}
	~
	~
	\begin{subfigure}[t]{0.26\linewidth}
		\centering
		\begin{tikzpicture}
		\begin{axis}[%
		width=1.4\textwidth,height=1.4\textwidth,
		axis line style={draw=none},
		xmin=-3.5, xmax=3.5, ymin=-3.5, ymax=3.5, 
		tick style={draw=none}, yticklabels={,,}, xticklabels={,,},
		%legend pos=north west, legend style={font=\footnotesize, draw=none, row sep=-2.0pt}
		]
		% g
		\node [circle, draw=grey1, inner sep=0.6pt] (g1) at (axis cs:1.0, 0.0){g};
		\node [circle, draw=grey1, inner sep=0.6pt] (g2) at (axis cs:0.5, 0.866){g};
		\node [circle, draw=grey1, inner sep=0.6pt] (g3) at (axis cs:-0.5, 0.866){g};
		\node [circle, draw=grey1, inner sep=0.6pt] (g4) at (axis cs:-1.0, 0.0){g};
		\node [circle, draw=grey1, inner sep=0.6pt] (g5) at (axis cs:-0.5, -0.866){g};
		\node [circle, draw=grey1, inner sep=0.6pt] (g6) at (axis cs:0.5, -0.866){g};
		% r
		\node [circle, draw=grey1, inner sep=0.8pt] (r1) at (axis cs:3.0, 0.0){r};
		\node [circle, draw=grey1, inner sep=0.8pt] (r2) at (axis cs:1.5, 2.598){r};
		\node [circle, draw=grey1, inner sep=0.8pt] (r3) at (axis cs:-1.5,2.598){r};
		\node [circle, draw=grey1, inner sep=0.8pt] (r4) at (axis cs:-3.0,0.0){r};
		\node [circle, draw=grey1, inner sep=0.8pt] (r5) at (axis cs:-1.5,-2.598){r};
		\node [circle, draw=grey1, inner sep=0.8pt] (r6) at (axis cs:1.5,-2.598){r};
		% arrows 
		\node [inner sep=0pt] (a1) at (axis cs:2.0, 0.0){}; 
		\node [inner sep=0pt] (a2) at (axis cs:1.0, 1.732){}; 
		\node [inner sep=0pt] (a3) at (axis cs:-1.0,1.732){}; 
		\node [inner sep=0pt] (a4) at (axis cs:-2.0,0.0){}; 
		\node [inner sep=0pt] (a5) at (axis cs:-1.0,-1.732){}; 
		\node [inner sep=0pt] (a6) at (axis cs:1.0,-1.732){}; 
		\draw[->, color=pyorange, line width=1pt](a1)--(r1);
		\draw[->, color=pyorange, line width=1pt](a2)--(r2);
		\draw[->, color=pyorange, line width=1pt](a3)--(r3);
		\draw[->, color=pyorange, line width=1pt](a4)--(r4);
		\draw[->, color=pyorange, line width=1pt](a5)--(r5);
		\draw[->, color=pyorange, line width=1pt](a6)--(r6);
		
		\node [inner sep=0pt] (c1) at (axis cs:0.2, 0.0){}; 
		\node [inner sep=0pt] (c2) at (axis cs:0.1, 0.1732){}; 
		\node [inner sep=0pt] (c3) at (axis cs:-0.1,0.1732){}; 
		\node [inner sep=0pt] (c4) at (axis cs:-0.2,0.0){}; 
		\node [inner sep=0pt] (c5) at (axis cs:-0.1,-0.1732){}; 
		\node [inner sep=0pt] (c6) at (axis cs:0.1,-0.1732){};
		\draw[->, color=pyblue, line width=0.8pt](g1)--(c1);
		\draw[->, color=pyblue, line width=0.8pt](g2)--(c2);
		\draw[->, color=pyblue, line width=0.8pt](g3)--(c3);
		\draw[->, color=pyblue, line width=0.8pt](g4)--(c4);
		\draw[->, color=pyblue, line width=0.8pt](g5)--(c5);
		\draw[->, color=pyblue, line width=0.8pt](g6)--(c6);
		\end{axis}
		\end{tikzpicture}
		\caption{\(L_{D}^{\text{rep}}\) (Eq.~\ref{eq:rep_loss})\label{fig:repulsive_D}}
	\end{subfigure}
	~
	\begin{subfigure}[t]{0.26\linewidth}
		\centering
		\begin{tikzpicture}
		\begin{axis}[%
		width=1.4\textwidth,height=1.4\textwidth,
		axis line style={draw=none},
		xmin=-3.5, xmax=3.5, ymin=-3.5, ymax=3.5, 
		tick style={draw=none}, yticklabels={,,}, xticklabels={,,},
		%legend pos=north west, legend style={font=\footnotesize, draw=none, row sep=-2.0pt}
		]
		% g
		\node [circle, draw=grey1, inner sep=0.6pt] (g1) at (axis cs:1.0, 0.0){g};
		\node [circle, draw=grey1, inner sep=0.6pt] (g2) at (axis cs:0.5, 0.866){g};
		\node [circle, draw=grey1, inner sep=0.6pt] (g3) at (axis cs:-0.5, 0.866){g};
		\node [circle, draw=grey1, inner sep=0.6pt] (g4) at (axis cs:-1.0, 0.0){g};
		\node [circle, draw=grey1, inner sep=0.6pt] (g5) at (axis cs:-0.5, -0.866){g};
		\node [circle, draw=grey1, inner sep=0.6pt] (g6) at (axis cs:0.5, -0.866){g};
		% r
		\node [circle, draw=grey1, inner sep=0.8pt] (r1) at (axis cs:3.0, 0.0){r};
		\node [circle, draw=grey1, inner sep=0.8pt] (r2) at (axis cs:1.5, 2.598){r};
		\node [circle, draw=grey1, inner sep=0.8pt] (r3) at (axis cs:-1.5,2.598){r};
		\node [circle, draw=grey1, inner sep=0.8pt] (r4) at (axis cs:-3.0,0.0){r};
		\node [circle, draw=grey1, inner sep=0.8pt] (r5) at (axis cs:-1.5,-2.598){r};
		\node [circle, draw=grey1, inner sep=0.8pt] (r6) at (axis cs:1.5,-2.598){r};
		% arrows 
		\node [inner sep=0pt] (a1) at (axis cs:2.0, 0.0){}; \draw[->, color=pyblue, line width=1pt](g1)--(a1);
		\node [inner sep=0pt] (a2) at (axis cs:1.0, 1.732){}; \draw[->, color=pyblue, line width=1pt](g2)--(a2);
		\node [inner sep=0pt] (a3) at (axis cs:-1.0,1.732){}; \draw[->, color=pyblue, line width=1pt](g3)--(a3);
		\node [inner sep=0pt] (a4) at (axis cs:-2.0,0.0){}; \draw[->, color=pyblue, line width=1pt](g4)--(a4);
		\node [inner sep=0pt] (a5) at (axis cs:-1.0,-1.732){}; \draw[->, color=pyblue, line width=1pt](g5)--(a5);
		\node [inner sep=0pt] (a6) at (axis cs:1.0,-1.732){}; \draw[->, color=pyblue, line width=1pt](g6)--(a6);
		\node [inner sep=0pt] (c) at (axis cs:0.0,0.0){};
		\draw[->, color=pyorange, line width=0.8pt](c)--(g1);
		\draw[->, color=pyorange, line width=0.8pt](c)--(g2);
		\draw[->, color=pyorange, line width=0.8pt](c)--(g3);
		\draw[->, color=pyorange, line width=0.8pt](c)--(g4);
		\draw[->, color=pyorange, line width=0.8pt](c)--(g5);
		\draw[->, color=pyorange, line width=0.8pt](c)--(g6);
		\end{axis}
		\end{tikzpicture}
		\caption{\(L_{G}^{\text{mmd}}\) paired with \(L_{D}^{\text{rep}}\)\label{fig:repulsive_G}}
	\end{subfigure}
	\caption{Illustration of the gradient directions of each loss on the real sample scores \(\{D(\bm{x})\}\) (``r" nodes) and generated sample scores \(\{D(\bm{y})\}\) (``g" nodes). The blue arrows stand for attraction and the orange arrows for repulsion. When \(L_{G}^{\text{mmd}}\) is paired with \(L_D^{\text{att}}\), the gradient directions of \(L_{G}^{\text{mmd}}\) on \(\{D(\bm{y})\}\) can be obtained by reversing the arrows in (a), thus are omitted.}  
	\label{fig:att_vs_rep}
\end{figure}

In this section, we interpret the training of MMD-GAN (using \(L_D^{\text{att}}\) and \(L_{G}^{\text{mmd}}\)) as a combination of attraction and repulsion processes, and propose a novel repulsive loss function for the discriminator by rearranging the components in \(L_D^{\text{att}}\).

First, consider a linear discriminant analysis (LDA) model as the discriminator. The task is to find a projection \(\bm{w}\) to maximize the between-group variance \(\norm{\bm{w}^T\bm{\mu}_x-\bm{w}^T\bm{\mu}_y}\) and minimize the within-group variance \(\bm{w}^T(\bm{\Sigma}_x+\bm{\Sigma}_y)\bm{w}\), where \(\bm{\mu}\) and \(\bm{\Sigma}\) are group mean and covariance. 

In MMD-GAN, the neural-network discriminator works in a similar way as LDA. By minimizing \(L_D^{\text{att}}\), the discriminator \(D\) tackles two tasks:
\begin{enumerate*}[label=\arabic*)]
	\item \(D\) reduces \(\set{E}_{P_{\rdv{X}},P_{G}}[k_D(\bm{x},\bm{y})]\), i.e., causes the two groups \(\{D(\bm{x})\}\) and \(\{D(\bm{y})\}\) to repel each other (see Fig.~\ref{fig:attractive_D} orange arrows), or maximize between-group variance; and
	\item \(D\) increases \(\set{E}_{P_{\rdv{X}}}[k_D(\bm{x},\bm{x}')]\) and \(\set{E}_{P_G}[k(\bm{y},\bm{y}')]\), i.e. contracts \(\{D(\bm{x})\}\) and \(\{D(\bm{y})\}\) within each group (see Fig.~\ref{fig:attractive_D} blue arrows), or minimize the within-group variance.
\end{enumerate*}
We refer to loss functions that contract real data scores as attractive losses.  

We argue that the attractive loss \(L_D^{\text{att}}\) (Eq.~\ref{eq:L_D}) has two issues that may slow down the GAN training:
\begin{enumerate}[leftmargin=*]
	\item The discriminator \(D\) may focus more on the similarities among real samples (in order to contract \(\{D(\bm{x})\}\)) than the fine details that separate them. Initially, \(G\) produces low-quality samples and it may be adequate for \(D\) to learn the common features of \(\{\bm{x}\}\) in order to distinguish between \(\{\bm{x}\}\) and \(\{\bm{y}\}\). Only when \(\{D(\bm{y})\}\) is sufficiently close to \(\{D(\bm{x})\}\) will \(D\) learn the fine details of \(\{\bm{x}\}\) to be able to separate \(\{D(\bm{x})\}\) from \(\{D(\bm{y})\}\). Consequently, \(D\) may leave out some fine details in real samples, thus \(G\) has no access to them during training.
	\item As shown in Fig.~\ref{fig:attractive_D}, the gradients on \(D(\bm{y})\) from the attraction (blue arrows) and repulsion (orange arrows) terms in \(L_D^{\text{att}}\) (and thus \(L_{G}^{\text{mmd}}\)) may have opposite directions during training. Their summation may be small in magnitude even when \(D(\bm{y})\) is far away from \(D(\bm{x})\), which may cause \(G\) to stagnate locally.
\end{enumerate}

Therefore, we propose a repulsive loss for \(D\) to encourage repulsion of the real data scores \(\{D(\bm{x})\}\):
\begin{equation}\label{eq:rep_loss}
L_{D}^{\text{rep}}=\set{E}_{P_{\rdv{X}}}[k_D(\bm{x},\bm{x}')]-\set{E}_{P_G}[k_D(\bm{y},\bm{y}')]
\end{equation}
The generator \(G\) uses the same MMD loss \(L_{G}^{\text{mmd}}\) as before (see Eq.~\ref{eq:L_G}). Thus, the adversary lies in the fact that \(D\) contracts \(\{D(\bm{y})\}\) via maximizing \(\set{E}_{P_G}[k_D(\bm{y},\bm{y}')]\) (see Fig.~\ref{fig:repulsive_D}) while \(G\) expands \(\{D(\bm{y})\}\) (see Fig.~\ref{fig:repulsive_G}). Additionally, \(D\) also learns to separate the real data by minimizing \(\set{E}_{P_{\rdv{X}}}[k_D(\bm{x},\bm{x}')]\), which actively explores the fine details in real samples and may result in more meaningful gradients for \(G\). Note that in Eq.~\ref{eq:rep_loss}, \(D\) does not explicitly push the average score of \(\{D(\bm{y})\}\) away from that of \(\{D(\bm{x})\}\) because it may have no effect on the pair-wise sample distances. But \(G\) aims to match the average scores of both groups. Thus, we believe, compared to the model using \(L_{G}^{\text{mmd}}\) and \(L_D^{\text{att}}\), our model of \(L_{G}^{\text{mmd}}\) and \(L_{D}^{\text{rep}}\) is less likely to yield opposite gradients when \(\{D(\bm{y})\}\) and \(\{D(\bm{x})\}\) are distinct (see Fig.~\ref{fig:repulsive_G}). In Appendix~\ref{sec:gan_stable}, we demonstrate that GAN trained using gradient descent and the repulsive MMD loss \((L_{D}^{\text{rep}},L_{G}^{\text{mmd}})\) is locally exponentially stable near equilibrium.

At last, we identify a general form of loss function for the discriminator \(D\):
\begin{equation}\label{eq:L_D_general}
L_{D,\lambda}=\lambda\set{E}_{P_{\rdv{X}}}[k_D(\bm{x},\bm{x}')] -(\lambda-1)\set{E}_{P_{\rdv{X}},P_{G}}[k_D(\bm{x},\bm{y})]-\set{E}_{P_G}[k_D(\bm{y},\bm{y}')]
\end{equation}
where \(\lambda\) is a hyper-parameter\footnote{The weights for the three terms in \(L_{D,\lambda}\) sum up to zero. This is to make sure the \(\partial L_{D,\lambda}/\partial \bm{\theta}_{D}\) is zero at equilibrium \(P_{\rdv{X}}=P_{G}\), where \(\bm{\theta}_{D}\) is the parameters of \(D\).}. When \(\lambda<0\), the discriminator loss \(L_{D,\lambda}\) is attractive, with \(\lambda=-1\) corresponding to the original MMD loss \(L_D^{\text{att}}\) in Eq.~\ref{eq:L_D}; when \(\lambda>0\), \(L_{D,\lambda}\) is repulsive and \(\lambda=1\) corresponds to \(L_D^{\text{rep}}\) in Eq.~\ref{eq:rep_loss}. It is interesting that when \(\lambda>1\), the discriminator explicitly contracts \(\{D(\bm{x})\}\) and \(\{D(\bm{y})\}\) via maximizing \(\set{E}_{P_{\rdv{X}},P_{G}}[k_D(\bm{x},\bm{y})]\), which may work as a penalty that prevents the pairwise distances of \(\{D(\bm{x})\}\) from increasing too fast. Note that \(L_{D,\lambda}\) has the same computational cost as \(L_{D}^{\text{att}}\) (Eq.~\ref{eq:L_D}) as we only rearranged the terms in \(L_{D}^{\text{att}}\).

\section{Regularization on MMD and Discriminator}
\label{sec:regularizations}
In this section, we propose two approaches to stabilize the training of MMD-GAN: 
\begin{enumerate*}[label=\arabic*)]
\item a bounded kernel to avoid the saturation issue caused by an over-confident discriminator; and 
\item a generalized power iteration method to estimate the spectral norm of a convolutional kernel, which was used in spectral normalization on the discriminator in all experiments in this study unless specified otherwise.
\end{enumerate*}

\subsection{Kernel in MMD}
\label{sec:boundrbf}
\begin{figure}[tb]
	\centering
	\begin{subfigure}[t]{0.23\linewidth}
		\centering
		\begin{tikzpicture}
		\begin{axis}[%
		no markers,
		width=1.28\textwidth,
		xmin=0.0, xmax=10, ymin=0.0, ymax=1.0, tick label style={font=\tiny},
		legend style={at={(1,1)},anchor=north east, font=\tiny, draw=none, row sep=-4.0pt, inner sep=0pt}]
		\addplot[color=grey1,line width=2pt]table[x=x, y=y1] {mix_g.dat};
		\addplot[color=grey2,line width=2pt]table[x=x, y=y2] {mix_g.dat};
		\addplot[color=grey3,line width=2pt]table[x=x, y=y3] {mix_g.dat};
		\addplot[color=grey4,line width=2pt]table[x=x, y=y4] {mix_g.dat};
		\addplot[color=grey5,line width=2pt]table[x=x, y=y5] {mix_g.dat};
		\addplot[color=pyblue,line width=2pt]table[x=x, y=y] {mix_g.dat};
		\legend{\(1\),\(\sqrt{2}\),\(2\),\(2\sqrt{2}\),\(4\),Mean};
		\end{axis}
		\end{tikzpicture}
		\caption{\label{fig:rbf}}
	\end{subfigure}
	~
	\begin{subfigure}[t]{0.23\linewidth}
		\centering
		\begin{tikzpicture}
		\begin{axis}[%
		no markers,
		width=1.28\textwidth,
		xmin=0.0, xmax=10, ymin=-0.6, ymax=0.0, tick label style={font=\tiny},
		legend style={at={(1,0)},anchor=south east, font=\tiny, draw=none, row sep=-4.0pt, inner sep=0pt}]
		\addplot[color=grey1,line width=2pt]table[x=x, y=dy1] {mix_g.dat};
		\addplot[color=grey2,line width=2pt]table[x=x, y=dy2] {mix_g.dat};
		\addplot[color=grey3,line width=2pt]table[x=x, y=dy3] {mix_g.dat};
		\addplot[color=grey4,line width=2pt]table[x=x, y=dy4] {mix_g.dat};
		\addplot[color=grey5,line width=2pt]table[x=x, y=dy5] {mix_g.dat};
		\addplot[color=pyblue,line width=2pt]table[x=x, y=dy] {mix_g.dat};
		\legend{\(1\),\(\sqrt{2}\),\(2\),\(2\sqrt{2}\),\(4\),Mean};
		\end{axis}
		\end{tikzpicture}
		\caption{\label{fig:d_rbf}}
	\end{subfigure}
	~
	\begin{subfigure}[t]{0.23\linewidth}
		\centering
		\begin{tikzpicture}
		\begin{axis}[%
		no markers,
		width=1.28\textwidth,
		xmin=0.0, xmax=10, ymin=0.0, ymax=1.0, tick label style={font=\tiny},
		legend pos=north east, legend style={font=\tiny, draw=none, row sep=-4.0pt, inner sep=0pt}, legend columns=2]
		\addplot[color=grey1,line width=2pt]table[x=x, y=y1] {mix_t.dat};
		\addplot[color=grey2,line width=2pt]table[x=x, y=y2] {mix_t.dat};
		\addplot[color=grey3,line width=2pt]table[x=x, y=y3] {mix_t.dat};
		\addplot[color=grey4,line width=2pt]table[x=x, y=y4] {mix_t.dat};
		\addplot[color=grey5,line width=2pt]table[x=x, y=y5] {mix_t.dat};
		\addplot[color=pyblue,line width=2pt]table[x=x, y=y] {mix_t.dat};
		\legend{\(0.2\),\(0.5\),\(1\),\(2\),\(5\),Mean};
		\end{axis}
		\end{tikzpicture}
		\caption{\label{fig:rq}}
	\end{subfigure}
	~
	\begin{subfigure}[t]{0.23\linewidth}
		\centering
		\begin{tikzpicture}
		\begin{axis}[%
		no markers,
		width=1.28\textwidth,
		xmin=0.0, xmax=10, ymin=-0.6, ymax=0.0, tick label style={font=\tiny},
		legend style={at={(1,0)},anchor=south east, font=\tiny, draw=none, row sep=-4.0pt, inner sep=0pt}]
		\addplot[color=grey1,line width=2pt]table[x=x, y=dy1] {mix_t.dat};
		\addplot[color=grey2,line width=2pt]table[x=x, y=dy2] {mix_t.dat};
		\addplot[color=grey3,line width=2pt]table[x=x, y=dy3] {mix_t.dat};
		\addplot[color=grey4,line width=2pt]table[x=x, y=dy4] {mix_t.dat};
		\addplot[color=grey5,line width=2pt]table[x=x, y=dy5] {mix_t.dat};
		\addplot[color=pyblue,line width=2pt]table[x=x, y=dy] {mix_t.dat};
		\legend{\(0.2\),\(0.5\),\(1\),\(2\),\(5\),Mean};
		\end{axis}
		\end{tikzpicture}
		\caption{\label{fig:d_rq}}
	\end{subfigure}
	\caption{(a) Gaussian kernels \(\{k_{\sigma_i}^{\text{rbf}}(\bm{a},\bm{b})\}\) and their mean as a function of \(e=\norm{\bm{a}-\bm{b}}\), where \(\sigma_{i}\in\{1,\sqrt{2},2,2\sqrt{2},4\}\) were used in our experiments; (b) derivatives of \(\{k_{\sigma_i}^{\text{rbf}}(\bm{a},\bm{b})\}\) in (a); (c) rational quadratic kernel \(\{k_{\alpha_{i}}^{\text{rq}}(\bm{a},\bm{b})\}\) and their mean, where \(\alpha_{i}\in\{0.2,0.5,1,2,5\}\); (d) derivatives of \(\{k_{\alpha_{i}}^{\text{rq}}(\bm{a},\bm{b})\}\) in (c).}  
	\label{fig:fixed_kernel}
\end{figure}
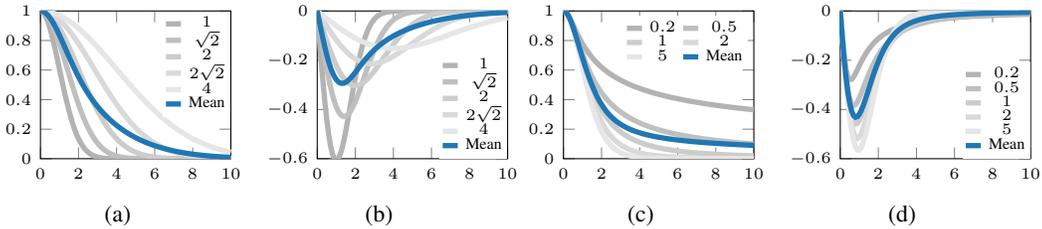

For MMD-GAN, the following two kernels have been used:
\begin{itemize}[leftmargin=*]
	\item Gaussian radial basis function (RBF), or Gaussian kernel~(\cite{mmd_gan_g}), \(k_{\sigma}^{\text{rbf}}(\bm{a},\bm{b})=\exp(-\frac{1}{2\sigma^2}\norm{\bm{a}-\bm{b}}^2)\) where \(\sigma>0\) is the kernel scale or bandwidth. 
	\item Rational quadratic kernel~(\cite{mmd_gan_t}), \(k_{\alpha}^{\text{rq}}(\bm{a},\bm{b})=(1+\frac{1}{2\alpha}\norm{\bm{a}-\bm{b}}^2)^{-\alpha}\), where the kernel scale \(\alpha>0\) corresponds to a mixture of Gaussian kernels with a \(\text{Gamma}(\alpha,1)\) prior on the inverse kernel scales \(\sigma^{-1}\).
\end{itemize}
It is interesting that both studies used a linear combination of kernels with five different kernel scales, i.e., \(k_{\text{rbf}}=\sum_{i=1}^5k_{\sigma_{i}}^{\text{rbf}}\) and \(k_{\text{rq}}=\sum_{i=1}^5k_{\alpha_{i}}^{\text{rq}}\), where \(\sigma_{i}\in\{1,2,4,8,16\}\), \(\alpha_{i}\in\{0.2,0.5,1,2,5\}\) (see Fig.~\ref{fig:rbf} and \ref{fig:rq} for illustration). We suspect the reason is that a single kernel \(k(\bm{a},\bm{b})\) is saturated when the distance \(\norm{\bm{a}-\bm{b}}\) is either too large or too small compared to the kernel scale (see Fig.~\ref{fig:d_rbf} and \ref{fig:d_rq}), which may cause diminishing gradients during training. Both \cite{mmd_gan_g} and \cite{mmd_gan_t} applied penalties on the discriminator parameters but not to the MMD loss itself. Thus the saturation issue may still exist. Using a linear combination of kernels with different kernel scales may alleviate this issue but not eradicate it.

Inspired by the hinge loss (see Appendix~\ref{sec:loss_literature}), we propose a bounded RBF (RBF-B) kernel for the discriminator. The idea is to prevent \(D\) from pushing \(\{D(\bm{x})\}\) too far away from \(\{D(\bm{y})\}\) and causing saturation. For \(L_D^{\text{att}}\) in Eq.~\ref{eq:L_D}, the RBF-B kernel is:
\begin{equation}\label{eq:rbf_b1}
k_{\sigma}^{\text{rbf-b}}(\bm{a},\bm{b}) =
\begin{cases}
\exp(-\frac{1}{2\sigma^2}\max(\norm{\bm{a}-\bm{b}}^2, b_l)) & \text{if $\bm{a},\bm{b}\in\{D(\bm{x})\}$ or $\bm{a},\bm{b}\in\{D(\bm{y})\}$} \\
\exp(-\frac{1}{2\sigma^2}\min(\norm{\bm{a}-\bm{b}}^2, b_u)) & \text{if $\bm{a}\in\{D(\bm{x})\}$ and $\bm{b}\in\{D(\bm{y})\}$}
\end{cases}
\end{equation}
For \(L_{D}^{\text{rep}}\) in Eq.~\ref{eq:rep_loss}, the RBF-B kernel is:
\begin{equation}\label{eq:rbf_b2}
k_{\sigma}^{\text{rbf-b}}(\bm{a},\bm{b}) =
\begin{cases}
\exp(-\frac{1}{2\sigma^2}\max(\norm{\bm{a}-\bm{b}}^2, b_l)) & \text{if $\bm{a},\bm{b}\in\{D(\bm{y})\}$} \\
\exp(-\frac{1}{2\sigma^2}\min(\norm{\bm{a}-\bm{b}}^2, b_u)) & \text{if $\bm{a},\bm{b}\in\{D(\bm{x})\}$}
\end{cases}
\end{equation}
where \(b_l\) and \(b_u\) are the lower and upper bounds. As such, a single kernel is sufficient and we set \(\sigma=1\), \(b_l=0.25\) and \(b_u=4\) in all experiments for simplicity and leave their tuning for future work. It should be noted that, like the case of hinge loss, the RBF-B kernel is used only for the discriminator to prevent it from being over-confident. The generator is always trained using the original RBF kernel, thus we retain the interpretation of MMD loss \(L_{G}^{\text{mmd}}\) as a metric.

RBF-B kernel is among many methods to address the saturation issue and stabilize MMD-GAN training. We found random sampling kernel scale, instance noise~(\cite{instance_noise}) and label smoothing~(\cite{label_smooth1,label_smooth2}) may also improve the model performance and stability. However, the computational cost of RBF-B kernel is relatively low. 

\subsection{Spectral Normalization in Discriminator}
\label{sec:sn}

Without any Lipschitz constraints, the discriminator \(D\) may simply increase the magnitude of its outputs to minimize the discriminator loss, causing unstable training\footnote{Note that training stability is different from the local stability considered in Appendix~\ref{sec:gan_stable}. Training stability often refers to the ability of model converging to a desired state measured by some criterion. Local stability means that if a model is initialized sufficiently close to an equilibrium, it will converge to the equilibrium.}. Spectral normalization divides the weight matrix of each layer by its spectral norm, which imposes an upper bound on the magnitudes of outputs and gradients at each layer of \(D\) (\cite{spectral}). However, to estimate the spectral norm of a convolution kernel, \cite{spectral} reshaped the kernel into a matrix. We propose a generalized power iteration method to directly estimate the spectral norm of a convolution kernel (see Appendix~\ref{sec:pico} for details) and applied spectral normalization to the discriminator in all experiments. In Appendix~\ref{sec:exp:gp}, we explore using gradient penalty to impose the Lipschitz constraint (\cite{wgan_gp, mmd_gan_t,mmd_gp}) for the proposed repulsive loss.  

\section{Experiments}
\label{sec:experiments}

In this section, we empirically evaluate the proposed 
\begin{enumerate*}[label=\arabic*)]
	\item repulsive loss \(L_{D}^{\text{rep}}\) (Eq.~\ref{eq:rep_loss}) on unsupervised training of GAN for image generation tasks; and
	\item RBF-B kernel to stabilize MMD-GAN training. 
\end{enumerate*}
The generalized power iteration method is evaluated in Appendix~\ref{sec:exp:pico_pim}. To show the efficacy of \(L_{D}^{\text{rep}}\), we compared the loss functions \((L_{D}^{\text{rep}}, L_{G}^{\text{mmd}})\) using Gaussian kernel (MMD-rep) with \((L_{D}^{\text{att}}, L_{G}^{\text{mmd}})\) using Gaussian kernel (MMD-rbf) (\cite{mmd_gan_g}) and rational quadratic kernel (MMD-rq) (\cite{mmd_gan_t}), as well as non-saturating loss (\cite{gan}) and hinge loss (\cite{implicit}). To show the efficacy of RBF-B kernel, we applied it to both \(L_{D}^{\text{att}}\) and \(L_{D}^{\text{rep}}\), resulting in two methods MMD-rbf-b and MMD-rep-b. The Wasserstein loss was excluded for comparison because it cannot be directly used with spectral normalization (~\cite{spectral}). 

\subsection{Experiment Setup}
\label{sec:exp:setup}

\textbf{Dataset:} The loss functions were evaluated on four datasets: 
\begin{enumerate*}[label=\arabic*)]
	\item CIFAR-10 (\(50K\) images, \(32\times32\) pixels) (\cite{cifar10});
	\item STL-10 (\(100K\) images, \(48\times48\) pixels) (\cite{stl10});
	\item CelebA (about \(203K\) images, \(64\times64\) pixels) (\cite{celeba}); and
	\item LSUN bedrooms (around \(3\) million images, \(64\times64\) pixels) (\cite{lsun}).
\end{enumerate*}
The images were scaled to range \([-1,1]\) to avoid numeric issues.

\textbf{Network architecture:} The DCGAN (\cite{dcgan}) architecture was used with hyper-parameters from \cite{spectral} (see Appendix~\ref{sec:architecture} for details). In all experiments, batch normalization (BN) (\cite{batchnorm}) was used in the generator, and spectral normalization with the generalized power iteration (see Appendix~\ref{sec:pico}) in the discriminator. For MMD related losses, the dimension of discriminator output layer was set to \(16\); for non-saturating loss and hinge loss, it was \(1\). In Appendix~\ref{sec:exp:D_out}, we investigate the impact of discriminator output dimension on the performance of repulsive loss.

\textbf{Hyper-parameters:} We used Adam optimizer (\cite{adam}) with momentum parameters \(\beta_1=0.5\), \(\beta_2=0.999\); two-timescale update rule (TTUR) (\cite{ttur}) with two learning rates \((\rho_D,\rho_G)\) chosen from \(\{1e\text{-}4, 2e\text{-}4, 5e\text{-}4, 1e\text{-}3\}\) (16 combinations in total); and batch size \(64\). Fine-tuning on learning rates may improve the model performance, but constant learning rates were used for simplicity. All models were trained for \(100K\) iterations on CIFAR-10, STL-10, CelebA and LSUN bedroom datasets, with \(n_{dis}=1\), i.e., one discriminator update per generator update\footnote{Increasing or decreasing \(n_{dis}\) may improve the model performance in some cases, but it has significant impact on the computation cost. For simple and fair comparison, we set \(n_{dis}=1\) in all cases.}. For MMD-rbf, the kernel scales \(\sigma_{i}\in\{1,\sqrt{2},2,2\sqrt{2},4\}\) were used due to a better performance than the original values used in \cite{mmd_gan_g}. For MMD-rq, \(\alpha_{i}\in\{0.2,0.5,1,2,5\}\). For MMD-rbf-b, MMD-rep, MMD-rep-b, a single Gaussian kernel with \(\sigma=1\) was used.

\textbf{Evaluation metrics:} Inception score (IS) (\cite{label_smooth2}), Fr\'echet Inception distance (FID) (\cite{ttur}) and multi-scale structural similarity (MS-SSIM) (\cite{ssim}) were used for quantitative evaluation. Both IS and FID are calculated using a pre-trained Inception model (\cite{label_smooth1}). Higher IS and lower FID scores indicate better image quality. MS-SSIM calculates the pair-wise image similarity and is used to detect mode collapses among images of the same class (\cite{acgan}). Lower MS-SSIM values indicate perceptually more diverse images. For each model, \(50K\) randomly generated samples and \(50K\) real samples were used to calculate IS, FID and MS-SSIM. 

\subsection{Quantitative Analysis}
\label{sec:exp:unsupervised_quantitative}

\begin{table}[t]
	\caption{Inception score (IS), Fr\'echet Inception distance (FID) and multi-scale structural similarity (MS-SSIM) on image generation tasks using different loss functions}
	\centering
	\label{Tab:fid_scores}
	\begin{threeparttable}
		\begin{tabular}{L{2.4cm} c c c c c c c c}
			\toprule
			\multirow{2}{*}{Methods\tnote{1}} & \multicolumn{2}{c}{CIFAR-10} & \multicolumn{2}{c}{STL-10} & \multicolumn{2}{c}{CelebA\tnote{2}} & \multicolumn{2}{c}{LSUN-bedrom\tnote{2}} \\
			\cmidrule(lr){2-3} \cmidrule(lr){4-5} \cmidrule(lr){6-7} \cmidrule(lr){8-9}
			& IS & FID & IS & FID & FID & \footnotesize{MS-SSIM} & FID & \footnotesize{MS-SSIM}\\
			\midrule
			Real data & 11.31 & 2.09 & 26.37 & 2.10 & 1.09 & 0.2678 & 1.24 & 0.0915 \\
			\midrule
			Non-saturating & 7.39 & 23.23 & 8.25 & 48.53 & 10.64 & 0.2895 & 23.66 & 0.1027 \\
			Hinge & 7.33 & 23.46 & 8.24 & 49.44 & 8.60 & 0.2894 & 16.73 & 0.0946 \\
			MMD-rbf\tnote{3} & 7.05 & 28.38 & 8.13 & 57.52 & 13.03 & 0.2937 & &\\
			MMD-rq\tnote{3} & 7.22 & 27.00 & 8.11 & 54.05 & 12.74 & 0.2935 & & \\
			\midrule
			MMD-rbf-b & 7.18 & 25.25 & 8.07 & 51.86 & 10.09 & 0.3090 & 32.29 & 0.1001 \\
			MMD-rep & 7.99 & \textbf{16.65} & \textbf{9.36} & \textbf{36.67} & \textbf{7.20} & 0.2761 & 16.91 & \textbf{0.0901}\\
			MMD-rep-b & \textbf{8.29} & \textbf{16.21} & \textbf{9.34} & 37.63 & \textbf{6.79} & \textbf{0.2659} & \textbf{12.52} & \textbf{0.0908} \\
			\bottomrule
		\end{tabular}
		\begin{tablenotes}
			\item[1] The models here differ only by the loss functions and dimension of discriminator outputs. See Section~\ref{sec:exp:setup}. \item[2] For CelebA and LSUN-bedroom, IS is not meaningful (\cite{mmd_gan_t}) and thus omitted. \item[3] On LSUN-bedroom, MMD-rbf and MMD-rq did not achieve reasonable results and thus are omitted.
		\end{tablenotes}
	\end{threeparttable}
\end{table}

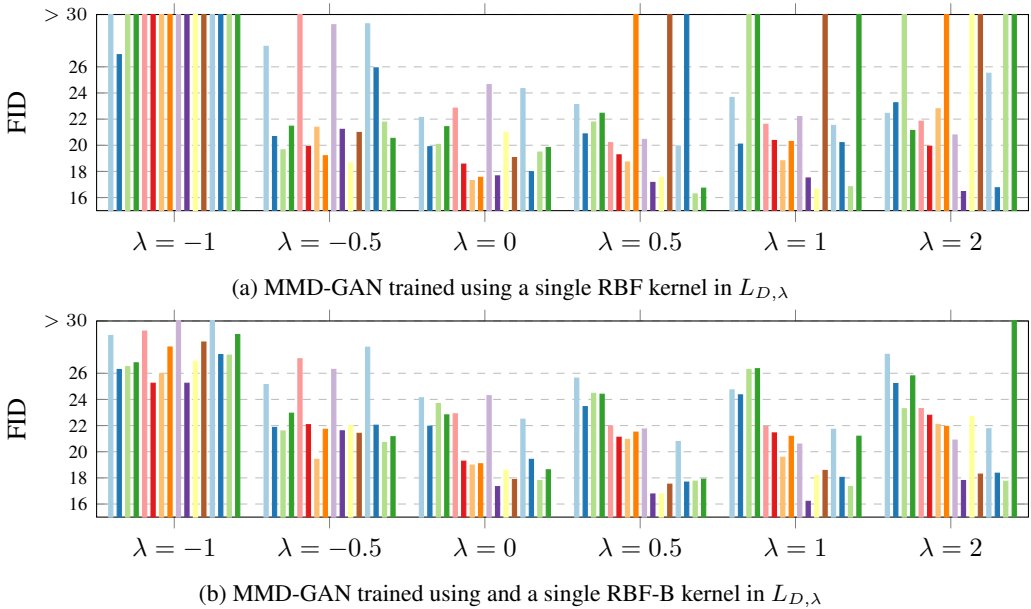
\begin{figure}[th]
	\centering
	\begin{subfigure}[t]{\linewidth}
		\centering
		\begin{tikzpicture}
		\begin{axis}[
		width=\textwidth, height=0.3\textwidth,
		ybar=1.6pt, symbolic x coords={-1, -0.5, 0, 0.5, 1, 2}, xtick=data, bar width=1.6pt,
		ymin=15, ymax=30, y tick label style={font=\scriptsize}, ylabel={FID}, 
		ytick = {16, 18, 20, 22, 24, 26, 30}, yticklabels={{16}, {18}, {20}, {22}, {24}, {26}, {$>30$}}, xticklabels={{\(\lambda=-1\)}, {\(\lambda=-0.5\)}, {\(\lambda=0\)}, {\(\lambda=0.5\)}, {\(\lambda=1\)}, {\(\lambda=2\)}},
		cycle list/Paired, every axis plot/.append style={fill}, % color map from colorbrewer
		ymajorgrids=true, grid style=dashed
		]
		\addplot table[x=lambda,y=11]{fid_rep.dat};
		\addplot table[x=lambda,y=12]{fid_rep.dat};
		\addplot table[x=lambda,y=15]{fid_rep.dat};
		\addplot table[x=lambda,y=110]{fid_rep.dat};
		\addplot table[x=lambda,y=21]{fid_rep.dat};
		\addplot table[x=lambda,y=22]{fid_rep.dat};
		\addplot table[x=lambda,y=25]{fid_rep.dat};
		\addplot table[x=lambda,y=210]{fid_rep.dat};
		\addplot table[x=lambda,y=51]{fid_rep.dat};
		\addplot table[x=lambda,y=52]{fid_rep.dat};
		\addplot table[x=lambda,y=55]{fid_rep.dat};
		\addplot table[x=lambda,y=510]{fid_rep.dat};
		\addplot table[x=lambda,y=101]{fid_rep.dat};
		\addplot table[x=lambda,y=102]{fid_rep.dat};
		\addplot table[x=lambda,y=105]{fid_rep.dat};
		\addplot table[x=lambda,y=1010]{fid_rep.dat};
		
		\end{axis}
		\end{tikzpicture}
		\caption{MMD-GAN trained using a single RBF kernel in \(L_{D,\lambda}\) \label{fig:fid_rep}}
	\end{subfigure}
	
	\begin{subfigure}[t]{\linewidth}
		\centering
		\begin{tikzpicture}
		\begin{axis}[
		width=\textwidth, height=0.3\textwidth,
		ybar=1.6pt, symbolic x coords={-1, -0.5, 0, 0.5, 1, 2}, xtick=data, bar width=1.6pt,
		ymin=15, ymax=30, y tick label style={font=\scriptsize}, ylabel={FID}, 
		ytick = {16, 18, 20, 22, 24, 26, 30}, yticklabels={{16}, {18}, {20}, {22}, {24}, {26}, {$>30$}}, xticklabels={{\(\lambda=-1\)}, {\(\lambda=-0.5\)}, {\(\lambda=0\)}, {\(\lambda=0.5\)}, {\(\lambda=1\)}, {\(\lambda=2\)}},
		cycle list/Paired, every axis plot/.append style={fill}, % color map from colorbrewer
		ymajorgrids=true, grid style=dashed
		]
		\addplot table[x=lambda,y=11]{fid_rmb.dat};
		\addplot table[x=lambda,y=12]{fid_rmb.dat};
		\addplot table[x=lambda,y=15]{fid_rmb.dat};
		\addplot table[x=lambda,y=110]{fid_rmb.dat};
		\addplot table[x=lambda,y=21]{fid_rmb.dat};
		\addplot table[x=lambda,y=22]{fid_rmb.dat};
		\addplot table[x=lambda,y=25]{fid_rmb.dat};
		\addplot table[x=lambda,y=210]{fid_rmb.dat};
		\addplot table[x=lambda,y=51]{fid_rmb.dat};
		\addplot table[x=lambda,y=52]{fid_rmb.dat};
		\addplot table[x=lambda,y=55]{fid_rmb.dat};
		\addplot table[x=lambda,y=510]{fid_rmb.dat};
		\addplot table[x=lambda,y=101]{fid_rmb.dat};
		\addplot table[x=lambda,y=102]{fid_rmb.dat};
		\addplot table[x=lambda,y=105]{fid_rmb.dat};
		\addplot table[x=lambda,y=1010]{fid_rmb.dat};
		
		\end{axis}
		\end{tikzpicture}
		\caption{MMD-GAN trained using and a single RBF-B kernel in \(L_{D,\lambda}\) \label{fig:fid_rmb}}
	\end{subfigure}
\caption{FID scores of MMD-GAN using (a) RBF kernel and (b) RBF-B kernel in \(L_{D,\lambda}\) on CIFAR-10 dataset for 16 learning rate combinations. Each color bar represents the FID score using a learning rate combination \((\rho_D,\rho_G)\), in the order of \((1e\text{-}4,1e\text{-}4)\), \((1e\text{-}4,2e\text{-}4)\),...,\((1e\text{-}3,1e\text{-}3)\). The discriminator was trained using \(L_{D,\lambda}\) (Eq.~\ref{eq:L_D_general}) with \(\lambda\in\{\text{-}1,\text{-}0.5,0,0.5,1,2\}\), and generator using \(L_{G}^{\text{mmd}}\) (Eq.~\ref{eq:L_G}). We use the FID\(>30\) to indicate that the model diverged or produced poor results.\label{fig:fid_rep_rmb}}
\end{figure}

Table~\ref{Tab:fid_scores} shows the Inception score, FID and MS-SSIM of applying different loss functions on the benchmark datasets with the optimal learning rate combinations tested experimentally. Note that the same training setup (i.e., DCGAN + BN + SN + TTUR) was applied for each loss function. We observed that: 
\begin{enumerate*}[label=\arabic*)]
	\item MMD-rep and MMD-rep-b performed significantly better than MMD-rbf and MMD-rbf-b respectively, showing the proposed repulsive loss \(L_{D}^{\text{rep}}\) (Eq.~\ref{eq:rep_loss})
	greatly improved over the attractive loss \(L_{D}^{\text{att}}\) (Eq.~\ref{eq:L_D}); 
	\item Using a single kernel, MMD-rbf-b performed better than MMD-rbf and MMD-rq which used a linear combination of five kernels, indicating that the kernel saturation may be an issue that slows down MMD-GAN training;
	\item MMD-rep-b performed comparable or better than MMD-rep on benchmark datasets where we found the RBF-B kernel managed to stabilize MMD-GAN training using repulsive loss. 
	\item MMD-rep and MMD-rep-b performed significantly better than the non-saturating and hinge losses, showing the efficacy of the proposed repulsive loss.
\end{enumerate*} 

Additionally, we trained MMD-GAN using the general loss \(L_{D,\lambda}\) (Eq.~\ref{eq:L_D_general}) for discriminator and \(L_{G}^{\text{mmd}}\) (Eq.~\ref{eq:L_G}) for generator on the CIFAR-10 dataset. Fig.~\ref{fig:fid_rep_rmb} shows the influence of \(\lambda\) on the performance of MMD-GAN with RBF and RBF-B kernel\footnote{For \(\lambda<0\), the RBF-B kernel in Eq.~\ref{eq:rbf_b1} was used in \(L_{D,\lambda}\).}. Note that when \(\lambda=-1\), the models are essentially MMD-rbf (with a single Gaussian kernel) and MMD-rbf-b when RBF and RBF-B kernel are used respectively. We observed that:
\begin{enumerate*}[label=\arabic*)]
	\item the model performed well using repulsive loss (i.e., \(\lambda\ge0\)), with \(\lambda=0.5,1\) slightly better than \(\lambda=-0.5,0,2\);
	\item the MMD-rbf model can be significantly improved by simply increasing \(\lambda\) from \(-1\) to \(-0.5\), which reduces the attraction of discriminator on real sample scores;
	\item larger \(\lambda\) may lead to more diverged models, possibly because the discriminator focuses more on expanding the real sample scores over adversarial learning; note when \(\lambda\gg1\), the model would simply learn to expand all real sample scores and pull the generated sample scores to real samples', which is a divergent process; 
	\item the RBF-B kernel managed to stabilize MMD-rep for most diverged cases but may occasionally cause the FID score to rise up.
\end{enumerate*}

The proposed methods were further evaluated in Appendix \ref{sec:gan_stable}, \ref{sec:pico} and \ref{sec:supp_exp}. In Appendix~\ref{sec:exp:stability}, we used a simulation study to show the local stability of MMD-rep trained by gradient descent, while its global stability is not guaranteed as bad initialization may lead to trivial solutions. The problem may be alleviated by adjusting the learning rate for generator. In Appendix~\ref{sec:exp:pico_pim}, we showed the proposed generalized power iteration (Section~\ref{sec:sn}) imposes a stronger Lipschitz constraint than the method in \cite{spectral}, and benefited MMD-GAN training using the repulsive loss. Moreover, the RBF-B kernel managed to stabilize the MMD-GAN training for various configurations of the spectral normalization method. In Appendix~\ref{sec:exp:gp}, we showed the gradient penalty can also be used with the repulsive loss. In Appendix~\ref{sec:exp:D_out}, we showed that it was better to use more than one neuron at the discriminator output layer for the repulsive loss.

\subsection{Qualitative Analysis}
\label{sec:exp:unsupervised_qualitative}

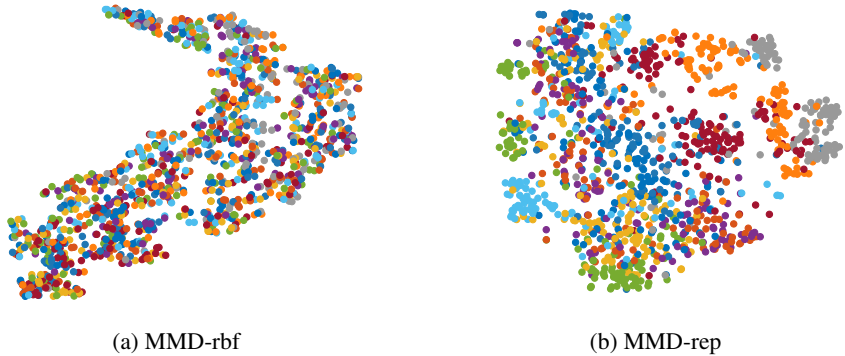
\begin{figure}[tb]
	\centering
	\begin{minipage}{0.45\linewidth}
		\begin{subfigure}[t]{1\linewidth}
			\centering
			\begin{tikzpicture}
			\begin{axis}[%
			width=\textwidth,
			scatter/classes={
				0={mark=*, mark size=1.2, draw=pyblue, fill=pyblue},
				1={mark=*, mark size=1.2, draw=pyorange, fill=pyorange},
				2={mark=*, mark size=1.2, draw=matlab1, fill=matlab1},
				3={mark=*, mark size=1.2, draw=matlab2, fill=matlab2},
				4={mark=*, mark size=1.2, draw=matlab3, fill=matlab3},
				5={mark=*, mark size=1.2, draw=matlab4, fill=matlab4},
				6={mark=*, mark size=1.2, draw=matlab5, fill=matlab5},
				7={mark=*, mark size=1.2, draw=matlab6, fill=matlab6},
				8={mark=*, mark size=1.2, draw=matlab7, fill=matlab7},
				9={mark=*, mark size=1.2, draw=grey0, fill=grey0}
			},
			xmin=0.0, xmax=0.8, ymin=0.0, ymax=1.0, tick label style={font=\footnotesize},
			axis line style={draw=none},
			xmin=-25.0, xmax=25.0, ymin=-45.0, ymax=50.0, 
			tick style={draw=none}, yticklabels={,,}, xticklabels={,,}
			]
			\addplot[scatter,only marks, scatter src=explicit symbolic]%
			table[x=x1, y=x2, meta=label] {z_mmd.dat};
			\end{axis}
			\end{tikzpicture}
			\caption{MMD-rbf\label{fig:tsne_mmd}}
		\end{subfigure}
		~
		\iffalse
		\begin{subfigure}[t]{\linewidth}
			\centering
			\begin{tikzpicture}
			\begin{axis}[%
			width=1.1\textwidth,
			scatter/classes={
				0={mark=*, mark size=1.2, draw=pyblue, fill=pyblue},
				1={mark=*, mark size=1.2, draw=pyorange, fill=pyorange},
				2={mark=*, mark size=1.2, draw=matlab1, fill=matlab1},
				3={mark=*, mark size=1.2, draw=matlab2, fill=matlab2},
				4={mark=*, mark size=1.2, draw=matlab3, fill=matlab3},
				5={mark=*, mark size=1.2, draw=matlab4, fill=matlab4},
				6={mark=*, mark size=1.2, draw=matlab5, fill=matlab5},
				7={mark=*, mark size=1.2, draw=matlab6, fill=matlab6},
				8={mark=*, mark size=1.2, draw=matlab7, fill=matlab7},
				9={mark=*, mark size=1.2, draw=grey0, fill=grey0}
			},
			xmin=0.0, xmax=0.8, ymin=0.0, ymax=1.0, tick label style={font=\footnotesize},
			axis line style={draw=none},
			xmin=-25.0, xmax=20.0, ymin=-40.0, ymax=35.0, 
			tick style={draw=none}, yticklabels={,,}, xticklabels={,,}
			]
			\addplot[scatter,only marks, scatter src=explicit symbolic]%
			table[x=x1, y=x2, meta=label] {z_mgb.dat};
			\end{axis}
			\end{tikzpicture}
			\caption{MMD-rbf-b\label{fig:tsne_mgb}}
		\end{subfigure}
		\fi
	\end{minipage}
	%\hfil
	\begin{minipage}{0.45\linewidth}
		\begin{subfigure}[t]{1\linewidth}
			\centering
			\begin{tikzpicture}
			\begin{axis}[%
			width=\textwidth,
			scatter/classes={
				0={mark=*, mark size=1.2, draw=pyblue, fill=pyblue},
				1={mark=*, mark size=1.2, draw=pyorange, fill=pyorange},
				2={mark=*, mark size=1.2, draw=matlab1, fill=matlab1},
				3={mark=*, mark size=1.2, draw=matlab2, fill=matlab2},
				4={mark=*, mark size=1.2, draw=matlab3, fill=matlab3},
				5={mark=*, mark size=1.2, draw=matlab4, fill=matlab4},
				6={mark=*, mark size=1.2, draw=matlab5, fill=matlab5},
				7={mark=*, mark size=1.2, draw=matlab6, fill=matlab6},
				8={mark=*, mark size=1.2, draw=matlab7, fill=matlab7},
				9={mark=*, mark size=1.2, draw=grey0, fill=grey0}
			},
			xmin=0.0, xmax=0.8, ymin=0.0, ymax=1.0, tick label style={font=\footnotesize},
			axis line style={draw=none},
			xmin=-20.0, xmax=25.0, ymin=-35.0, ymax=30.0, 
			tick style={draw=none}, yticklabels={,,}, xticklabels={,,}
			]
			\addplot[scatter,only marks, scatter src=explicit symbolic]%
			table[x=x1, y=x2, meta=label] {z_rep.dat};
			\end{axis}
			\end{tikzpicture}
			\caption{MMD-rep\label{fig:tsne_rep}}
		\end{subfigure}
	\end{minipage}
	\caption{t-SNE visualization of discriminator outputs \(\{D(\bm{x})\}\) learned by (a) MMD-rbf and (b) MMD-rep for 2560 real samples from the CIFAR-10 dataset, colored by their class labels.\label{fig:tsne}}
\end{figure}

The discriminator outputs may be interpreted as a learned representation of the input samples. Fig.~\ref{fig:tsne} visualizes the discriminator outputs learned by the MMD-rbf and proposed MMD-rep methods on CIFAR-10 dataset using t-SNE (\cite{tsne}). MMD-rbf ignored the class structure in data (see Fig.~\ref{fig:tsne_mmd}) while MMD-rep learned to concentrate the data from the same class and separate different classes to some extent (Fig.~\ref{fig:tsne_rep}). This is because the discriminator \(D\) has to actively learn the data structure in order to expands the real sample scores \(\{D(\bm{x})\}\). Thus, we speculate that techniques reinforcing the learning of cluster structures in data may further improve the training of MMD-GAN. 

In addition, the performance gain of proposed repulsive loss (Eq.~\ref{eq:rep_loss}) over the attractive loss (Eq.~\ref{eq:L_D}) comes at no additional computational cost. In fact, by using a single kernel rather than a linear combination of kernels, MMD-rep and MMD-rep-b are simpler than MMD-rbf and MMD-rq. Besides, given a typically small batch size and a small number of discriminator output neurons (\(64\) and \(16\) in our experiments), the cost of MMD over the non-saturating and hinge loss is marginal compared to the convolution operations. 

In Appendix~\ref{sec:uns_samples}, we provide some random samples generated by the methods in our study.

\section{Discussion}
\label{sec:discussion}

This study extends the previous work on MMD-GAN (\cite{mmd_gan_g}) with two contributions. First, we interpreted the optimization of MMD loss as a combination of attraction and repulsion processes, and proposed a repulsive loss for the discriminator that actively learns the difference among real data. Second, we proposed a bounded Gaussian RBF (RBF-B) kernel to address the saturation issue. Empirically, we observed that the repulsive loss may result in unstable training, due to factors including initialization (Appendix~\ref{sec:exp:stability}), learning rate (Fig.~\ref{fig:fid_rmb}) and Lipschitz constraints on the discriminator (Appendix~\ref{sec:exp:pico_pim}). The RBF-B kernel managed to stabilize the MMD-GAN training in many cases. Tuning the hyper-parameters in RBF-B kernel or using other regularization methods may further improve our results. 

The theoretical advantages of MMD-GAN require the discriminator to be injective. The proposed repulsive loss (Eq.~\ref{eq:rep_loss}) attempts to realize this by explicitly maximizing the pair-wise distances among the real samples. \cite{mmd_gan_g} achieved the injection property by using the discriminator as the encoder and an auxiliary network as the decoder to reconstruct the real and generated samples, which is more computationally extensive than our proposed approach. On the other hand, \cite{mmd_gan_t,mmd_gp} imposed a Lipschitz constraint on the discriminator in MMD-GAN via gradient penalty, which may not necessarily promote an injective discriminator. 

The idea of repulsion on real sample scores is in line with existing studies. It has been widely accepted that the quality of generated samples can be significantly improved by integrating labels (\cite{acgan,cgan,class_aware}) or even pseudo-labels generated by k-means method (\cite{split}) in the training of discriminator. The reason may be that the labels help concentrate the data from the same class and separate those from different classes. Using a pre-trained classifier may also help produce vivid image samples (\cite{stack}) as the learned representations of the real samples in the hidden layers of the classifier tend to be well separated/organized and may produce more meaningful gradients to the generator.

At last, we note that the proposed repulsive loss is orthogonal to the GAN studies on designing network structures and training procedures, and thus may be combined with a variety of novel techniques. For example, the ResNet architecture (\cite{resnet}) has been reported to outperform the plain DCGAN used in our experiments on image generation tasks (\cite{spectral,wgan_gp}) and self-attention module may further improve the results (\cite{attention}). On the other hand, \cite{pggan} proposed to progressively grows the size of both discriminator and generator and achieved the state-of-the-art performance on unsupervised training of GANs on the CIFAR-10 dataset. Future work may explore these directions. 

\subsubsection*{Acknowledgments}

Wei Wang is fully supported by the Ph.D. scholarships of The University of Melbourne. This work is partially funded by Australian Research Council grant DP150103512 and undertaken using the LIEF HPC-GPGPU Facility hosted at the University of Melbourne. The Facility was established with the assistance of LIEF Grant LE170100200. 

\bibliographystyle{iclr2019_conference}
\bibliography{Wang}

\clearpage
\begin{appendices}

\setcounter{equation}{0}
\setcounter{figure}{0}
\setcounter{table}{0}
\renewcommand{\theequation}{S\arabic{equation}}
\renewcommand{\thefigure}{S\arabic{figure}}
\renewcommand{\thetable}{S\arabic{table}}

\section{Stability analysis of MMD-GAN}
\label{sec:gan_stable}

This section demonstrates that, under mild assumptions, MMD-GAN trained by gradient descent is locally exponentially stable at equilibrium. It is organized as follows. The main assumption and proposition are presented in Section~\ref{sec:theory:stability}, followed by simulation study in Section~\ref{sec:exp:stability} and proof in Section~\ref{sec:proof_prop1}. We discuss the indications of assumptions on the discriminator of GAN in Section~\ref{Sec:discuss_assumption1}.

\subsection{Main Proposition}
\label{sec:theory:stability}
We consider GAN trained using the MMD loss \(L_{G}^{\text{mmd}}\) for generator \(G\) and either the attractive loss \(L_{D}^{\text{att}}\) or repulsive loss \(L_{D}^{\text{rep}}\) for discriminator \(D\), listed below: 
\begin{subequations}
	\label{eq:appendix_L}
	\begin{align}
		&L_{G}^{\text{mmd}}=M_{k\circ D}^2(P_{\rdv{X}},P_{G})=\set{E}_{P_{\rdv{X}}}[k_D(\bm{x},\bm{x}')]-2\set{E}_{P_{\rdv{X}},P_{G}}[k_D(\bm{x},\bm{y})]+\set{E}_{P_G}[k_D(\bm{y},\bm{y}')] \label{eq:appendix_L_G}\\
		&L_D^{\text{att}}=-L_{G}^{\text{mmd}} \label{eq:appendix_L_att}\\
		&L_{D}^{\text{rep}}=\set{E}_{P_{\rdv{X}}}[k_D(\bm{x},\bm{x}')]-\set{E}_{P_G}[k_D(\bm{y},\bm{y}')] \label{eq:appendix_L_rep}
	\end{align}
\end{subequations}
where \(k_D(\bm{a},\bm{b})=k(D(\bm{a}), D(\bm{b}))\). Let \(\ds{S}(P)\) be the support of distribution \(P\); let \(\bm{\theta}_{G}\in\Theta_G\), \(\bm{\theta}_{D}\in\Theta_D\) be the parameters of the generator \(G\) and discriminator \(D\) respectively. To prove that GANs trained using the minimax loss and gradient updates is locally stable at the equilibrium point \((\bm{\theta}_D^*,\bm{\theta}_G^*)\), \cite{gan_stable} made the following assumption:
\begin{assumption}[\cite{gan_stable}]\label{Assum:zero}
	\(P_{\bm{\theta}_G^*}=P_{\rdv{X}}\) and \(\forall \bm{x}\in\ds{S}(P_{\rdv{X}}), D_{\bm{\theta}_D^*}(\bm{x})=0\).
\end{assumption}
For loss functions like minimax and Wasserstein, \(D_{\bm{\theta}_D}(\bm{x})\) may be interpreted as how plausible a sample is real. Thus at equilibrium, it may be reasonable to assume all real and generated samples are equally plausible. However, \(D_{\bm{\theta}_D^*}(\bm{x})=0\) also indicates that \(D_{\bm{\theta}_D^*}\) may have no discrimination power (see Appendix~\ref{Sec:discuss_assumption1} for discussion). For MMD loss in Eq.~\ref{eq:appendix_L}, \(D_{\bm{\theta}_D}(\bm{x})|_{\bm{x}\sim P}\) may be interpreted as a learned representation of the distribution \(P\). As long as two distributions \(P\) and \(Q\) match, \(M_{k\circ D_{\bm{\theta}_D}}^2(P,Q)=0\). On the other hand, \(D_{\bm{\theta}_D}(\bm{x})=0\) is a minima solution for \(D\) but \(D\) is trained to find local maxima. Thus in contrast to Assumption~\ref{Assum:zero}, we assume
\begin{assumption}\label{Assum:nonzero}
	For GANs using MMD loss in Eq.~\ref{eq:appendix_L}, and random initialization on parameters, at equilibrium, \(D_{\bm{\theta}_D^*}(\bm{x})\) is injective on \(\ds{S}(P_{\rdv{X}})\bigcup\ds{S}(P_{\bm{\theta}_G^*})\).
\end{assumption}
Assumption~\ref{Assum:nonzero} indicates that \(D_{\bm{\theta}_D^*}(\bm{x})\) is not constant almost everywhere. We use a simulation study in Section~\ref{sec:exp:stability} to show that \(D_{\bm{\theta}_D^*}(\bm{x})=0\) does not hold in general for MMD loss. Based on Assumption~\ref{Assum:nonzero}, we propose the following proposition and prove it in Appendix~\ref{sec:proof_prop1}:
\begin{proposition}\label{Prop:equilibria_and_stable}
	If there exists \(\bm{\theta}_G^*\in\Theta_G\) such that \(P_{\bm{\theta}_G^*}=P_{\rdv{X}}\), then GANs with MMD loss in Eq.~\ref{eq:appendix_L} has equilibria \((\bm{\theta}_G^*, \bm{\theta}_D)\) for any \(\bm{\theta}_D\in\Theta_D\). Moreover, the model trained using gradient descent methods is locally exponentially stable at \((\bm{\theta}_G^*,\bm{\theta}_D)\) for any \(\bm{\theta}_D\in\Theta_D\).
\end{proposition}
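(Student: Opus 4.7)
First I would check that every $(\bm{\theta}_G^*,\bm{\theta}_D)$ with $\bm{\theta}_D\in\Theta_D$ is a stationary point of the gradient flow. Substituting $P_{\bm{\theta}_G^*}=P_{\rdv{X}}$ into (\ref{eq:appendix_L_G}) gives $L_G^{\text{mmd}}=0$, which is the global minimum of a non-negative quantity, hence $\nabla_{\bm{\theta}_G}L_G^{\text{mmd}}=0$. Likewise both $L_D^{\text{att}}$ and $L_D^{\text{rep}}$ vanish identically on the whole set $\{\bm{\theta}_G^*\}\times\Theta_D$, so their gradient in $\bm{\theta}_D$ also vanishes at every point of that set.

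\textbf{Linearization.} Following the strategy of \cite{gan_stable}, I would linearize the continuous-time flow $\dot{\bm{\theta}}_D=-\nabla_{\bm{\theta}_D}L_D,\ \dot{\bm{\theta}}_G=-\nabla_{\bm{\theta}_G}L_G^{\text{mmd}}$ around an equilibrium $(\bm{\theta}_G^*,\bm{\theta}_D^0)$ and examine the Jacobian
\begin{equation*}
J=-\begin{pmatrix} \nabla^2_{\bm{\theta}_D\bm{\theta}_D}L_D & \nabla^2_{\bm{\theta}_G\bm{\theta}_D}L_D \\ \nabla^2_{\bm{\theta}_D\bm{\theta}_G}L_G^{\text{mmd}} & \nabla^2_{\bm{\theta}_G\bm{\theta}_G}L_G^{\text{mmd}} \end{pmatrix}.
\end{equation*}
Because $L_D$ is identically zero on $\{\bm{\theta}_G^*\}\times\Theta_D$, the upper-left block vanishes. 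Because $\bm{\theta}_G^*$ is a global minimizer of $L_G^{\text{mmd}}(\cdot,\bm{\theta}_D^0)$ with value $0$, the lower-right block $H_{GG}:=\nabla^2_{\bm{\theta}_G\bm{\theta}_G}L_G^{\text{mmd}}$ is positive semidefinite. Differentiating (\ref{eq:appendix_L_G})--(\ref{eq:appendix_L_rep}) at $P_{\bm{\theta}_G^*}=P_{\rdv{X}}$, I would verify a skew-coupling identity between the off-diagonals: for the attractive case $\nabla^2_{\bm{\theta}_G\bm{\theta}_D}L_D^{\text{att}}=-(\nabla^2_{\bm{\theta}_D\bm{\theta}_G}L_G^{\text{mmd}})^\top$ trivially, and for the repulsive case an analogous sign relation holds because the cross-kernel term $\set{E}_{P_{\rdv{X}},P_G}[k_D]$ in $L_G^{\text{mmd}}$ couples to the within-group terms of $L_D^{\text{rep}}$ in a matched way once evaluated at $P_G=P_{\rdv{X}}$.

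\textbf{Eigenvalue analysis and main obstacle.} Finally I would decompose the $\bm{\theta}_D$-tangent space as $\ds{N}\oplus\ds{N}^\perp$, where $\ds{N}$ is the kernel of the cross-block. Perturbations in $\ds{N}$ lie along the equilibrium manifold and produce the zero eigenvalues that the manifold version of local exponential stability allows; they do not affect convergence of $\bm{\theta}_G$ to $\bm{\theta}_G^*$. Restricted to $\ds{N}^\perp\oplus T_{\bm{\theta}_G^*}\Theta_G$, the Jacobian reduces to $-\bigl(\begin{smallmatrix} 0 & A \\ -A^\top & S \end{smallmatrix}\bigr)$ with $S\succeq0$, and Lemma A.2 of \cite{gan_stable} guarantees this block is Hurwitz provided $A$ has full column rank and $S\succ 0$ on $\mathrm{range}(A^\top)$. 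The main obstacle is establishing these two non-degeneracy conditions; this is where Assumption~\ref{Assum:nonzero} enters. Injectivity of $D_{\bm{\theta}_D^0}$ makes $k\circ D_{\bm{\theta}_D^0}$ characteristic, so $M_{k\circ D}^2(\,\cdot\,,P_{\rdv{X}})$ has a strict minimum at $P_{\rdv{X}}$ in probability-measure space; this strictness must then be pulled back to parameter space through the map $\bm{\theta}_G\mapsto P_{\bm{\theta}_G}$, which typically requires a mild regularity condition on the generator's parameter-to-distribution Jacobian at $\bm{\theta}_G^*$. Once these conditions hold, Hurwitzness of the projected Jacobian gives local exponential stability of the continuous flow, and discrete gradient descent inherits it for sufficiently small step sizes.
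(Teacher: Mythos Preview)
Your equilibrium check and the identification of the vanishing $\bm{J}_{DD}$ block and the semidefiniteness of $\bm{J}_{GG}$ are fine and match the paper. The gap is in your treatment of the cross blocks.

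You claim a skew-coupling structure $-\bigl(\begin{smallmatrix}0 & A\\ -A^\top & S\end{smallmatrix}\bigr)$ and then invoke the Nagarajan--Kolter lemma that needs $A$ of full column rank. But this antisymmetric structure does not hold here. The key structural fact you miss is that $\bm{J}_{GD}=-\nabla^2_{\bm{\theta}_D\bm{\theta}_G}L_G^{\text{mmd}}$ vanishes at the equilibrium: since $P_{\bm{\theta}_G^*}=P_{\rdv{X}}$ makes $L_G^{\text{mmd}}(\bm{\theta}_G^*,\bm{\theta}_D)=0$ the global minimum for \emph{every} $\bm{\theta}_D$, the gradient $\nabla_{\bm{\theta}_G}L_G^{\text{mmd}}(\bm{\theta}_G^*,\bm{\theta}_D)\equiv 0$ as a function of $\bm{\theta}_D$, and differentiating in $\bm{\theta}_D$ gives $\bm{J}_{GD}=\bm{0}$. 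On the other hand, for the repulsive loss $L_D^{\text{rep}}$ is \emph{not} $-L_G^{\text{mmd}}$, and the paper's direct computation shows $\bm{J}_{DG}\propto(\lambda+1)\cdot(\text{nonzero})$, so for $\lambda=1$ we have $\bm{J}_{DG}\neq\bm{0}$ while $\bm{J}_{GD}=\bm{0}$. There is no antisymmetry; the Jacobian is block \emph{upper triangular}, $\bm{J}=\bigl(\begin{smallmatrix}\bm{0}&\bm{J}_{DG}\\ \bm{0}&\bm{J}_{GG}\end{smallmatrix}\bigr)$.

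With this triangular structure the argument is both simpler and different from yours: the spectrum of $\bm{J}$ is just the union of the spectra of the diagonal blocks. All $\bm{\theta}_D$-directions are equilibrium-manifold directions (the whole $\bm{\theta}_D$-slice consists of equilibria), so the zero block $\bm{J}_{DD}=\bm{0}$ is harmless. What must be Hurwitz is $\bm{J}_{GG}$ after projecting out its kernel. Assumption~\ref{Assum:nonzero} enters exactly here (via Lemma~\ref{lemma:non_negative}) to ensure $M_{k\circ D}^2$ has a genuine minimum, so $\bm{J}_{GG}$ is negative semidefinite and its null directions are precisely those along which $P_{\bm{\theta}_G}$ is locally constant; one checks $\nul(\bm{J}_{GG})\subseteq\nul(\bm{J}_{DG})$ and applies the manifold-stability lemma (Theorem~A.4 of \cite{gan_stable}). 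Your proposed conditions ``$A$ has full column rank'' and ``$S\succ0$ on $\mathrm{range}(A^\top)$'' are therefore the wrong non-degeneracy hypotheses for this problem.
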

There may exist non-realizable cases where the mapping between \(P_{\bm{Z}}\) and \(P_{\rdv{X}}\) cannot be represented by any generator \(G_{\bm{\theta}_G}\) with \(\bm{\theta}_G\in\Theta_G\). In Section~\ref{sec:exp:stability}, we use a simulation study to show that both the attractive MMD loss \(L_D^{\text{att}}\) (Eq.~\ref{eq:appendix_L_att}) and the proposed repulsive loss \(L_D^{\text{rep}}\) (Eq.~\ref{eq:appendix_L_rep}) may be locally stable and leave the proof for future work. 

\subsection{Simulation Study}
\label{sec:exp:stability}

\begin{figure}[tb]
	\centering
	\begin{subfigure}[t]{0.4\textwidth}
		\centering
		\includegraphics[width=1\textwidth]{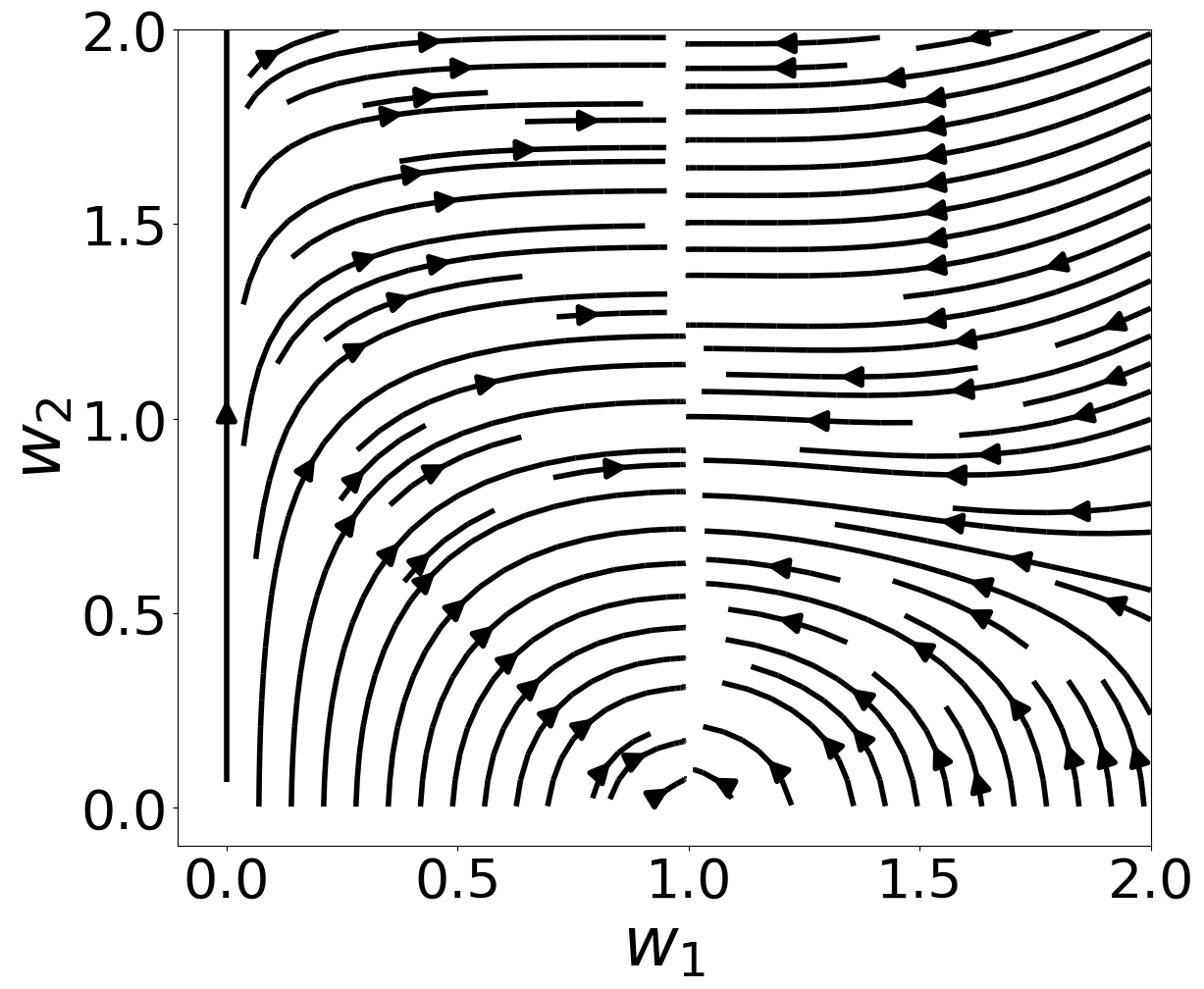}
		\caption{MMD-rbf, \(P_X=\ds{U}(-1,1)\)\label{fig:u_vs_u}}
	\end{subfigure}
	~
	\begin{subfigure}[t]{0.4\textwidth}
		\centering
		\includegraphics[width=1\textwidth]{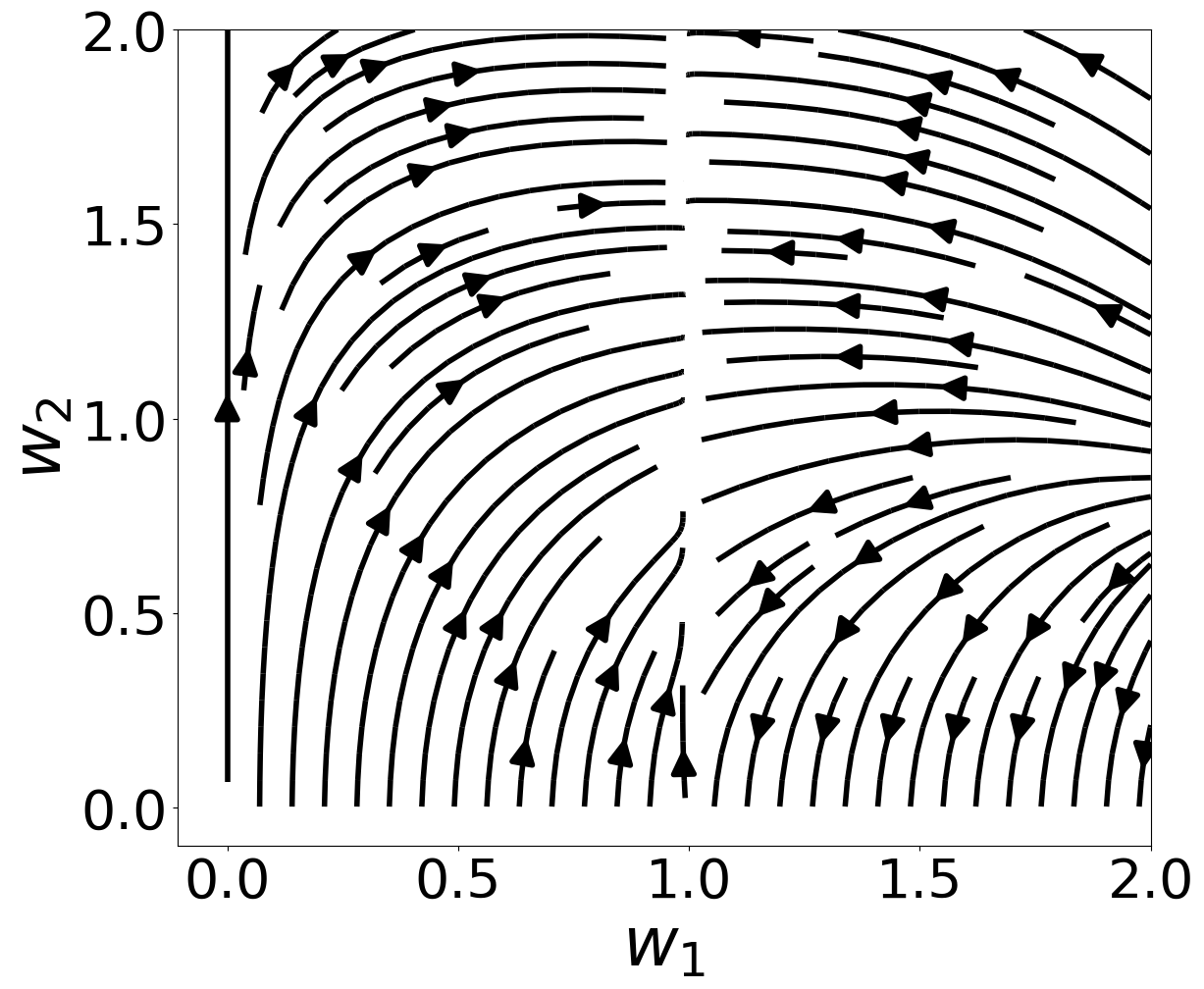}
		\caption{MMD-rep, \(P_X=\ds{U}(-1,1)\)\label{fig:u_vs_u_rep}}
	\end{subfigure}
	
	\begin{subfigure}[t]{0.4\linewidth}
		\centering
		\includegraphics[width=1\textwidth]{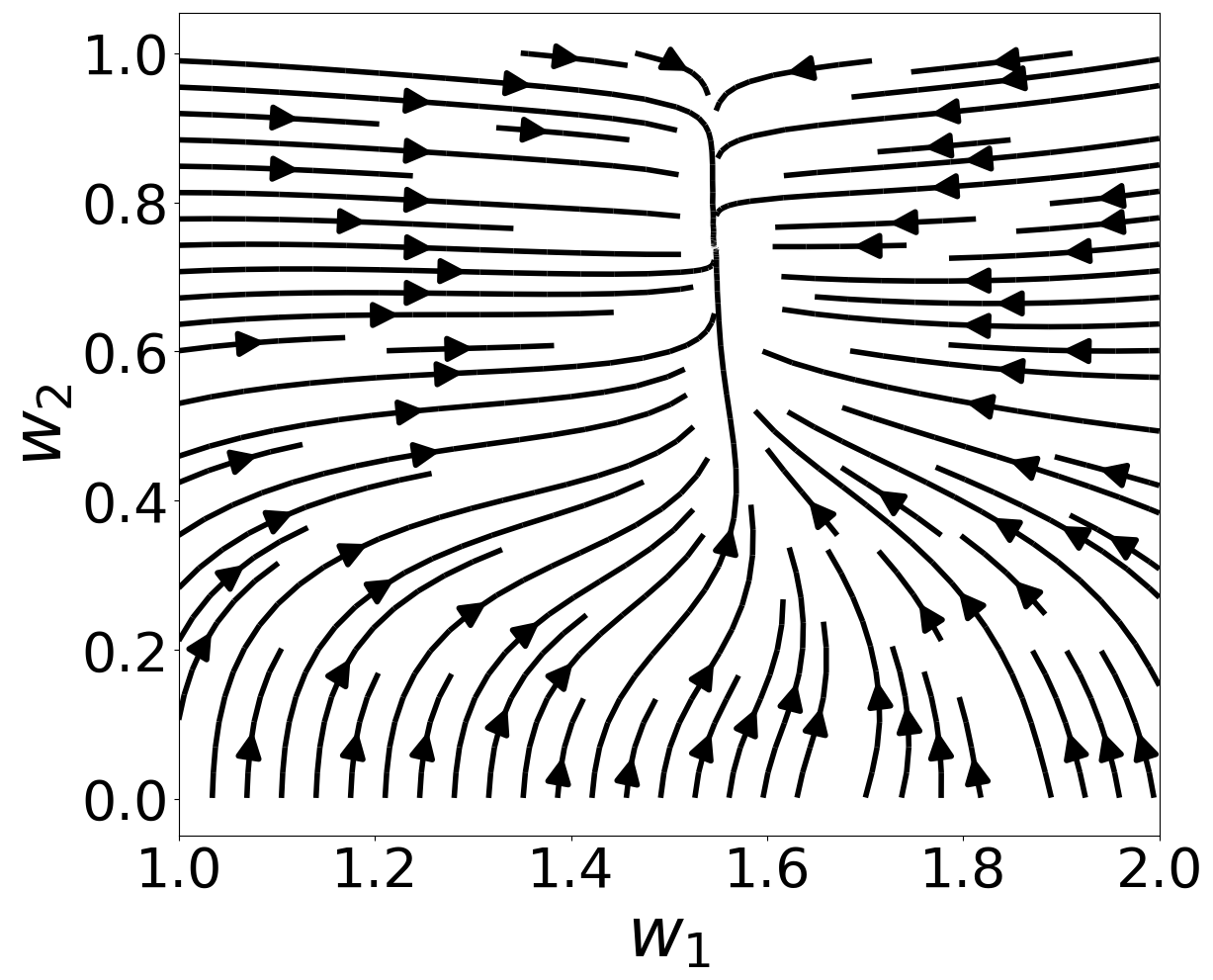}
		\caption{MMD-rbf, \(P_X=\ds{N}(0,1)\)\label{fig:u_vs_n}}
	\end{subfigure}
	~
	\begin{subfigure}[t]{0.4\linewidth}
		\centering
		\includegraphics[width=1\textwidth]{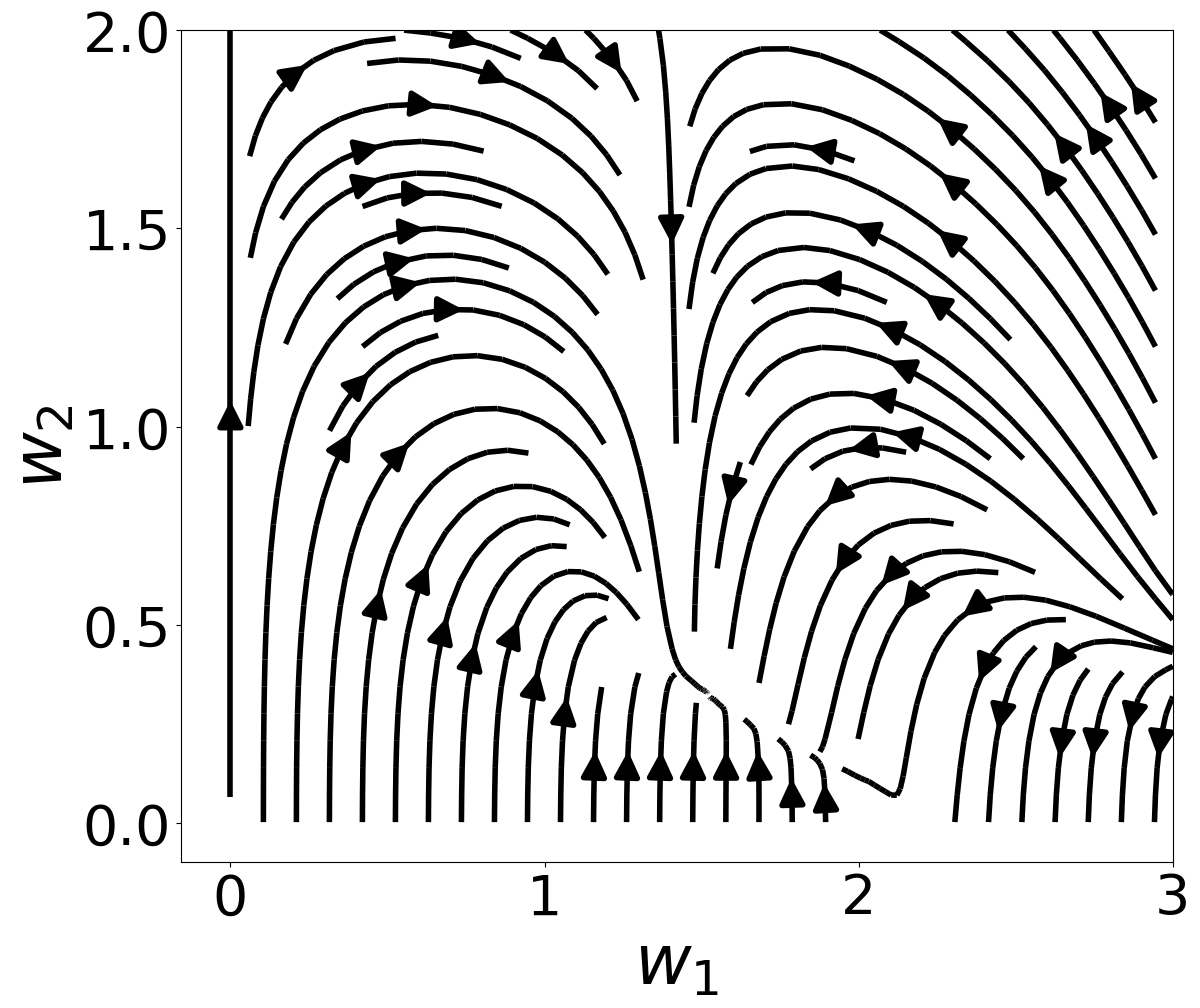}
		\caption{MMD-rep, \(P_X=\ds{N}(0,1)\)\label{fig:u_vs_n_rep}}
	\end{subfigure}
	\caption{Streamline plots of MMD-GAN using the MMD-rbf and the MMD-rep model on distributions: \(P_Z=\ds{U}(-1,1)\), \(P_X=\ds{U}(-1,1)\) or \(P_X=\ds{N}(0,1)\). In (a) and (b), the equilibria satisfying \(P_G=P_X\) lie on the line \(w_1=1\). In (c), the equilibrium lies around point \((1.55, 0.74)\); in (d), it is around \((1.55, 0.32)\).}  
	\label{fig:streamplot}
\end{figure}

In this section, we reused the example from \cite{gan_stable} to show that GAN trained using the MMD loss in Eq.~\ref{eq:appendix_L} is locally stable. Consider a two-parameter MMD-GAN with uniform latent distribution \(P_Z\) over \([-1,1]\), generator \(G(z)=w_1z\), discriminator \(D(x)=w_2x^2\), and Gaussian kernel \(k_{0.5}^{\text{rbf}}\). The MMD-rbf model (\(L_G^{\text{mmd}}\) and \(L_D^{\text{att}}\) from Eq.~\ref{eq:appendix_L_att}) and the MMD-rep model (\(L_G^{\text{mmd}}\) and \(L_{D}^{\text{rep}}\) from Eq.~\ref{eq:appendix_L_rep}) were tested. Each model was applied to two cases:
\begin{enumerate}[label=(\alph*),leftmargin=*]
	\item the data distribution \(P_X\) is the same as \(P_Z\), i.e., uniform over \([-1,1]\), thus \(P_X\) is realizable;
	\item \(P_X\) is standard Gaussian, thus non-realizable for any \(w_1\in\set{R}\). 
\end{enumerate}
Fig.~\ref{fig:streamplot} shows that MMD-GAN are locally stable in both cases and \(D_{\bm{\theta}_D^*}(\bm{x})=0\) does not hold in general for MMD loss. However, MMD-rep may not be globally stable for the tested cases: initialization of \((w_1,w_2)\) in some regions may lead to the trivial solution \(w_2=0\) (see Fig.~\ref{fig:u_vs_u_rep} and \ref{fig:u_vs_n_rep}). We note that by decreasing the learning rate for \(G\), the area of such regions decreased. At last, it is interesting to note that both MMD-rbf and MMD-rep had the same nontrivial solution \(w_1\approx1.55\) for generator in the non-realizable cases (see Fig.~\ref{fig:u_vs_n} and \ref{fig:u_vs_n_rep}). 

\subsection{Proof of Proposition \ref{Prop:equilibria_and_stable}}
\label{sec:proof_prop1}

This section divides the proof for Proposition~\ref{Prop:equilibria_and_stable} into two parts. First, we show that GAN with the MMD loss in Eq.~\ref{eq:appendix_L} has equilibria for any parameter configuration of discriminator \(D\); second, we prove the model is locally exponentially stable. For convenience, we consider the general form of discriminator loss in Eq.~\ref{eq:L_D_general}:
\begin{equation}\label{eq:appendix_L_D_general}
L_{D,\lambda}=\lambda\set{E}_{P_{\rdv{X}}}[k_D(\bm{x},\bm{x}')] -(\lambda-1)\set{E}_{P_{\rdv{X}},P_{G}}[k_D(\bm{x},\bm{y})]-\set{E}_{P_G}[k_D(\bm{y},\bm{y}')]
\end{equation}
which has \(L_D^{\text{att}}\) and \(L_D^{\text{rep}}\) as the special cases when \(\lambda\) equals \(-1\) and \(1\) respectively. Consider real data \(\rdv{X}_r\sim P_{\rdv{X}}\), latent variable \(\rdv{Z}\sim P_{\rdv{Z}}\) and generated variable \(\rdv{Y}_g=G_{\bm{\theta}_G}(\rdv{Z})\). Let \(\bm{x}_r\), \(\bm{z}\), \(\bm{y}_g\) be their samples. Denote \(\nabla_{\bm{b}}^{\bm{a}}=\frac{\partial\bm{a}}{\partial\bm{b}}\); \(\dt{\bm{\theta}}_D=-\nabla_{\bm{\theta}_D}^{L_D}\), \(\dt{\bm{\theta}}_G=-\nabla_{\bm{\theta}_G}^{L_G}\); \(\bm{d}_g=D(G(\bm{z}))\), \(\bm{d}_r=D(\bm{x}_r)\) where \(L_D\) and \(L_G\) are the losses for \(D\) and \(G\) respectively. Assume an isotropic stationary kernel \(k(\bm{a},\bm{b})=k_I(\norm{\bm{a}-\bm{b}})\)~(\cite{class_kernel}) is used in MMD. We first show:
\begin{proposition 1*}[Part 1]\label{Prop:equilibria}
	If there exists \(\bm{\theta}_G^*\in\Theta_G\) such that \(P_{\bm{\theta}_G^*}=P_{\rdv{X}}\), the GAN with the MMD loss in Eq.~\ref{eq:appendix_L_G} and Eq.~\ref{eq:appendix_L_D_general} has equilibria \((\bm{\theta}_G^*, \bm{\theta}_D)\) for any \(\bm{\theta}_D\in\Theta_D\).
\end{proposition 1*}
\begin{proof}
	Denote \(\bm{e}_{i,j}=\bm{a}_i-\bm{b_j}\) and \(\nabla_{\bm{e}_{i,j}}^k=\frac{\partial k(\bm{a_i},\bm{b_j})}{\partial\bm{e}}\) where \(k\) is the kernel of MMD. The gradients of MMD loss are
	\begin{subequations}\label{Eq:dot_theta}
		\begin{align}
			\dt{\bm{\theta}}_D &= (\lambda-1)\set{E}_{P_{\rdv{X}},P_{\bm{\theta}_G}}[\nabla_{\bm{e}_{r,g}}^k\nabla_{\bm{\theta}_D}^{\bm{e}_{r,g}}] -\lambda\set{E}_{P_{\rdv{X}}}[\nabla_{\bm{e}_{r1,r2}}^k\nabla_{\bm{\theta}_D}^{\bm{e}_{r1,r2}}] + 
			\set{E}_{P_{\bm{\theta}_G}}[\nabla_{\bm{e}_{g1,g2}}^k\nabla_{\bm{\theta}_D}^{\bm{e}_{g1,g2}}] \label{Eq:dot_theta_D}\\
			\dt{\bm{\theta}}_G &=
			2\set{E}_{P_{\bm{\theta}_G},P_{\rdv{X}}}[\nabla_{\bm{e}_{g,r}}^k\nabla_{\bm{x}_{g}}^{\bm{d}_{g}}\nabla_{\bm{\theta}_{G}}^{\bm{x}_{g}}] \label{Eq:dot_theta_G} -\set{E}_{P_{\bm{\theta}_G}}[\nabla_{\bm{e}_{g1,g2}}^k(\nabla_{\bm{x}_{g1}}^{\bm{d}_{g1}}\nabla_{\bm{\theta}_{G}}^{\bm{x}_{g1}}-\nabla_{\bm{x}_{g2}}^{\bm{d}_{g2}}\nabla_{\bm{\theta}_{G}}^{\bm{x}_{g2}})]
		\end{align}
	\end{subequations}

	Note that, given i.i.d. drawn samples \(\bm{X}=\{\bm{x}_i\}_{i=1}^n\sim P_{\rdv{X}}\) and \(\bm{Y}=\{\bm{y}_i\}_{i=1}^n\sim P_{G}\), an unbiased estimator of the squared MMD is~(\cite{mmdtest})
	\begin{equation}\label{Eq:mmd_unbiased}
	\hat{M}_k^2(P_{\rdv{X}},P_{G})=\frac{1}{n(n-1)}\sum_{i\ne j}^{n}k(\bm{x}_i,\bm{x}_j)+\frac{1}{n(n-1)}\sum_{i\ne j}^{n}k(\bm{y}_i,\bm{y}_j)-\frac{2}{n(n-1)}\sum_{i\ne j}^{n}k(\bm{x}_i,\bm{y}_j)
	\end{equation}
	At equilibrium, consider a sequence of \(N\) samples \(\bm{d}_{ri}=\bm{d}_{gi}=\bm{d}_{i}\) with \(N\to\infty\), we have
	\begin{align*}
	\dt{\bm{\theta}}_D &\propto (\lambda-1)\sum_{i\ne j}\nabla_{\bm{e}_{i,j}}^k\nabla_{\bm{\theta}_D}^{\bm{e}_{i,j}} - 
	\lambda\sum_{i\ne j}\nabla_{\bm{e}_{i,j}}^k\nabla_{\bm{\theta}_D}^{\bm{e}_{i,j}} +
	\sum_{i\ne j}\nabla_{\bm{e}_{i,j}}^k\nabla_{\bm{\theta}_D}^{\bm{e}_{i,j}}=\bm{0} \\
	\dt{\bm{\theta}}_G^* &\propto
	-\sum_{i\ne j}\nabla_{\bm{e}_{i,j}}^k(\nabla_{\bm{x}_{i}}^{\bm{d}_{i}}\nabla_{\bm{\theta}_{G}}^{\bm{x}_{i}}-\nabla_{\bm{x}_{j}}^{\bm{d}_{j}}\nabla_{\bm{\theta}_{G}}^{\bm{x}_{j}}) +
	2\sum_{i\ne j}\nabla_{\bm{e}_{i,j}}^k\nabla_{\bm{x}_{i}}^{\bm{d}_{i}}\nabla_{\bm{\theta}_{G}}^{\bm{x}_{i}} \\
	& =
	\sum_{i\ne j}\nabla_{\bm{e}_{i,j}}^k(\nabla_{\bm{x}_{i}}^{\bm{d}_{i}}\nabla_{\bm{\theta}_{G}}^{\bm{x}_{i}}+\nabla_{\bm{x}_{j}}^{\bm{d}_{j}}\nabla_{\bm{\theta}_{G}}^{\bm{x}_{j}}) = \bm{0}
	\end{align*}
	where for \(\dt{\bm{\theta}}_G^*\) we have used the fact that for each term in the summation, there exists an term with \(i,j\) reversed and \(\nabla_{\bm{e}_{i,j}}^k=-\nabla_{\bm{e}_{j,i}}^k\) thus the summation is zero. Since we have not assumed the status of \(\bm{\theta}_D\), \(\dt{\bm{\theta}}_D=\bm{0}\) for any \(\bm{\theta}_D\in\Theta_D\).
\end{proof}

We proceed to prove the model stability. First, following Theorem 5 in \cite{mmdtest} and Theorem 4 in \cite{mmd_gan_g}, it is straightforward to see:
\begin{lemma_s}\label{lemma:non_negative}
	Under Assumption~\ref{Assum:nonzero}, \(M_{k\circ D_{\bm{\theta}_D}}^2(P_{\rdv{X}},P_{\bm{\theta}_G})\ge0\) with the equality if and only if \(P_{\rdv{X}}=P_{\bm{\theta}_G}\).
\end{lemma_s}
Lemma~\ref{lemma:non_negative} and Proposition 1 (Part 1) state that at equilibrium \(P_{\bm{\theta}_G^*}=P_{\rdv{X}}\), every discriminator \(D_{\bm{\theta}_D}\) and kernel \(k\) will give \(M_{k\circ D_{\bm{\theta}_D}}^2(P_{\bm{\theta}_G^*},P_{\rdv{X}})=0\), thus no discriminator can distinguish the two distributions. On the other hand, we cite Theorem A.4 from \cite{gan_stable}:
\begin{lemma_s}[\cite{gan_stable}]\label{lemma:exp_stable}
	Consider a non-linear system of parameters \((\bm{\theta}, \bm{\gamma})\): \(\dot{\bm{\theta}}=h_1(\bm{\theta}, \bm{\gamma})\), \(\dot{\bm{\gamma}}=h_2(\bm{\theta}, \bm{\gamma})\) with an equilibrium point at \((\bm{0},\bm{0})\). Let there exist \(\epsilon\) such that \(\forall\gamma\in\set{B}_\epsilon(\bm{0})\), \((\bm{0},\bm{\gamma})\) is an equilibrium. If \(\bm{J}=\frac{\partial h_1(\bm{\theta}, \bm{\gamma})}{\partial\bm{\theta}}\big\rvert_{(\bm{0},\bm{0})}\) is a Hurwitz matrix, the non-linear system is exponentially stable.
\end{lemma_s}

Now we can prove:
\begin{proposition 1*}[Part 2]
	At equilibrium \(P_{\bm{\theta}_G^*}=P_{\rdv{X}}\), the GAN trained using MMD loss and gradient descent methods is locally exponentially stable at \((\bm{\theta}_G^*,\bm{\theta}_D)\) for any \(\bm{\theta}_D\in\Theta_D\).
\end{proposition 1*}
\begin{proof}
	Inspired by \cite{gan_stable}, we first derive the Jacobian of the system
	\[\bm{J}=
	\begin{bmatrix}
	\bm{J}_{DD} & \bm{J}_{DG} \\ \bm{J}_{GD} & \bm{J}_{GG}
	\end{bmatrix}\triangleq
	\begin{bmatrix}
	\partial\dt{\bm{\theta}}_D^T/\partial\bm{\theta}_D & \partial\dt{\bm{\theta}}_D^T/\partial\bm{\theta}_G \\ \partial\dt{\bm{\theta}}_G^T/\partial\bm{\theta}_D & \partial\dt{\bm{\theta}}_G^T/\partial\bm{\theta}_G
	\end{bmatrix}\]
	Denote \(\Delta_{\bm{b}}^{\bm{a}}=\frac{\partial^2\bm{a}}{\partial\bm{b}^2}\) and \(\Delta_{\bm{b}\bm{c}}^{\bm{a}}=\frac{\partial^2\bm{a}}{\partial\bm{b}\partial\bm{c}}\). Based on Eq.~\ref{Eq:dot_theta}, we have
	\begin{subequations}\label{Eq:Jacobian}
		\begin{align}
		%---------eq1
		\bm{J}_{DD} = 
		&(\lambda-1)\set{E}_{P_{\rdv{X}},P_{\bm{\theta}_G}}[(\Delta_{\bm{\theta}_D}^{\bm{e}_{r,g}})^T\otimes(\nabla_{\bm{e}_{r,g}}^k)^T+(\nabla_{\bm{\theta}_D}^{\bm{e}_{r,g}})^T\Delta_{\bm{e}_{r,g}}^k\nabla_{\bm{\theta}_D}^{\bm{e}_{r,g}}] \\
		&-\lambda\set{E}_{P_{\rdv{X}}}[(\Delta_{\bm{\theta}_D}^{\bm{e}_{r1,r2}})^T\otimes(\nabla_{\bm{e}_{r1,r2}}^k)^T+(\nabla_{\bm{\theta}_D}^{\bm{e}_{r1,r2}})^T\Delta_{\bm{e}_{r1,r2}}^k\nabla_{\bm{\theta}_D}^{\bm{e}_{r1,r2}}] \nonumber\\
		&+\set{E}_{P_{\bm{\theta}_G}}[(\Delta_{\bm{\theta}_D}^{\bm{e}_{g1,g2}})^T\otimes(\nabla_{\bm{e}_{g1,g2}}^k)^T+(\nabla_{\bm{\theta}_D}^{\bm{e}_{g1,g2}})^T\Delta_{\bm{e}_{g1,g2}}^k\nabla_{\bm{\theta}_D}^{\bm{e}_{g1,g2}}] \nonumber\\
		%---------eq2
		\bm{J}_{DG} = 
		&(\lambda-1)\set{E}_{P_{\rdv{X}},P_{\bm{\theta}_G}}[(\Delta_{\bm{\theta}_D\bm{\theta}_G}^{\bm{e}_{r,g}})^T\otimes(\nabla_{\bm{e}_{r,g}}^k)^T-(\nabla_{\bm{\theta}_D}^{\bm{e}_{r,g}})^T\Delta_{\bm{e}_{r,g}}^k\nabla_{\bm{\theta}_G}^{\bm{d}_{g}}] \nonumber\\ &+\set{E}_{P_{\bm{\theta}_G}}[(\Delta_{\bm{\theta}_D\bm{\theta}_G}^{\bm{e}_{g1,g2}})^T\otimes(\nabla_{\bm{e}_{g1,g2}}^k)^T+(\nabla_{\bm{\theta}_D}^{\bm{e}_{g1,g2}})^T\Delta_{\bm{e}_{g1,g2}}^k\nabla_{\bm{\theta}_G}^{\bm{e}_{g1,g2}}] \\
		%---------eq3
		\bm{J}_{GD} = 
		&2\set{E}_{P_{\rdv{X}},P_{\bm{\theta}_G}}[(\Delta_{\bm{\theta}_G\bm{\theta}_D}^{\bm{e}_{g,r}})^T\otimes(\nabla_{\bm{e}_{g,r}}^k)^T+(\nabla_{\bm{\theta}_G}^{\bm{e}_{g,r}})^T\Delta_{\bm{e}_{g,r}}^k\nabla_{\bm{\theta}_D}^{\bm{d}_{g}}] \nonumber\\ &-\set{E}_{P_{\bm{\theta}_G}}[(\Delta_{\bm{\theta}_G\bm{\theta}_D}^{\bm{e}_{g1,g2}})^T\otimes(\nabla_{\bm{e}_{g1,g2}}^k)^T+(\nabla_{\bm{\theta}_G}^{\bm{e}_{g1,g2}})^T\Delta_{\bm{e}_{g1,g2}}^k\nabla_{\bm{\theta}_D}^{\bm{e}_{g1,g2}}] \\
		%---------eq4
		\bm{J}_{GG} =  &-\set{E}_{P_{\bm{\theta}_G}}[(\Delta_{\bm{\theta}_G}^{\bm{e}_{g1,g2}})^T\otimes(\nabla_{\bm{e}_{g1,g2}}^k)^T+(\nabla_{\bm{\theta}_G}^{\bm{e}_{g1,g2}})^T\Delta_{\bm{e}_{g1,g2}}^k\nabla_{\bm{\theta}_G}^{\bm{e}_{g1,g2}}] \nonumber\\
		&+2\set{E}_{P_{\rdv{X}},P_{\bm{\theta}_G}}[(\Delta_{\bm{\theta}_G}^{\bm{d}_{g}})^T\otimes(\nabla_{\bm{e}_{r,g}}^k)^T+(\nabla_{\bm{\theta}_G}^{\bm{d}_{g}})^T\Delta_{\bm{e}_{r,g}}^k\nabla_{\bm{\theta}_G}^{\bm{d}_{g}}]
		\end{align}
	\end{subequations}
	where \(\otimes\) is the kronecker product. At equilibrium, consider a sequence of \(N\) samples \(\bm{d}_{ri}=\bm{d}_{gi}=\bm{d}_{i}\) with \(N\to\infty\), we have \(\bm{J}_{DD}=\bm{0}\), \(\bm{J}_{GD}=\bm{0}\) and 
	\begin{align*}
		%---------eq1
		\bm{J}_{DG}&\propto 
		(\lambda+1)\sum_{i<j}[(\Delta_{\bm{\theta}_D\bm{\theta}_G}^{\bm{e}_{i,j}})^T\otimes(\nabla_{\bm{e}_{i,j}}^k)^T+(\nabla_{\bm{\theta}_D}^{\bm{e}_{i,j}})^T\Delta_{\bm{e}_{i,j}}^k\nabla_{\bm{\theta}_G}^{\bm{e}_{i,j}}] \\
		%---------eq2
		\bm{J}_{GG}&=\set{E}_{P_{\bm{\theta}_G}}[(\nabla_{\bm{\theta}_G}^{\bm{d}_{g1}})^T\Delta_{\bm{e}_{g1,g2}}^k\nabla_{\bm{\theta}_G}^{\bm{d}_{g1}}+(\nabla_{\bm{\theta}_G}^{\bm{d}_{g2}})^T\Delta_{\bm{e}_{g1,g2}}^k\nabla_{\bm{\theta}_G}^{\bm{d}_{g2}}-(\nabla_{\bm{\theta}_G}^{\bm{e}_{g1,g2}})^T\Delta_{\bm{e}_{g1,g2}}^k\nabla_{\bm{\theta}_G}^{\bm{e}_{g1,g2}}] \\
		&=\set{E}_{P_{\bm{\theta}_G}}[(\nabla_{\bm{\theta}_G}^{\bm{d}_{g1}})^T\Delta_{\bm{e}_{g1,g2}}^k\nabla_{\bm{\theta}_G}^{\bm{d}_{g2}}+(\nabla_{\bm{\theta}_G}^{\bm{d}_{g2}})^T\Delta_{\bm{e}_{g1,g2}}^k\nabla_{\bm{\theta}_G}^{\bm{d}_{g1}}]
	\end{align*}
	Given Lemma~\ref{lemma:non_negative} and fact that \(\bm{J}_{GG}\) is the Hessian matrix of \(M_{k\circ D_{\bm{\theta}_D}}^2(P_{\rdv{X}},P_{\bm{\theta}_G})\), \(\bm{J}_{GG}\) is negative semidefinite. The eigenvectors of \(\bm{J}_{GG}\) corresponding to zero eigenvalues form \(\nul(\bm{J}_{GG})\). There may exist small distortion \(\delta\bm{\theta}_G\in\nul(\bm{J}_{GG})\) such that \(P_{\bm{\theta}_G^*+\delta\bm{\theta}_G}=P_{\bm{\theta}_G^*}\). That is, \(P_{\bm{\theta}_G^*}\) is locally constant along some directions in the parameter space of \(G\). As a result, \(\nul(\bm{J}_{GG})\subseteq\nul(\bm{J}_{DG})\) because varying \(\bm{\theta}_G^*\) along these directions has no effect on \(D\). 
	
	Following Lemma C.3 of \cite{gan_stable}, we consider eigenvalue decomposition \(\bm{J}_{GG}=\bm{U}_G\bm{\Lambda}_G\bm{U}_G^T\) and \(\bm{J}_{DG}\bm{J}_{DG}^T=\bm{U}_D\bm{\Lambda}_D\bm{U}_D^T\). Let \(\bm{U}_G=[\bm{T}_G'^T, \bm{T}_G^T]\), \(\bm{U}_D=[\bm{T}_D'^T, \bm{T}_D^T]\) such that \(\text{Col}(\bm{T}_G'^T)=\nul(\bm{J}_{GG})\), \(\text{Col}(\bm{T}_D'^T)=\nul(\bm{J}_{DG}^T)\). Thus, the projections \(\bm{\gamma}_G=\bm{T}_G\bm{\theta}_G\) are orthogonal to \(\nul(\bm{J}_{GG})\). Then, the Jacobian corresponding to the projected system has the form \(\bm{J}'=[\bm{0}, \bm{J}_{DG}'; \bm{0}, \bm{J}_{GG}']\) with block \(\bm{J}_{DG}'=\bm{T}_D\bm{J}_{DG}\bm{T}_G^T\) and  \(\bm{J}_{GG}'=\bm{T}_G\bm{J}_{GG}\bm{T}_G^T\), where \(\bm{J}_{GG}'\) is negative definite. Moreover, on all directions exclude those described by \(\bm{J}_{GG}'\), the system is surrounded by a neighborhood of equilibia at least locally. According to Lemma~\ref{lemma:exp_stable}, the system is exponentially stable.
\end{proof}

\subsection{Discussion on Assumption \ref{Assum:zero}}
\label{Sec:discuss_assumption1}

This section shows that constant discriminator output \(D_{\bm{\theta}_D^*}(\bm{x})=\bm{c}\) for \(\bm{x}\in\ds{S}(P_{\rdv{X}})\bigcup\ds{S}(P_{\bm{\theta}_G^*})\) indicates that \(D_{\bm{\theta}_D^*}\) may have no discrimination power. First, we make the following assumptions:
\begin{assumption}\label{Assum:DaP}
	\begin{enumerate*}
		\item \(D\) is a multilayer perceptron where each layer \(l\) can be factorized into an affine transform and an element-wise activation function \(f_l\).
		\item Each activation function \(f_l\in C^0\); furthermore, \(f_l'\) has a finite number of discontinuities and \(f_l''\in C^0\)\footnote{This include many commonly used activations like linear, sigmoid, tanh, ReLU and ELU.}. 
		\item Input data to \(D\) is continuous and its support \(\ds{S}\) is compact in \(\set{R}^{d}\) with non-zero measure in each dimension and \(d>1\)\footnote{For distributions with semi-infinite or infinite support, we consider the effective or truncated support \(\ds{S}_{\epsilon}(P)=\{x\in\ds{X}|P(x)\ge\epsilon\}\), where \(\epsilon>0\) is a small scalar. This is practical, e.g., univariate Gaussian has support in \((-\infty,+\infty)\) yet a sample five standard deviations away from the mean is unlikely to be valid.}. 
	\end{enumerate*}
\end{assumption}
Based on Assumption~\ref{Assum:DaP}, we have the following proposition:
\begin{proposition}\label{Prop:D_constant}
	If~\(\forall\bm{x}\in\ds{S}\), \(D(\bm{x})=\bm{c}\), where \(\bm{c}\) is constant, then there always exists distortion \(\delta\bm{x}\) such that \(\bm{x}+\delta\bm{x}\not\in\ds{S}\) and \(D(\bm{x}+\delta\bm{x})=\bm{c}\).
\end{proposition}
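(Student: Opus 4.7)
My plan is to exploit the piecewise real-analytic structure of $D$ implied by Assumption~\ref{Assum:DaP} to promote the local constancy of $D$ on $\ds{S}$ into constancy on a strictly larger set, which must then escape the compact region $\ds{S}$. First I would extract an open ball on which $D$ equals $\bm{c}$: since $\ds{S}\subset\mathbb{R}^d$ is compact with non-zero measure in each coordinate and $d>1$, it has non-empty interior, so one may pick $\bm{x}_0\in\mathrm{int}(\ds{S})$ together with a ball $B=B(\bm{x}_0,r)\subseteq\ds{S}$ on which $D\equiv\bm{c}$. Each listed activation (linear, sigmoid, tanh, ReLU, ELU) is real-analytic on each of the finitely many smooth branches separated by the discontinuities of $f_l'$, so after possibly shrinking $B$ one may assume that every pre-activation of every neuron lies strictly inside a single branch throughout $B$. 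The maximal connected open set $P\ni\bm{x}_0$ on which that combination of branches remains valid is then a region on which $D$ is a fixed composition of affine maps with real-analytic functions, hence real-analytic. The identity theorem for real-analytic functions yields $D\equiv\bm{c}$ on all of $P$.

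Next, I would split into cases on $P$. If $P\not\subseteq\ds{S}$, then any $\bm{x}'\in P\setminus\ds{S}$ supplies the required distortion $\delta\bm{x}=\bm{x}'-\bm{x}_0$ and the proposition is proved. Otherwise $P$ is bounded, and I would propagate constancy across a portion of $\partial P$ into an adjacent piece $P'$, across which exactly one neuron switches between smooth branches of its activation. Continuity of $D$ already forces $D\equiv\bm{c}$ on the $(d-1)$-dimensional interface $\partial P\cap\partial P'$, and the two analytic expressions for $D$ on $P$ and $P'$ differ only through the single neuron's branch choice, propagated by fixed downstream weights. Arguing that this rank-one correction vanishes on the interface forces it to vanish on $P'$ as well, so $D\equiv\bm{c}$ on $P'$; iterating through the finite arrangement of pieces and invoking compactness of $\ds{S}$, the chain of constant pieces must eventually contain a point outside $\ds{S}$.

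The hard part is the boundary-crossing step. A real-analytic function that vanishes on a single $(d-1)$-dimensional submanifold need not vanish on any ambient open set, so the argument cannot rely on generic analyticity; it must genuinely use the network structure, namely that the change in $D$ across a piece boundary is governed entirely by one neuron swapping smooth branches, so the correction term has a rank-one form along the gradient of that neuron's pre-activation and must vanish identically in $P'$ once it vanishes on the interface. Ruling out pathological configurations in which the chain of constant pieces loops back inside $\ds{S}$ rather than escaping — and checking that the finiteness of the piecewise arrangement together with compactness of $\ds{S}$ is enough to force an escape in a finite number of crossings — is the delicate portion of the argument and the place where the proof would need the most care.
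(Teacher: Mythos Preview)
Your route is quite different from the paper's. The paper reduces to a two-layer network $D(\bm{x})=\bm{W}_2 f(\bm{W}_1\bm{x}+\bm{b}_1)+\bm{b}_2$, observes that $h(\bm{x})=f(\bm{W}_1\bm{x}+\bm{b}_1)\in\nul(\bm{W}_2)$ on $\ds{S}$, and splits on $\rank(\bm{W}_1)$: if the rank is below $d$ then any $\delta\bm{x}\in\nul(\bm{W}_1)$ does the job, and otherwise it looks for a boundary point of $\ds{S}$ whose pre-activations avoid all activation kinks and pushes outward along the normal, using a Jacobian argument to keep $h$ inside $\nul(\bm{W}_2)$. Your identity-theorem idea is genuinely cleaner for globally real-analytic activations such as sigmoid or tanh: there $D$ is real-analytic on all of $\set{R}^d$, constancy on a ball forces $D\equiv\bm{c}$ everywhere, and you are done without any case analysis or reference to the layer structure.

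The boundary-crossing step for piecewise activations, however, is a real gap, and it is not merely a matter of care: the mechanism you propose cannot work. Take the two-layer ReLU network
\[
D(x_1,x_2)=\text{ReLU}(x_1-1)+\text{ReLU}(-x_1-1)+\text{ReLU}(x_2-1)+\text{ReLU}(-x_2-1),
\]
which satisfies Assumption~\ref{Assum:DaP} and vanishes exactly on $[-1,1]^2$. Your constant piece is $P=(-1,1)^2$; across the facet $\{x_1=1\}$ the adjacent piece carries $D(\bm{x})=x_1-1$, which is not constant. The single-neuron correction you invoke has the form $\alpha\,a_i$ with $a_i=x_1-1$, and it vanishes on the interface \emph{because the interface is defined by $a_i=0$}; this places no constraint whatsoever on $\alpha$, so vanishing on the facet cannot propagate constancy into $P'$. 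Your chain of constant pieces therefore never grows beyond $P$, and there is nothing for the finiteness/compactness argument to iterate on. Note that the paper sidesteps this by working with boundary points of the support $\ds{S}$ rather than facets of the linear arrangement, and by seeking a boundary point whose pre-activations all miss the kinks; you may find it instructive to test that step of the paper's argument on the same example.
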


\begin{proof}
	Without loss of generality, we consider \(D(\bm{x})=\bm{W}_2h(\bm{x})+\bm{b}_2\) and \(h(\bm{x})=f(\bm{W}_1\bm{x}+\bm{b}_1)\), where \(\bm{W}_1\in{\mathbb{R}^{d_h\times{d}}}\), \(\bm{W}_2\), \(\bm{b}_1\), \(\bm{b}_2\) are model weights and biases, \(f\) is an activation function satisfying Assumption~\ref{Assum:DaP}. For \(\bm{x}\in\ds{S}\), since \(D(\bm{x})=c\), we have \(h(\bm{x})\in\nul(\bm{W}_2)\). Furthermore:
	\begin{enumerate}[(a),leftmargin=*]
		\item If \(\rank(\bm{W}_1)<d\), for any \(\delta\bm{x}\in\nul(\bm{W}_1)\), \(h(\bm{x}+\delta\bm{x})\in\nul(\bm{W}_2)\).
		\item If \(\rank(\bm{W}_1)=d=d_h\), the problem \(h(\bm{x}+\delta\bm{x})=k\cdot h(\bm{x})\) has unique solution for any \(k\in\set{R}\) as long as \(k\cdot h(\bm{x})\) is within the output range of \(f\).
		\item If \(\rank(\bm{W}_1)=d<d_h\), let \(\bm{U}\) and \(\bm{V}\) be two basis matrices of \(R^{d_h}\) such that \(\bm{W}_1\bm{x}=\bm{U}\rvect{\hat{\bm{x}}^T&\bm{0}^T}^T\) and any vector in \(\nul(\bm{W}_2)\) can be represented as \(\bm{V}\rvect{\bm{z}^T&\bm{0}^T}^T\), where \(\hat{\bm{x}}\in{\mathbb{R}^{d_h\times{d}}}\), \(\bm{z}\in{\mathbb{R}^{d_h\times{n}}}\) and \(n\) is the nullity of \(\bm{W}_2\). Let the projected support be \(\hat{\ds{S}}\). Thus, \(\forall\hat{\bm{x}}\in\hat{\ds{S}}\), there exists \(\bm{z}\) such that \(f(\bm{U}\rvect{\hat{\bm{x}}^T&\bm{0}^T}^T+\bm{b}_1)=\bm{V}\rvect{\bm{z}^T&\bm{z}_c^T}^T\) with \(\bm{z}_c=\bm{0}\). Consider the Jacobian:
		\begin{equation}
		\bm{J}=\frac{\partial\rvect{\bm{z}^T&\bm{z}_c^T}^T}{\partial\rvect{\hat{\bm{x}}^T&\bm{0}^T}^T}=\bm{V}^{-1}\nabla{\bm{\Sigma}}\bm{U}
		\end{equation}
		where \(\nabla{\bm{\Sigma}}=\diag(\frac{\dr f}{\dr a_i})\) and \(\bm{a}=[a_i]_{i=1}^{d_h}\) is the input to activation, or pre-activations. Since \(\hat{\ds{S}}\) is continuous and compact, it has infinite number of boundary points \(\{\hat{\bm{x}}_b\}\) for \(d>1\). Consider one boundary point \(\hat{\bm{x}}_b\) and its normal line \(\delta\hat{\bm{x}}_b\). Let \(\epsilon>0\) be a small scalar such that \(\hat{\bm{x}}_b-\epsilon\delta\hat{\bm{x}}_b\in\hat{\ds{S}}\) and \(\hat{\bm{x}}_b+\epsilon\delta\hat{\bm{x}}_b\not\in\hat{\ds{S}}\). 
		\begin{itemize}[leftmargin=*]
			\item For linear activation, \(\nabla\bm{\Sigma}=\bm{I}\) and \(\bm{J}\) is constant. Then \(\bm{z}_c\) remains \(\bm{0}\) for \(\hat{\bm{x}}_b+\epsilon\delta\hat{\bm{x}}_b\), i.e., there exists \(\bm{z}\) such that \(h(\hat{\bm{x}}+\epsilon\delta\hat{\bm{x}})\in\nul(\bm{W}_2)\).
			\item For nonlinear activations, assume \(f'\) has \(N\) discontinuities. Since \(\bm{U}\rvect{\hat{\bm{x}}^T&\bm{0}^T}^T+\bm{b}_1=\bm{c}\) has unique solution for any vector \(\bm{c}\), the boundary points \(\{\hat{\bm{x}}_b\}\) cannot yield pre-activations \(\{\bm{a}_b\}\) that all lie on the discontinuities in any of the \(d_h\) directions. Though we might need to sample \(d_h^{N+1}\) points in the worst case to find an exception, there are infinite number of exceptions. Let \(\hat{\bm{x}}_b\) be a sample where \(\{\bm{a}_b\}\) does not lie on the discontinuities in any direction. Because \(f''\) is continuous, \(\bm{z}_c\) remains \(\bm{0}\) for \(\hat{\bm{x}}_b+\epsilon\delta\hat{\bm{x}}_b\), i.e., there exists \(\bm{z}\) such that \(h(\hat{\bm{x}}+\epsilon\delta\hat{\bm{x}})\in\nul(\bm{W}_2)\).
		\end{itemize}
	\end{enumerate}
	In conclusion, we can always find \(\delta\bm{x}\) such that \(\bm{x}+\delta\bm{x}\notin\ds{S}\) and \(D(\bm{x}+\delta\bm{x})=\bm{c}\).
\end{proof}

Proposition~\ref{Prop:D_constant} indicates that if \(D_{\bm{\theta}_D^*}(\bm{x})=\bm{0}\) for \(\bm{x}\in\ds{S}(P_{\rdv{X}})\bigcup\ds{S}(P_{\bm{\theta}_G^*})\), \(D_{\bm{\theta}_D^*}\) cannot discriminate against fake samples with distortions to the original data. In contrast, Assumption~\ref{Assum:nonzero} and Lemma~\ref{lemma:non_negative} guarantee that, at equilibrium, the discriminator trained using MMD loss function is effective against such fake samples given a large number of i.i.d. test samples (\cite{mmdtest}).

\section{Supplementary Methodology}
\label{sec:supp_method}
\subsection{Representative loss functions in literature}
\label{sec:loss_literature}

Several loss functions have been proposed to quantify the difference between real and generated sample scores, including: (assume linear activation is used at the last layer of \(D\))
\begin{itemize}[leftmargin=*]
	\item The Minimax loss~(\cite{gan}): \(L_D=\set{E}_{P_{\rdv{X}}}[\text{Softplus}(-D(\bm{x}))]+\E_{P_{\rdv{Z}}}[\text{Softplus}(D(G(\bm{z})))]\) and  \(L_G=-L_D\), which can be derived from the Jensen–Shannon (JS) divergence between \(P_{\rdv{X}}\) and the model distribution \(P_{G}\).
	\item The non-saturating loss~(\cite{gan}), which is a variant of the minimax loss with the same \(L_D\) and \(L_G=\E_{P_{\rdv{Z}}}[\text{Softplus}(-D(G(\bm{z})))]\).
	\item The Hinge loss~(\cite{implicit}):  \(L_D=\set{E}_{P_{\rdv{X}}}[\text{ReLU}(1-D(\bm{x}))]+\E_{P_{\rdv{Z}}}[\text{ReLU}(1+D(G(\bm{z})))]\), \(L_G=\E_{P_{\rdv{Z}}}[-D(G(\bm{z}))]\), which is notably known for usage in support vector machines and is related to the total variation (TV) distance (\cite{f_divergence_theory}).
	\item The Wasserstein loss~(\cite{wgan,wgan_gp}), which is derived from the Wasserstein distance between \(P_{\rdv{X}}\) and \(P_{G}\): \(L_G=-\E_{P_{\rdv{Z}}}[D(G(\bm{z}))]\), \(L_D=\E_{P_{\rdv{Z}}}[D(G(\bm{z}))]-\set{E}_{P_{\rdv{X}}}[D(\bm{x})]\), where \(D\) is subject to some Lipschitz constraint.
	\item The maximum mean discrepancy (MMD) (\cite{mmd_gan_g,mmd_gan_t}), as described in Section~\ref{sec:all_the_gans}.
\end{itemize}

\subsection{Network Architecture}
\label{sec:architecture}

For unsupervised image generation tasks on CIFAR-10 and STL-10 datasets, the DCGAN architecture from \cite{spectral} was used. For CelebA and LSUN bedroom datasets, we added more layers to the generator and discriminator accordingly. See Table~\ref{Tab:architecture1} and~\ref{Tab:architecture2} for details. 

\begin{table}[htb]
	\centering
	\caption{DCGAN models for image generation on CIFAR-10 (\(h=w=4\), \(H=W=32\)) and STL-10 (\(h=w=6\), \(H=W=48\)) datasets. For non-saturating loss and hinge loss, \(s=1\); for MMD-rand, MMD-rbf, MMD-rq, \(s=16\).}
	\label{Tab:architecture1}
	\begin{subtable}{0.45\textwidth}
		\centering
		\caption{Generator}
		\begin{tabular}{c}
			\toprule
			\(\bm{z}\in\set{R}^{128}\sim\ds{N}(\bm{0},\bm{I})\) \\
			\midrule
			\(128\to h\times w\times512\), dense, linear \\
			\midrule
			\(4\times4\), stride 2 deconv, \(256\), BN, ReLU\\
			\midrule
			\(4\times4\), stride 2 deconv, \(128\), BN, ReLU\\
			\midrule
			\(4\times4\), stride 2 deconv, \(64\), BN, ReLU\\
			\midrule
			\(3\times3\), stride 1 conv, \(3\), Tanh\\
			\bottomrule
		\end{tabular}
	\end{subtable}
	~
	\begin{subtable}{0.45\textwidth}
		\centering
		\caption{Discriminator}
		\begin{tabular}{c}
			\toprule
			RGB image \(\bm{x}\in[-1,1]^{H\times W\times3}\) \\
			\midrule
			\(3\times3\), stride 1 conv, \(64\), LReLU\\
			\midrule
			\(4\times4\), stride 2 conv, \(128\), LReLU\\
			\(3\times3\), stride 1 conv, \(128\), LReLU\\
			\midrule
			\(4\times4\), stride 2 conv, \(256\), LReLU\\
			\(3\times3\), stride 1 conv, \(256\), LReLU\\
			\midrule
			\(4\times4\), stride 2 conv, \(512\), LReLU\\
			\(3\times3\), stride 1 conv, \(512\), LReLU\\
			\midrule
			\(h\times w\times512\to s\), dense, linear\\
			\bottomrule
		\end{tabular}
	\end{subtable}
\end{table}

\begin{table}[hbt]
	\centering
	\caption{DCGAN models for image generation on CelebA and LSUN-bedroom datasets. For non-saturating loss and hinge loss, \(s=1\); for MMD-rand, MMD-rbf, MMD-rq, \(s=16\).}
	\label{Tab:architecture2}
	\begin{subtable}{0.45\textwidth}
		\centering
		\caption{Generator}
		\begin{tabular}{c}
			\toprule
			\(\bm{z}\in\set{R}^{128}\sim\ds{N}(\bm{0},\bm{I})\) \\
			\midrule
			\(128\to 4\times 4\times1024\), dense, linear \\
			\midrule
			\(4\times4\), stride 2 deconv, \(512\), BN, ReLU\\
			\midrule
			\(4\times4\), stride 2 deconv, \(256\), BN, ReLU\\
			\midrule
			\(4\times4\), stride 2 deconv, \(128\), BN, ReLU\\
			\midrule
			\(4\times4\), stride 2 deconv, \(64\), BN, ReLU\\
			\midrule
			\(3\times3\), stride 1 conv, \(3\), Tanh\\
			\bottomrule
		\end{tabular}
	\end{subtable}
	~
	\begin{subtable}{0.45\textwidth}
		\centering
		\caption{Discriminator}
		\begin{tabular}{c}
			\toprule
			RGB image \(\bm{x}\in[-1,1]^{64\times 64\times3}\) \\
			\midrule
			\(3\times3\), stride 1 conv, \(64\), LReLU\\
			\midrule
			\(4\times4\), stride 2 conv, \(128\), LReLU\\
			\(3\times3\), stride 1 conv, \(128\), LReLU\\
			\midrule
			\(4\times4\), stride 2 conv, \(256\), LReLU\\
			\(3\times3\), stride 1 conv, \(256\), LReLU\\
			\midrule
			\(4\times4\), stride 2 conv, \(512\), LReLU\\
			\(3\times3\), stride 1 conv, \(512\), LReLU\\
			\midrule
			\(4\times4\), stride 2 conv, \(1024\), LReLU\\
			\(3\times3\), stride 1 conv, \(1024\), LReLU\\
			\midrule
			\(4\times 4\times512\to s\), dense, linear\\
			\bottomrule
		\end{tabular}
	\end{subtable}
\end{table}

\section{Power Iteration for Convolution Operation}
\label{sec:pico}

This section introduces the power iteration for convolution operation (PICO) method to estimate the spectral norm of a convolution kernel, and compare PICO with the power iteration for matrix (PIM) method used in \cite{spectral}.

\subsection{Method Formation}
\label{sec:pico_formation}
For a weight matrix \(\bm{W}\), the spectral norm  is defined as \(\sigma(\bm{W})=\max_{\norm{\bm{v}}_2\le1}\norm{\bm{W}\bm{v}}_2\). The PIM is used to estimate \(\sigma(\bm{W})\) (\cite{spectral}), which iterates between two steps:
\begin{enumerate}[leftmargin=*]
	\item Update \(\bm{u}=\bm{W}\bm{v}/\norm{\bm{W}\bm{v}}_2\);
	\item Update \(\bm{v}=\bm{W}^T\bm{u}/\norm{\bm{W}^T\bm{u}}_2\).
\end{enumerate}
The convolutional kernel \(\bm{W}_{c}\) is a tensor of shape \(h\times w\times c_{in}\times c_{out}\) with \(h,w\) the receptive field size and \(c_{in},c_{out}\) the number of input/output channels. To estimate \(\sigma(\bm{W}_{c})\), \cite{spectral} reshaped it into a matrix \(\bm{W}_{rs}\) of shape \((hwc_{in})\times c_{out}\) and estimated \(\sigma(\bm{W}_{rs})\). 

We propose a simple method to calculate \(\bm{W}_{c}\) directly based on the fact that convolution operation is linear. For any linear map \(T:\set{R}^m\to\set{R}^n\), there exists matrix \(\bm{W}_L\in\set{R}^{n\times m}\) such that \(\bm{y}=T(\bm{x})\) can be represented as \(\bm{y}=\bm{W}_L\bm{x}\). Thus, we may simply substitute \(\bm{W}_L=\frac{\partial\bm{y}}{\partial\bm{x}}\) in the PIM method to estimate the spectral norm of any linear operation. In the case of convolution operation \(*\), there exist doubly block circulant matrix \(\bm{W}_{dbc}\) such that \(\bm{u}=\bm{W}_c*\bm{v}=\bm{W}_{dbc}\bm{v}\). Consider \(\bm{v}'=\bm{W}_{dbc}^T\bm{u}=[\frac{\partial\bm{u}}{\partial\bm{v}}]^T\bm{u}\) which is essentially the transpose convolution of \(\bm{W}_c\) on \(\bm{u}\) (\cite{conv_guide}). Thus, similar to PIM, PICO iterates between the following two steps:
\begin{enumerate}[leftmargin=*]
	\item Update \(\bm{u}=\bm{W}_c*\bm{v}/\norm{\bm{W}_c*\bm{v}}_2\);
	\item Do transpose convolution of \(\bm{W}_c\) on \(\bm{u}\) to get \(\hat{\bm{v}}\); update \(\bm{v}=\hat{\bm{v}}/\norm{\hat{\bm{v}}}_2\).
\end{enumerate}
Similar approaches have been proposed in \cite{pico_similar} and \cite{pico_similar2} from different angles, which we were not aware during this study. In addition, \cite{fft_sv} proposes to compute the exact singular values of convolution kernels using FFT and SVD. In spectral normalization, only the first singular value is concerned, making the power iteration methods PIM and PICO more efficient than FFT and thus preferred in our study. However, we believe the exact method FFT+SVD (\cite{fft_sv}) may eventually inspire more rigorous regularization methods for GAN.

The proposed PICO method estimates the real spectral norm of a convolution kernel at each layer, thus enforces an upper bound on the Lipschitz constant of the discriminator \(D\). Denote the upper bound as \(\text{LIP}_{\text{PICO}}\). In this study, Leaky ReLU (LReLU) was used at each layer of \(D\), thus \(\text{LIP}_{\text{PICO}}\approx1\) (\cite{pico_similar2}). In practice, however, PICO would often cause the norm of the signal passing through \(D\) to decrease to zero, because at each layer,
\begin{itemize}[leftmargin=*]
	\item the signal hardly coincides with the first singular-vector of the convolution kernel; and
	\item the activation function LReLU often reduces the norm of the signal.
\end{itemize}
Consequently, the discriminator outputs tend to be similar for all the inputs. To compensate the loss of norm at each layer, the signal is multiplied by a constant \(C\) after each spectral normalization. This essentially enlarges \(\text{LIP}_{\text{PICO}}\) by \(C^K\) where \(K\) is the number of layers in the DCGAN discriminator. For all experiments in Section~\ref{sec:experiments}, we fixed \(C=\frac{1}{0.55}\approx1.82\) as all loss functions performed relatively well empirically. In Appendix Section~\ref{sec:exp:pico_pim}, we tested the effects of coefficient \(C^K\) on the performance of several loss functions.

\subsection{Comparison to PIM}
\label{sec:compare_pim}
PIM (\cite{spectral}) also enforces an upper bound \(\text{LIP}_{\text{PIM}}\) on the Lipschitz constant of the discriminator \(D\). Consider a convolution kernel \(\bm{W}_{c}\) with receptive field size \(h\times w\) and stride \(s\). Let \(\sigma_{\text{PICO}}\) and \(\sigma_{\text{PIM}}\) be the spectral norm estimated by PICO and PIM respectively. We empirically found\footnote{This was obtained by optimizing \(\sigma_{\text{PICO}}/\sigma_{\text{PIM}}\) w.r.t. a variety of randomly initialized kernel \(\bm{W}_{c}\). Both gradient descent and Adam methods were tested with a small learning rate \(1e^{-5}\) so that the error of spectral norm estimation may be ignored at each iteration.} that \(\sigma_{\text{PIM}}^{-1}\) varies in the range \([\sigma_{\text{PICO}}^{-1}, \frac{\sqrt{hw}}{s}\sigma_{\text{PICO}}^{-1}]\), depending on the kernel \(\bm{W}_{c}\). For a typical kernel of size \(3\times3\) and stride \(1\), \(\sigma_{\text{PIM}}^{-1}\) may vary from \(\sigma_{\text{PICO}}^{-1}\) to \(3\sigma_{\text{PICO}}^{-1}\). Thus, \(\text{LIP}_{\text{PIM}}\) is indefinite and may vary during training. In deep convolutional networks, PIM could potentially result in a very loose constraint on the Lipschitz constant of the network. In Appendix Section~\ref{sec:exp:pico_pim}, we experimentally compare the performance of PICO and PIM with several loss functions.

\subsection{Experiments}
\label{sec:exp:pico_pim}

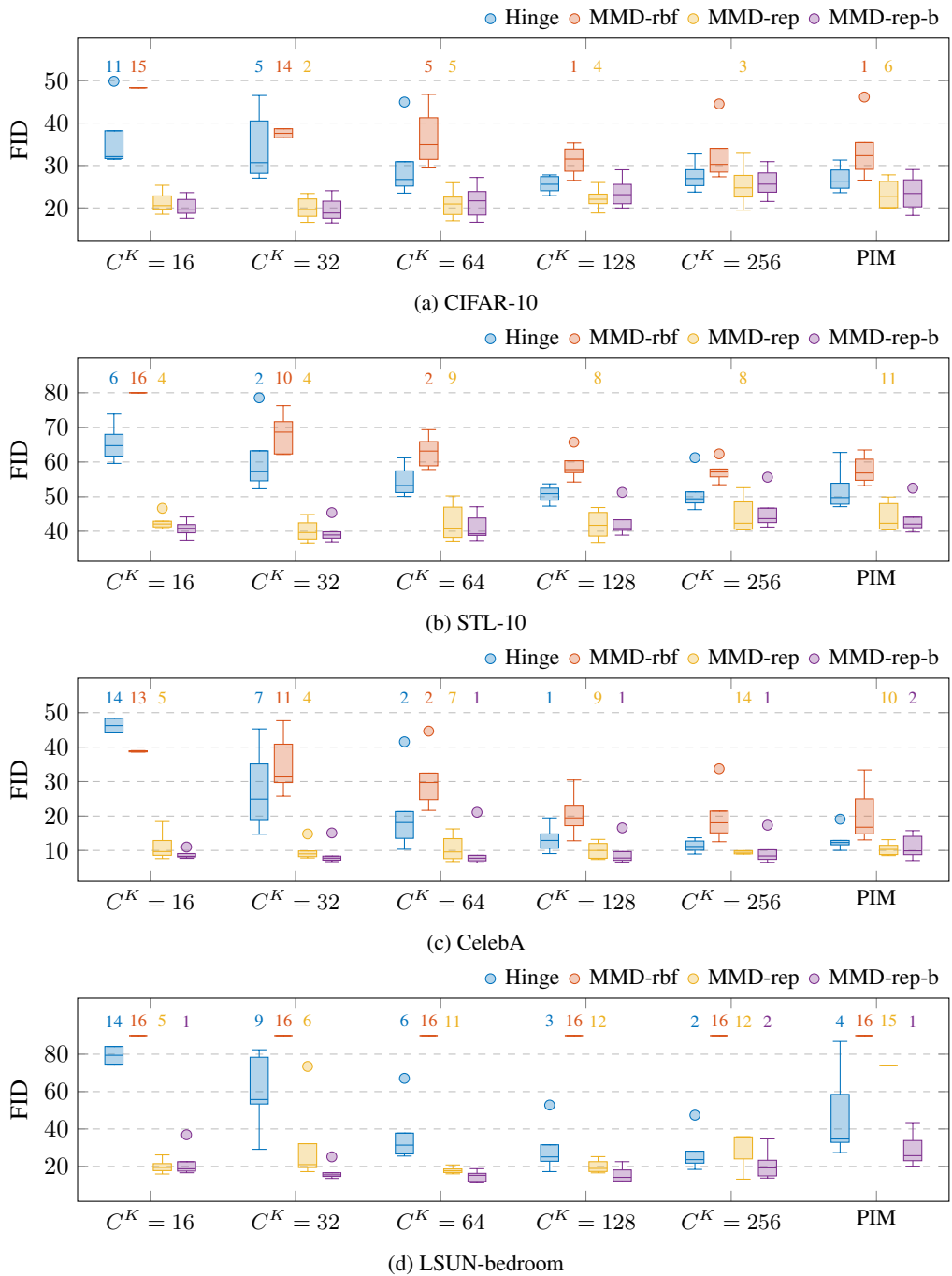
\begin{figure}[!ht]
	\centering
	\begin{subfigure}[t]{\linewidth}
		\centering
		\begin{tikzpicture}
		\begin{axis}[
		boxplot/draw direction=y,
		width=\textwidth, height=0.32\textwidth,
		ymax=60, ytick = {20, 30, 40, 50},
		ylabel={FID},
		ymajorgrids=true, grid style=dashed,
		xmin=0, xmax=6, x tick label style={font=\small}, 
		xtick = {0.5, 1.5, 2.5, 3.5, 4.5, 5.5}, xticklabels={{\(C^K=16\)}, {\(C^K=32\)}, {\(C^K=64\)}, {\(C^K=128\)}, {\(C^K=256\)}, {PIM}},
		cycle list={{matlab1},{matlab2},{matlab3},{matlab4}},
		boxplot={
			draw position={1/4 + floor(\plotnumofactualtype/4) + 1/6*fpumod(\plotnumofactualtype,4)},
			box extend=0.125},
		every axis plot/.append style={fill,fill opacity=0.3},
		legend entries={Hinge,MMD-rbf,MMD-rep,MMD-rep-b},
		legend to name={legend},
		legend style={font=\footnotesize, draw=none, inner sep=-1.5pt, column sep=2pt, legend columns=-1},
		name=border
		]
		% format: data\\ lower wisker\\ median\\ upper quartile\\ lower quartile\\ upper whisker
		\addplot table[row sep=\\,y index=0] {data\\ 31.463\\ 32.098\\ 38.176\\ 31.658\\ 49.847\\ };
		\addplot table[row sep=\\,y index=0] {data\\ 48.326\\ 48.326\\ 48.326\\ 48.326\\ 48.326\\ };
		\addplot table[row sep=\\,y index=0] {data\\ 18.509\\ 20.541\\ 22.840\\ 19.698\\ 25.356\\ };
		\addplot table[row sep=\\,y index=0] {data\\ 17.572\\ 19.542\\ 21.994\\ 18.746\\ 23.620\\ };
		
		\addplot table[row sep=\\,y index=0] {data\\ 27.034\\ 30.684\\ 40.483\\ 28.179\\ 46.487\\ };
		\addplot table[row sep=\\,y index=0] {data\\ 36.512\\ 37.582\\ 38.653\\ 36.512\\ 38.653\\ };
		\addplot table[row sep=\\,y index=0] {data\\ 16.636\\ 19.641\\ 22.123\\ 18.037\\ 23.412\\ };
		\addplot table[row sep=\\,y index=0] {data\\ 16.497\\ 18.811\\ 21.604\\ 17.572\\ 24.059\\ };
		
		\addplot table[row sep=\\,y index=0] {data\\ 23.496\\ 26.715\\ 30.916\\ 25.169\\ 44.945\\ };
		\addplot table[row sep=\\,y index=0] {data\\ 29.431\\ 34.939\\ 41.247\\ 31.456\\ 46.742\\ };
		\addplot table[row sep=\\,y index=0] {data\\ 17.000\\ 20.932\\ 22.569\\ 18.457\\ 25.952\\ };
		\addplot table[row sep=\\,y index=0] {data\\ 16.658\\ 21.686\\ 23.870\\ 18.339\\ 27.194\\ };
		
		\addplot table[row sep=\\,y index=0] {data\\ 22.895\\ 25.582\\ 27.383\\ 24.059\\ 27.791\\ };
		\addplot table[row sep=\\,y index=0] {data\\ 26.505\\ 31.517\\ 33.877\\ 28.665\\ 35.314\\ };
		\addplot table[row sep=\\,y index=0] {data\\ 18.816\\ 22.045\\ 23.258\\ 21.018\\ 25.994\\ };
		\addplot table[row sep=\\,y index=0] {data\\ 19.986\\ 23.131\\ 25.566\\ 20.993\\ 28.992\\ };
		
		\addplot table[row sep=\\,y index=0] {data\\ 23.740\\ 26.926\\ 29.015\\ 25.271\\ 32.715\\ };
		\addplot table[row sep=\\,y index=0] {data\\ 27.344\\ 30.274\\ 34.044\\ 28.466\\ 44.504\\ };
		\addplot table[row sep=\\,y index=0] {data\\ 19.479\\ 24.740\\ 27.676\\ 22.587\\ 32.872\\ };
		\addplot table[row sep=\\,y index=0] {data\\ 21.529\\ 25.614\\ 28.263\\ 23.716\\ 30.923\\ };
		
		\addplot table[row sep=\\,y index=0] {data\\ 23.604\\ 26.318\\ 28.948\\ 24.659\\ 31.293\\ };
		\addplot table[row sep=\\,y index=0] {data\\ 26.561\\ 32.323\\ 35.403\\ 29.105\\ 46.134\\ };
		\addplot table[row sep=\\,y index=0] {data\\ 19.979\\ 22.749\\ 26.255\\ 20.080\\ 27.803\\ };
		\addplot table[row sep=\\,y index=0] {data\\ 18.243\\ 23.411\\ 26.630\\ 20.249\\ 29.054\\ };
		
		\node[above, align=center, text=matlab1, font=\scriptsize] at (axis cs:0.25,50) {11};
		\node[above, align=center, text=matlab2, font=\scriptsize] at (axis cs:0.5-0.083,50) {15};
		%\node[above, align=center, text=matlab3, font=\scriptsize] at (axis cs:0.5+0.083,50) {0};
		%\node[above, align=center, text=matlab4, font=\scriptsize] at (axis cs:0.75,50) {0};
		
		\node[above, align=center, text=matlab1, font=\scriptsize] at (axis cs:1.25,50) {5};
		\node[above, align=center, text=matlab2, font=\scriptsize] at (axis cs:1.5-0.083,50) {14};
		\node[above, align=center, text=matlab3, font=\scriptsize] at (axis cs:1.5+0.083,50) {2};
		%\node[above, align=center, text=matlab4, font=\scriptsize] at (axis cs:1.75,50) {0};
		
		%\node[above, align=center, text=matlab1, font=\scriptsize] at (axis cs:2.25,50) {0};
		\node[above, align=center, text=matlab2, font=\scriptsize] at (axis cs:2.5-0.083,50) {5};
		\node[above, align=center, text=matlab3, font=\scriptsize] at (axis cs:2.5+0.083,50) {5};
		%\node[above, align=center, text=matlab4, font=\scriptsize] at (axis cs:2.75,50) {0};
		
		%\node[above, align=center, text=matlab1, font=\scriptsize] at (axis cs:3.25,50) {0};
		\node[above, align=center, text=matlab2, font=\scriptsize] at (axis cs:3.5-0.083,50) {1};
		\node[above, align=center, text=matlab3, font=\scriptsize] at (axis cs:3.5+0.083,50) {4};
		%\node[above, align=center, text=matlab4, font=\scriptsize] at (axis cs:3.75,50) {0};
		
		%\node[above, align=center, text=matlab1, font=\scriptsize] at (axis cs:4.25,50) {0};
		%\node[above, align=center, text=matlab2, font=\scriptsize] at (axis cs:4.5-0.083,50) {0};
		\node[above, align=center, text=matlab3, font=\scriptsize] at (axis cs:4.5+0.083,50) {3};
		%\node[above, align=center, text=matlab4, font=\scriptsize] at (axis cs:4.75,50) {0};
		
		%\node[above, align=center, text=matlab1, font=\scriptsize] at (axis cs:5.25,50) {0};
		\node[above, align=center, text=matlab2, font=\scriptsize] at (axis cs:5.5-0.083,50) {1};
		\node[above, align=center, text=matlab3, font=\scriptsize] at (axis cs:5.5+0.083,50) {6};
		%\node[above, align=center, text=matlab4, font=\scriptsize] at (axis cs:5.75,50) {0};
		\end{axis}
		\node[above left] at (border.north east) {\ref{legend}};
		\end{tikzpicture}
		\caption{CIFAR-10\label{fig:pico_fid_cifar}}
	\end{subfigure}
	
	\begin{subfigure}[t]{\linewidth}
		\centering
		\begin{tikzpicture}
		\begin{axis}[
		boxplot/draw direction=y,
		width=\textwidth, height=0.32\textwidth,
		ymax=90, ytick = {40, 50, 60, 70, 80},
		ylabel={FID},
		ymajorgrids=true, grid style=dashed,
		xmin=0, xmax=6, x tick label style={font=\small}, 
		xtick = {0.5, 1.5, 2.5, 3.5, 4.5, 5.5}, xticklabels={{\(C^K=16\)}, {\(C^K=32\)}, {\(C^K=64\)}, {\(C^K=128\)}, {\(C^K=256\)}, {PIM}},
		cycle list={{matlab1},{matlab2},{matlab3},{matlab4}},
		boxplot={
			draw position={1/4 + floor(\plotnumofactualtype/4) + 1/6*fpumod(\plotnumofactualtype,4)},
			box extend=0.125},
		every axis plot/.append style={fill,fill opacity=0.3},
		legend entries={Hinge,MMD-rbf,MMD-rep,MMD-rep-b},
		legend to name={legend},
		legend style={font=\footnotesize, draw=none, inner sep=-1.5pt, column sep=2pt, legend columns=-1},
		name=border
		]
		% format: data\\ lower wisker\\ median\\ upper quartile\\ lower quartile\\ upper whisker
		\addplot table[row sep=\\,y index=0] {data\\ 59.584\\ 64.733\\ 67.990\\ 61.701\\ 73.821\\ };
		\addplot table[row sep=\\,y index=0] {data\\ 80\\ 80\\ 80\\ 80\\ 80\\ };
		\addplot table[row sep=\\,y index=0] {data\\ 40.754\\ 42.096\\ 42.905\\ 41.249\\ 46.627\\ };
		\addplot table[row sep=\\,y index=0] {data\\ 37.426\\ 40.844\\ 41.936\\ 39.548\\ 44.131\\ };
		\addplot table[row sep=\\,y index=0] {data\\ 52.284\\ 57.191\\ 63.244\\ 54.574\\ 78.542\\ };
		\addplot table[row sep=\\,y index=0] {data\\ 62.185\\ 68.633\\ 71.646\\ 62.333\\ 76.280\\ };
		\addplot table[row sep=\\,y index=0] {data\\ 36.658\\ 39.623\\ 42.427\\ 37.722\\ 44.833\\ };
		\addplot table[row sep=\\,y index=0] {data\\ 36.962\\ 38.918\\ 39.838\\ 37.977\\ 45.378\\ };
		\addplot table[row sep=\\,y index=0] {data\\ 50.077\\ 53.211\\ 57.427\\ 51.197\\ 61.203\\ };
		\addplot table[row sep=\\,y index=0] {data\\ 57.832\\ 63.157\\ 65.894\\ 58.900\\ 69.339\\ };
		\addplot table[row sep=\\,y index=0] {data\\ 37.155\\ 40.892\\ 46.964\\ 38.180\\ 50.230\\ };
		\addplot table[row sep=\\,y index=0] {data\\ 37.306\\ 39.252\\ 43.833\\ 38.782\\ 47.073\\ };
		\addplot table[row sep=\\,y index=0] {data\\ 47.245\\ 50.891\\ 52.443\\ 48.986\\ 53.695\\ };
		\addplot table[row sep=\\,y index=0] {data\\ 54.233\\ 57.802\\ 60.349\\ 56.928\\ 65.736\\ };
		\addplot table[row sep=\\,y index=0] {data\\ 36.802\\ 41.720\\ 45.427\\ 38.590\\ 46.854\\ };
		\addplot table[row sep=\\,y index=0] {data\\ 38.865\\ 40.788\\ 43.335\\ 40.274\\ 51.231\\ };
		\addplot table[row sep=\\,y index=0] {data\\ 46.258\\ 49.372\\ 51.404\\ 48.188\\ 61.251\\ };
		\addplot table[row sep=\\,y index=0] {data\\ 53.435\\ 57.106\\ 57.950\\ 55.703\\ 62.340\\ };
		\addplot table[row sep=\\,y index=0] {data\\ 40.392\\ 42.299\\ 48.482\\ 40.576\\ 52.580\\ };
		\addplot table[row sep=\\,y index=0] {data\\ 41.208\\ 43.646\\ 46.677\\ 42.497\\ 55.609\\ };
		\addplot table[row sep=\\,y index=0] {data\\ 47.096\\ 49.731\\ 53.873\\ 47.830\\ 62.762\\ };
		\addplot table[row sep=\\,y index=0] {data\\ 53.166\\ 56.822\\ 60.805\\ 54.724\\ 63.477\\ };
		\addplot table[row sep=\\,y index=0] {data\\ 40.402\\ 42.296\\ 47.988\\ 40.612\\ 49.909\\ };
		\addplot table[row sep=\\,y index=0] {data\\ 39.776\\ 42.067\\ 44.080\\ 41.016\\ 52.465\\ };
		
		\iftrue
		\node[above, align=center, text=matlab1, font=\scriptsize] at (axis cs:0.25,80) {6};
		\node[above, align=center, text=matlab2, font=\scriptsize] at (axis cs:0.5-0.083,80) {16};
		\node[above, align=center, text=matlab3, font=\scriptsize] at (axis cs:0.5+0.083,80) {4};
		%\node[above, align=center, text=matlab4, font=\scriptsize] at (axis cs:0.75,80) {0};
		
		\node[above, align=center, text=matlab1, font=\scriptsize] at (axis cs:1.25,80) {2};
		\node[above, align=center, text=matlab2, font=\scriptsize] at (axis cs:1.5-0.083,80) {10};
		\node[above, align=center, text=matlab3, font=\scriptsize] at (axis cs:1.5+0.083,80) {4};
		%\node[above, align=center, text=matlab4, font=\scriptsize] at (axis cs:1.75,80) {0};
		
		%\node[above, align=center, text=matlab1, font=\scriptsize] at (axis cs:2.25,80) {0};
		\node[above, align=center, text=matlab2, font=\scriptsize] at (axis cs:2.5-0.083,80) {2};
		\node[above, align=center, text=matlab3, font=\scriptsize] at (axis cs:2.5+0.083,80) {9};
		%\node[above, align=center, text=matlab4, font=\scriptsize] at (axis cs:2.75,80) {0};
		
		%\node[above, align=center, text=matlab1, font=\scriptsize] at (axis cs:3.25,80) {0};
		%\node[above, align=center, text=matlab2, font=\scriptsize] at (axis cs:3.5-0.083,80) {0};
		\node[above, align=center, text=matlab3, font=\scriptsize] at (axis cs:3.5+0.083,80) {8};
		%\node[above, align=center, text=matlab4, font=\scriptsize] at (axis cs:3.75,80) {0};
		
		%\node[above, align=center, text=matlab1, font=\scriptsize] at (axis cs:4.25,80) {0};
		%\node[above, align=center, text=matlab2, font=\scriptsize] at (axis cs:4.5-0.083,80) {0};
		\node[above, align=center, text=matlab3, font=\scriptsize] at (axis cs:4.5+0.083,80) {8};
		%\node[above, align=center, text=matlab4, font=\scriptsize] at (axis cs:4.75,80) {0};
		
		%\node[above, align=center, text=matlab1, font=\scriptsize] at (axis cs:5.25,80) {0};
		%\node[above, align=center, text=matlab2, font=\scriptsize] at (axis cs:5.5-0.083,80) {0};
		\node[above, align=center, text=matlab3, font=\scriptsize] at (axis cs:5.5+0.083,80) {11};
		%\node[above, align=center, text=matlab4, font=\scriptsize] at (axis cs:5.75,80) {0};
		\fi
		\end{axis}
		\node[above left] at (border.north east) {\ref{legend}};
		\end{tikzpicture}
		\caption{STL-10\label{fig:pico_fid_stl}}
	\end{subfigure}
	
	\begin{subfigure}[t]{\linewidth}
		\centering
		\begin{tikzpicture}
		\begin{axis}[
		boxplot/draw direction=y,
		width=\textwidth, height=0.32\textwidth,
		ymax=60, ytick = {10, 20, 30, 40, 50},
		ylabel={FID},
		ymajorgrids=true, grid style=dashed,
		xmin=0, xmax=6, x tick label style={font=\small}, 
		xtick = {0.5, 1.5, 2.5, 3.5, 4.5, 5.5}, xticklabels={{\(C^K=16\)}, {\(C^K=32\)}, {\(C^K=64\)}, {\(C^K=128\)}, {\(C^K=256\)}, {PIM}},
		cycle list={{matlab1},{matlab2},{matlab3},{matlab4}},
		boxplot={
			draw position={1/4 + floor(\plotnumofactualtype/4) + 1/6*fpumod(\plotnumofactualtype,4)},
			box extend=0.125},
		every axis plot/.append style={fill,fill opacity=0.3},
		legend entries={Hinge,MMD-rbf,MMD-rep,MMD-rep-b},
		legend to name={legend},
		legend style={font=\footnotesize, draw=none, inner sep=-1.5pt, column sep=2pt, legend columns=-1},
		name=border
		]
		% format: data\\ lower wisker\\ median\\ upper quartile\\ lower quartile\\ upper whisker
		\addplot table[row sep=\\,y index=0] {data\\ 44.121\\ 46.244\\ 48.367\\ 44.121\\ 48.367\\ };
		\addplot table[row sep=\\,y index=0] {data\\ 38.573\\ 38.778\\ 38.936\\ 38.624\\ 38.989\\ };
		\addplot table[row sep=\\,y index=0] {data\\ 7.649\\ 9.676\\ 12.890\\ 8.580\\ 18.415\\ };
		\addplot table[row sep=\\,y index=0] {data\\ 7.718\\ 8.296\\ 9.088\\ 8.115\\ 10.989\\ };
		\addplot table[row sep=\\,y index=0] {data\\ 14.725\\ 24.911\\ 35.122\\ 18.721\\ 45.275\\ };
		\addplot table[row sep=\\,y index=0] {data\\ 25.756\\ 31.340\\ 40.849\\ 29.707\\ 47.659\\ };
		\addplot table[row sep=\\,y index=0] {data\\ 7.832\\ 9.004\\ 9.867\\ 8.213\\ 14.803\\ };
		\addplot table[row sep=\\,y index=0] {data\\ 6.780\\ 7.790\\ 8.347\\ 7.198\\ 15.097\\ };
		\addplot table[row sep=\\,y index=0] {data\\ 10.339\\ 18.136\\ 21.336\\ 13.520\\ 41.544\\ };
		\addplot table[row sep=\\,y index=0] {data\\ 21.676\\ 29.714\\ 32.444\\ 24.767\\ 44.624\\ };
		\addplot table[row sep=\\,y index=0] {data\\ 6.811\\ 9.576\\ 13.431\\ 7.639\\ 16.273\\ };
		\addplot table[row sep=\\,y index=0] {data\\ 6.424\\ 7.767\\ 8.573\\ 7.013\\ 21.110\\ };
		\addplot table[row sep=\\,y index=0] {data\\ 9.081\\ 12.907\\ 14.817\\ 10.650\\ 19.472\\ };
		\addplot table[row sep=\\,y index=0] {data\\ 12.812\\ 19.478\\ 22.882\\ 17.221\\ 30.474\\ };
		\addplot table[row sep=\\,y index=0] {data\\ 7.404\\ 9.992\\ 12.019\\ 7.743\\ 13.215\\ };
		\addplot table[row sep=\\,y index=0] {data\\ 6.616\\ 7.817\\ 9.671\\ 7.137\\ 16.574\\ };
		\addplot table[row sep=\\,y index=0] {data\\ 8.947\\ 11.150\\ 12.734\\ 10.071\\ 13.691\\ };
		\addplot table[row sep=\\,y index=0] {data\\ 12.544\\ 18.055\\ 21.483\\ 15.116\\ 33.719\\ };
		\addplot table[row sep=\\,y index=0] {data\\ 8.990\\ 9.459\\ 9.928\\ 8.990\\ 9.928\\ };
		\addplot table[row sep=\\,y index=0] {data\\ 6.571\\ 8.400\\ 10.187\\ 7.433\\ 17.351\\ };
		\addplot table[row sep=\\,y index=0] {data\\ 10.023\\ 12.308\\ 12.909\\ 11.644\\ 19.102\\ };
		\addplot table[row sep=\\,y index=0] {data\\ 13.061\\ 16.762\\ 24.983\\ 14.827\\ 33.352\\ };
		\addplot table[row sep=\\,y index=0] {data\\ 8.508\\ 10.296\\ 11.481\\ 8.799\\ 13.142\\ };
		\addplot table[row sep=\\,y index=0] {data\\ 7.088\\ 9.887\\ 14.104\\ 8.766\\ 15.771\\ };
		
		\iftrue
		\node[above, align=center, text=matlab1, font=\scriptsize] at (axis cs:0.25,50) {14};
		\node[above, align=center, text=matlab2, font=\scriptsize] at (axis cs:0.5-0.083,50) {13};
		\node[above, align=center, text=matlab3, font=\scriptsize] at (axis cs:0.5+0.083,50) {5};
		%\node[above, align=center, text=matlab4, font=\scriptsize] at (axis cs:0.75,50) {0};
		
		\node[above, align=center, text=matlab1, font=\scriptsize] at (axis cs:1.25,50) {7};
		\node[above, align=center, text=matlab2, font=\scriptsize] at (axis cs:1.5-0.083,50) {11};
		\node[above, align=center, text=matlab3, font=\scriptsize] at (axis cs:1.5+0.083,50) {4};
		%\node[above, align=center, text=matlab4, font=\scriptsize] at (axis cs:1.75,50) {0};
		
		\node[above, align=center, text=matlab1, font=\scriptsize] at (axis cs:2.25,50) {2};
		\node[above, align=center, text=matlab2, font=\scriptsize] at (axis cs:2.5-0.083,50) {2};
		\node[above, align=center, text=matlab3, font=\scriptsize] at (axis cs:2.5+0.083,50) {7};
		\node[above, align=center, text=matlab4, font=\scriptsize] at (axis cs:2.75,50) {1};
		
		\node[above, align=center, text=matlab1, font=\scriptsize] at (axis cs:3.25,50) {1};
		%\node[above, align=center, text=matlab2, font=\scriptsize] at (axis cs:3.5-0.083,50) {0};
		\node[above, align=center, text=matlab3, font=\scriptsize] at (axis cs:3.5+0.083,50) {9};
		\node[above, align=center, text=matlab4, font=\scriptsize] at (axis cs:3.75,50) {1};
		
		%\node[above, align=center, text=matlab1, font=\scriptsize] at (axis cs:4.25,50) {0};
		%\node[above, align=center, text=matlab2, font=\scriptsize] at (axis cs:4.5-0.083,50) {0};
		\node[above, align=center, text=matlab3, font=\scriptsize] at (axis cs:4.5+0.083,50) {14};
		\node[above, align=center, text=matlab4, font=\scriptsize] at (axis cs:4.75,50) {1};
		
		%\node[above, align=center, text=matlab1, font=\scriptsize] at (axis cs:5.25,50) {0};
		%\node[above, align=center, text=matlab2, font=\scriptsize] at (axis cs:5.5-0.083,50) {0};
		\node[above, align=center, text=matlab3, font=\scriptsize] at (axis cs:5.5+0.083,50) {10};
		\node[above, align=center, text=matlab4, font=\scriptsize] at (axis cs:5.75,50) {2};
		\fi
		\end{axis}
		\node[above left] at (border.north east) {\ref{legend}};
		\end{tikzpicture}
		\caption{CelebA\label{fig:pico_fid_celeba}}
	\end{subfigure}
	
	\begin{subfigure}[t]{\linewidth}
		\centering
		\begin{tikzpicture}
		\begin{axis}[
		boxplot/draw direction=y,
		width=\textwidth, height=0.32\textwidth,
		ymax=110, ytick = {20, 40, 60, 80},
		ylabel={FID},
		ymajorgrids=true, grid style=dashed,
		xmin=0, xmax=6, x tick label style={font=\small}, 
		xtick = {0.5, 1.5, 2.5, 3.5, 4.5, 5.5}, xticklabels={{\(C^K=16\)}, {\(C^K=32\)}, {\(C^K=64\)}, {\(C^K=128\)}, {\(C^K=256\)}, {PIM}},
		cycle list={{matlab1},{matlab2},{matlab3},{matlab4}},
		boxplot={
			draw position={1/4 + floor(\plotnumofactualtype/4) + 1/6*fpumod(\plotnumofactualtype,4)},
			box extend=0.125},
		every axis plot/.append style={fill,fill opacity=0.3},
		legend entries={Hinge,MMD-rbf,MMD-rep,MMD-rep-b},
		legend to name={legend},
		legend style={font=\footnotesize, draw=none, inner sep=-1.5pt, column sep=2pt, legend columns=-1},
		name=border
		]
		% format: data\\ lower wisker\\ median\\ upper quartile\\ lower quartile\\ upper whisker
		\addplot table[row sep=\\,y index=0] {data\\ 74.679\\ 79.419\\ 84.158\\ 74.679\\ 84.158\\ };
		\addplot table[row sep=\\,y index=0] {data\\ 90\\ 90\\ 90\\ 90\\ 90\\ };
		\addplot table[row sep=\\,y index=0] {data\\ 15.916\\ 19.432\\ 21.502\\ 17.720\\ 26.156\\ };
		\addplot table[row sep=\\,y index=0] {data\\ 16.591\\ 18.733\\ 22.525\\ 17.486\\ 36.919\\ };
		\addplot table[row sep=\\,y index=0] {data\\ 29.138\\ 55.769\\ 78.358\\ 53.375\\ 82.385\\ };
		\addplot table[row sep=\\,y index=0] {data\\ 90\\ 90\\ 90\\ 90\\ 90\\ };
		\addplot table[row sep=\\,y index=0] {data\\ 17.188\\ 20.966\\ 32.172\\ 19.376\\ 73.465\\ };
		\addplot table[row sep=\\,y index=0] {data\\ 13.576\\ 15.470\\ 16.600\\ 14.635\\ 25.130\\ };
		\addplot table[row sep=\\,y index=0] {data\\ 25.535\\ 31.394\\ 37.797\\ 26.668\\ 67.085\\ };
		\addplot table[row sep=\\,y index=0] {data\\ 90\\ 90\\ 90\\ 90\\ 90\\ };
		\addplot table[row sep=\\,y index=0] {data\\ 16.013\\ 17.653\\ 18.629\\ 16.710\\ 20.784\\ };
		\addplot table[row sep=\\,y index=0] {data\\ 11.224\\ 15.225\\ 16.262\\ 12.001\\ 18.764\\ };
		\addplot table[row sep=\\,y index=0] {data\\ 17.197\\ 25.088\\ 31.586\\ 22.748\\ 52.809\\ };
		\addplot table[row sep=\\,y index=0] {data\\ 90\\ 90\\ 90\\ 90\\ 90\\ };
		\addplot table[row sep=\\,y index=0] {data\\ 16.533\\ 18.931\\ 22.538\\ 17.270\\ 25.222\\ };
		\addplot table[row sep=\\,y index=0] {data\\ 11.690\\ 14.162\\ 18.097\\ 12.227\\ 22.536\\ };
		\addplot table[row sep=\\,y index=0] {data\\ 18.394\\ 23.628\\ 28.156\\ 21.763\\ 47.414\\ };
		\addplot table[row sep=\\,y index=0] {data\\ 90\\ 90\\ 90\\ 90\\ 90\\ };
		\addplot table[row sep=\\,y index=0] {data\\ 13.188\\ 35.268\\ 35.772\\ 24.016\\ 35.853\\ };
		\addplot table[row sep=\\,y index=0] {data\\ 13.784\\ 19.213\\ 23.281\\ 14.928\\ 34.710\\ };
		\addplot table[row sep=\\,y index=0] {data\\ 27.380\\ 34.655\\ 58.501\\ 32.865\\ 86.915\\ };
		\addplot table[row sep=\\,y index=0] {data\\ 90\\ 90\\ 90\\ 90\\ 90\\ };
		\addplot table[row sep=\\,y index=0] {data\\ 74.033\\ 74.033\\ 74.033\\ 74.033\\ 74.033\\ };
		\addplot table[row sep=\\,y index=0] {data\\ 20.120\\ 25.744\\ 33.838\\ 23.055\\ 43.427\\ };
		
		\iftrue
		\node[above, align=center, text=matlab1, font=\scriptsize] at (axis cs:0.25,90) {14};
		\node[above, align=center, text=matlab2, font=\scriptsize] at (axis cs:0.5-0.083,90) {16};
		\node[above, align=center, text=matlab3, font=\scriptsize] at (axis cs:0.5+0.083,90) {5};
		\node[above, align=center, text=matlab4, font=\scriptsize] at (axis cs:0.75,90) {1};
		
		\node[above, align=center, text=matlab1, font=\scriptsize] at (axis cs:1.25,90) {9};
		\node[above, align=center, text=matlab2, font=\scriptsize] at (axis cs:1.5-0.083,90) {16};
		\node[above, align=center, text=matlab3, font=\scriptsize] at (axis cs:1.5+0.083,90) {6};
		%\node[above, align=center, text=matlab4, font=\scriptsize] at (axis cs:1.75,90) {0};
		
		\node[above, align=center, text=matlab1, font=\scriptsize] at (axis cs:2.25,90) {6};
		\node[above, align=center, text=matlab2, font=\scriptsize] at (axis cs:2.5-0.083,90) {16};
		\node[above, align=center, text=matlab3, font=\scriptsize] at (axis cs:2.5+0.083,90) {11};
		%\node[above, align=center, text=matlab4, font=\scriptsize] at (axis cs:2.75,90) {0};
		
		\node[above, align=center, text=matlab1, font=\scriptsize] at (axis cs:3.25,90) {3};
		\node[above, align=center, text=matlab2, font=\scriptsize] at (axis cs:3.5-0.083,90) {16};
		\node[above, align=center, text=matlab3, font=\scriptsize] at (axis cs:3.5+0.083,90) {12};
		%\node[above, align=center, text=matlab4, font=\scriptsize] at (axis cs:3.75,90) {0};
		
		\node[above, align=center, text=matlab1, font=\scriptsize] at (axis cs:4.25,90) {2};
		\node[above, align=center, text=matlab2, font=\scriptsize] at (axis cs:4.5-0.083,90) {16};
		\node[above, align=center, text=matlab3, font=\scriptsize] at (axis cs:4.5+0.083,90) {12};
		\node[above, align=center, text=matlab4, font=\scriptsize] at (axis cs:4.75,90) {2};
		
		\node[above, align=center, text=matlab1, font=\scriptsize] at (axis cs:5.25,90) {4};
		\node[above, align=center, text=matlab2, font=\scriptsize] at (axis cs:5.5-0.083,90) {16};
		\node[above, align=center, text=matlab3, font=\scriptsize] at (axis cs:5.5+0.083,90) {15};
		\node[above, align=center, text=matlab4, font=\scriptsize] at (axis cs:5.75,90) {1};
		\fi
		\end{axis}
		\node[above left] at (border.north east) {\ref{legend}};
		\end{tikzpicture}
		\caption{LSUN-bedroom\label{fig:pico_fid_lsun}}
	\end{subfigure}
	\caption{Boxplot of the FID scores for 16 learning rate combinations on four datasets: (a) CIFAR-10, (b) STL-10, (c) CelebA, (d) LSUN-bedroom, using four loss functions, Hinge, MMD-rbf, MMD-rep and MMD-rmb. Spectral normalization was applied to discriminator with two power iteration methods: PICO and PIM. For PICO, five coefficients \(C^K\) were tested: 16, 32, 64, 128, and 256. A learning rate combination was considered diverged or poorly-performed if the FID score exceeded a threshold \(\tau\), which is 50, 80, 50, 90 for CIFAR-10, STL-10, CelebA and LSUN-bedroom respectively. The box quartiles were plotted based on the cases with \(\text{FID}<\tau\) while the number of diverged or poorly-performed cases (out of 16 learning rate combinations) was shown above each box if it is non-zero. We introduced \(\tau\) because the diverged cases often had arbitrarily large and non-meaningful FID scores.\label{fig:pico_vs_pim}}
\end{figure}

\begin{table}[t]
	\caption{Fr\'echet Inception distance (FID) on image generation tasks using spectral normalization with two power iteration methods PICO and PIM}
	\centering
	\label{Tab:pico_pim_fid}
	\begin{threeparttable}
		\begin{tabular}{L{2cm} c c c c c c c c c}
			\toprule
			\multirow{2}{*}{Methods} & \multirow{2}{1cm}{\(C^K\) in PICO} & \multicolumn{2}{c}{CIFAR-10} & \multicolumn{2}{c}{STL-10} & \multicolumn{2}{c}{CelebA} & \multicolumn{2}{c}{LSUN-bedrom} \\
			\cmidrule(lr){3-4} \cmidrule(lr){5-6} \cmidrule(lr){7-8} \cmidrule(lr){9-10}
			& & PIM & PICO & PIM & PICO & PIM & PICO & PIM & PICO\\
			\midrule
			Hinge & 128 & 23.60 & 22.89 & 47.10 & 47.24 & 10.02 & 9.08 & 27.38 & 17.20 \\
			MMD-rbf & 128 & 26.56 & 26.50 & 53.17 & 54.23 & 13.06 & 12.81 & & \\
			MMD-rep & 64 & 19.98 & 17.00 & 40.40 & 37.15 & 8.51 & 6.81 & 74.03 & 16.01 \\
			MMD-rep-b & 64 & 18.24 & 16.65 & 39.78 & 37.31 & 7.09 & 6.42 & 20.12 & 11.22 \\
			\bottomrule
		\end{tabular}
	\iffalse
		\begin{tablenotes}
			\item[1] MS-SSIM is not compatible with CIFAR-10 and STL-10 which have data from many classes; for CelebA and LSUN-bedroom, IS is not meaningful (\cite{mmd_gan_t}); thus they are not shown. \item[2] On LSUN-bedroom, MMD-rbf and MMD-rq did not achieve reasonable results and thus are omitted.
		\end{tablenotes}
	\fi
	\end{threeparttable}
\end{table}

In this section, we empirically evaluate the effects of coefficient \(C^K\) on the performance of PICO and compare PICO against PIM using several loss functions.

\textbf{Experiment setup:} We used a similar setup as Section~\ref{sec:exp:setup} with the following adjustments. Four loss functions were tested: hinge, MMD-rbf, MMD-rep and MMD-rep-b. Either PICO or PIM was used at each layer of the discriminator. For PICO, five coefficients \(C^K\) were tested: 16, 32, 64, 128 and 256 (note this is the overall coefficient for \(K\) layers; \(K=8\) for CIFAR-10 and STL-10; \(K=10\) for CelebA and LSUN-bedroom; see Appendix \ref{sec:architecture}). FID was used to evaluate the performance of each combination of loss function and power iteration method, e.g., hinge + PICO with \(C^K=16\).

\textbf{Results:} For each combination of loss function and power iteration method, the distribution of FID scores over 16 learning rate combinations is shown in Fig.~\ref{fig:pico_vs_pim}. We separated well-performed learning rate combinations from diverged or poorly-performed ones using a threshold \(\tau\) as the diverged cases often had non-meaningful FID scores. The boxplot shows the distribution of FID scores for good-performed cases while the number of diverged or poorly-performed cases was shown above each box if it is non-zero. 

Fig.~\ref{fig:pico_vs_pim} shows that:
\begin{enumerate}[label=\arabic*), leftmargin=*]
	\item When PICO was used, the hinge, MMD-rbf and MMD-rep methods were sensitive to the choices of \(C^K\) while MMD-rep-b was robust. For hinge and MMD-rbf, higher \(C^K\) may result in better FID scores and less diverged cases over 16 learning rate combinations. For MMD-rep, higher \(C^K\) may cause more diverged cases; however, the best FID scores were often achieved with \(C^K=64\) or \(128\).
	\item For CIFAR-10, STL-10 and CelebA datasets, PIM performed comparable to PICO with \(C^K=128\) or \(256\) on four loss functions. For LSUN bedroom dataset, it is likely that the performance of PIM corresponded to that of PICO with \(C^K>256\). This implies that PIM may result in a relatively loose Lipschitz constraint on deep convolutional networks. 
	\item MMD-rep-b performed generally better than hinge and MMD-rbf with tested power iteration methods and hyper-parameter configurations. Using PICO, MMD-rep also achieved generally better FID scores than hinge and MMD-rbf. This implies that, given a limited computational budget, the proposed repulsive loss may be a better choice than the hinge and MMD loss for the discriminator.
\end{enumerate}
Table~\ref{Tab:pico_pim_fid} shows the best FID scores obtained by PICO and PIM where \(C^K\) was fixed at \(128\) for hinge and MMD-rbf, and \(64\) for MMD-rep and MMD-rep-b. For hinge and MMD-rbf, PICO performed significantly better than PIM on the LSUN-bedroom dataset and comparably on the rest datasets. For MMD-rep and MMD-rep-b, PICO achieved consistently better FID scores than PIM. 

However, compared to PIM, PICO has a higher computational cost which roughly equals the additional cost incurred by increasing the batch size by two (\cite{pico_similar}). This may be problematic when a small batch has to be used due to memory constraints, e.g., when handling high resolution images on a single GPU. Thus, we recommend using PICO when the computational cost is less of a concern. 

\section{Supplementary Experiments}
\label{sec:supp_exp}

\subsection{Lipschitz Constraint via Gradient Penalty}
\label{sec:exp:gp}

\begin{table}[t]
	\begin{minipage}{0.48\linewidth}
		\centering
		\begin{threeparttable}
			\caption{Inception score (IS) and Fr\'echet Inception distance (FID) on CIFAR-10 dataset using gradient penalty and different loss functions}
			\label{Tab:rep_gp}
			\begin{tabular}{L{2.6cm} c c}
				\toprule
				Methods\tnote{1} & IS & FID\\
				\midrule
				Real data & 11.31 & 2.09 \\
				\midrule
				SMMDGAN\tnote{2} & 7.0 & 31.5 \\
				SN-SMMDGAN\tnote{2} & \textbf{7.3} & 25.0 \\
				\midrule
				MMD-rep-gp & 7.26 & \textbf{23.01} \\
				\bottomrule
			\end{tabular}
			\begin{tablenotes}
				\item[1] All methods used the same DCGAN architecture. \item[2] Results from \cite{mmd_gp} Table 1.
			\end{tablenotes}
		\end{threeparttable}
	\end{minipage}
	\begin{minipage}{0.48\linewidth}
		\centering
		\begin{threeparttable}
			\caption{Inception score (IS), Fr\'echet Inception distance (FID) on CIFAR-10 dataset using MMD-rep and different dimensions of discriminator outputs}
			\label{Tab:rep_os}
			\begin{tabular}{L{2.2cm} c c c}
				\toprule
				Methods & \(C^K\) & IS & FID\\
				\midrule
				Real data & & 11.31 & 2.09 \\
				\midrule
				MMD-rep-1 & 64 & 7.43 & 22.43 \\
				MMD-rep-4 & 64 & 7.81 & 17.87 \\
				MMD-rep-16 & 32 & 8.20 & 16.99 \\
				MMD-rep-64 & 32 & 8.08 & 15.65 \\
				MMD-rep-256 & 32 & 7.96 & 16.61 \\
				\bottomrule
			\end{tabular}
		\iffalse
			\begin{tablenotes}
				\item[1] Results from \cite{spectral} Table 7. \item[2] Results from \cite{mmd_gp} Table 1.
			\end{tablenotes}
		\fi
		\end{threeparttable}
	\end{minipage}
\end{table}

Gradient penalty has been widely used to impose the Lipschitz constraint on the discriminator arguably since Wasserstein GAN (\cite{wgan_gp}). This section explores whether the proposed repulsive loss can be applied with gradient penalty. 

Several gradient penalty methods have been proposed for MMD-GAN. \cite{mmd_gan_t} penalized the gradient norm of witness function \(f_w(\bm{z})=\set{E}_{P_{\rdv{X}}}[k_D(\bm{z},\bm{x})]-\set{E}_{P_G}[k_D(\bm{z},\bm{y})]\) w.r.t. the interpolated sample \(\bm{z}=u\bm{x}+(1-u)\bm{y}\) to one, where \(u\sim\ds{U}(0,1)\)\footnote{Empirically, we found this gradient penalty did not work with the repulsive loss. The reason may be the attractive loss \(L_D^{\text{att}}\) (Eq.~\ref{eq:L_D}) is symmetric in the sense that swapping \(P_{\rdv{X}}\) and \(P_{G}\) results in the same loss; while the repulsive loss is asymmetric and naturally results in varying gradient norms in data space.}.
%Note that the squared MMD distance (Eq.~\ref{eq:L_G}) is symmetric in the sense that swapping \(P_{\rdv{X}}\) and \(P_{G}\) results in the same metric. %This is not the case for repulsive loss. %Consider \(L_D^{\text{rep}}=\set{E}_{P_{\rdv{X}}}[f_w(\bm{x})]+\set{E}_{P_G}[f_w(\bm{y})]\) and \(L_G^\text{mmd}=\set{E}_{P_{\rdv{X}}}[f_w(\bm{x})]-\set{E}_{P_G}[f_w(\bm{y})]\). During training, GAN with the repulsive loss (Eq.~\ref{eq:rep_loss}) will likely yield \(\set{E}_{P_G}f_w(\bm{y})<0<\set{E}_{P_{\rdv{X}}}f_w(\bm{x})\) and \(|\set{E}_{P_{\rdv{X}}}f_w(\bm{x})|<|\set{E}_{P_G}f_w(\bm{y})|\); while for the attractive loss,
%This is not the case for repulsive loss, which may explain that empirically we found this formation of gradient penalty significantly deteriorate the performance.  
More recently, \cite{mmd_gp} proposed to impose the Lipschitz constraint on the mapping \(\phi \circ D\) directly and derived the Scaled MMD (SMMD) as \(SM_k(P,Q)=\sigma_{\mu,k,\lambda}M_k(P,Q)\), where the scale \(\sigma_{\mu,k,\lambda}\) incorporates gradient and smooth penalties. Using the Gaussian kernel and measure \(\mu=P_{\rdv{X}}\) leads to the discriminator loss:
\begin{equation}\label{eq:smmd}
	L_D^{\text{SMMD}}=\frac{L_D^{\text{att}}}{1+\lambda\set{E}_{P_{\rdv{X}}}[\norm{\nabla D(\bm{x})}_F^2]}
\end{equation}
We apply the same formation of gradient penalty to the repulsive loss:
\begin{equation}\label{eq:rep_gp}
	L_{D}^{\text{rep-gp}}=\frac{L_{D}^{\text{rep}}-1}{1+\lambda\set{E}_{P_{\rdv{X}}}[\norm{\nabla D(\bm{x})}_F^2]}
\end{equation}
where the numerator \(L_{D}^{\text{rep}}-1\le0\) so that the discriminator will always attempt to minimize both \(L_{D}^{\text{rep}}\) and the Frobenius norm of gradients \(\nabla D(\bm{x})\) w.r.t. real samples. Meanwhile, the generator is trained using the MMD loss \(L_{G}^{\text{mmd}}\) (Eq.~\ref{eq:L_G}).

\textbf{Experiment setup:} The gradient-penalized repulsive loss \(L_{D}^{\text{rep-gp}}\) (Eq.~\ref{eq:rep_gp}, referred to as MMD-rep-gp) was evaluated on the CIFAR-10 dataset. We found \(\lambda=10\) in \cite{mmd_gp} too restrictive and used \(\lambda=0.1\) instead. Same as \cite{mmd_gp}, the output dimension of discriminator was set to one. Since we entrusted the Lipschitz constraint to the gradient penalty, spectral normalization was not used. The rest experiment setup can be found in Section~\ref{sec:exp:setup}.

\textbf{Results:} Table~\ref{Tab:rep_gp} shows that the proposed repulsive loss can be used with gradient penalty to achieve reasonable results on CIFAR-10 dataset. For comparison, we cited the Inception score and FID for Scaled MMD-GAN (SMMDGAN) and Scaled MMD-GAN with spectral normalization (SN-SMMDGAN) from \cite{mmd_gp}. Note that SMMDGAN and SN-SMMDGAN used the same DCGAN architecture as MMD-rep-gp, but were trained for 150k generator updates and 750k discriminator updates, much more than that of MMD-rep-gp (100k for both \(G\) and \(D\)). Thus, the repulsive loss significantly improved over the attractive MMD loss for discriminator. 

\subsection{Output Dimension of Discriminator}
\label{sec:exp:D_out}
In this section, we investigate the impact of the output dimension of discriminator on the performance of repulsive loss. 

\textbf{Experiment setup:} We used a similar setup as Section~\ref{sec:exp:setup} with the following adjustments. The repulsive loss was tested on the CIFAR-10 dataset with a variety of discriminator output dimensions: \(d\in\{1, 4, 16, 64, 256\}\). Spectral normalization was applied to discriminator with the proposed PICO method (see Appendix~\ref{sec:pico}) and the coefficients \(C^K\) selected from \(\{16, 32, 64, 128, 256\}\).

\textbf{Results:} Table~\ref{Tab:rep_os} shows that using more than one output neuron in the discriminator \(D\) significantly improved the performance of repulsive loss over the one-neuron case on CIFAR-10 dataset. The reason may be that using insufficient output neurons makes it harder for the discriminator to learn an injective and discriminative representation of the data (see Fig.~\ref{fig:tsne_rep}). However, the performance gain diminished when more neurons were used, perhaps because it becomes easier for \(D\) to surpass the generator \(G\) and trap it around saddle solutions. The computation cost also slightly increased due to more output neurons.

\subsection{Samples of Unsupervised Image Generation}
\label{sec:uns_samples}

Generated samples on CelebA dataset are given in Fig.~\ref{fig:samples_celebA} and LSUN bedrooms in Fig.~\ref{fig:samples_lsun}.

\begin{figure}[tb]
	\centering
	\begin{subfigure}[t]{0.45\textwidth}
		\centering
		\includegraphics[width=1\textwidth]{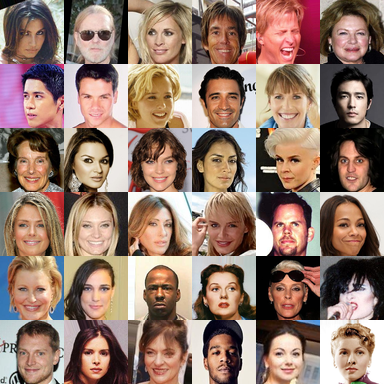}
		\caption{Real samples\label{fig:sample_celebA_real}}
	\end{subfigure}
	~
	\begin{subfigure}[t]{0.45\linewidth}
		\centering
		\includegraphics[width=1\textwidth]{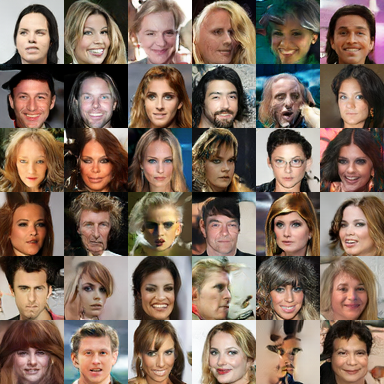}
		\caption{Hinge\label{fig:sample_celebA_hinge}}
	\end{subfigure}
	
	\begin{subfigure}[t]{0.45\linewidth}
		\centering
		\includegraphics[width=1\textwidth]{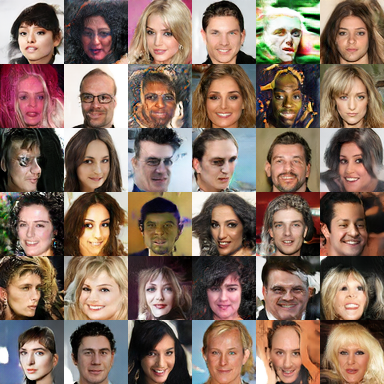}
		\caption{MMD-rbf\label{fig:sample_celebA_rbf}}
	\end{subfigure}
	~
	\begin{subfigure}[t]{0.45\textwidth}
		\centering
		\includegraphics[width=1\textwidth]{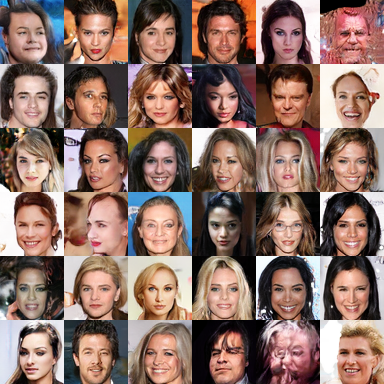}
		\caption{MMD-rbf-b\label{fig:sample_celebA_mgb}}
	\end{subfigure}
	
	\begin{subfigure}[t]{0.45\linewidth}
		\centering
		\includegraphics[width=1\textwidth]{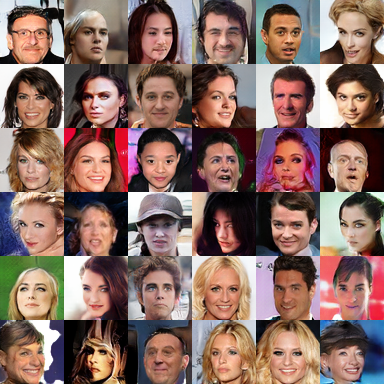}
		\caption{MMD-rep\label{fig:sample_celebA_rep}}
	\end{subfigure}
	~
	\begin{subfigure}[t]{0.45\linewidth}
		\centering
		\includegraphics[width=1\textwidth]{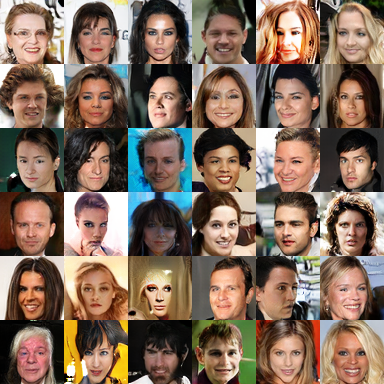}
		\caption{MMD-rep-b\label{fig:sample_celebA_rmb}}
	\end{subfigure}
	\caption{Image generation using different loss functions on \(64\times64\) CelebA dataset.}  
	\label{fig:samples_celebA}
\end{figure}

\begin{figure}[tb]
	\centering
	\begin{subfigure}[t]{0.45\textwidth}
		\centering
		\includegraphics[width=1\textwidth]{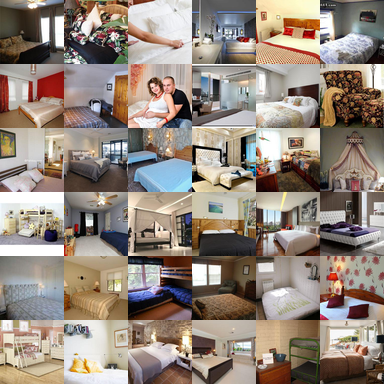}
		\caption{Real samples\label{fig:sample_lsun_real}}
	\end{subfigure}
	~
	\begin{subfigure}[t]{0.45\linewidth}
		\centering
		\includegraphics[width=1\textwidth]{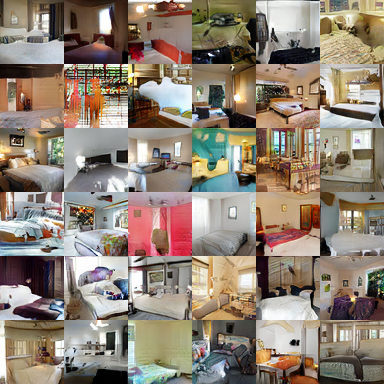}
		\caption{Non-saturating\label{fig:sample_lsun_non_saturating}}
	\end{subfigure}
	
	\begin{subfigure}[t]{0.45\linewidth}
		\centering
		\includegraphics[width=1\textwidth]{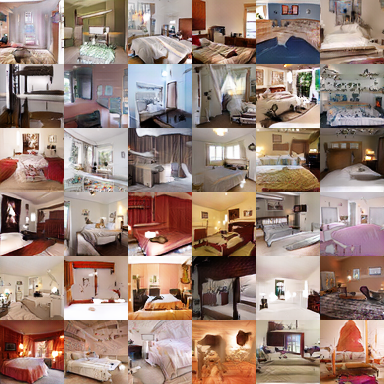}
		\caption{Hinge\label{fig:sample_lsun_hinge}}
	\end{subfigure}
	~
	\begin{subfigure}[t]{0.45\linewidth}
		\centering
		\includegraphics[width=1\textwidth]{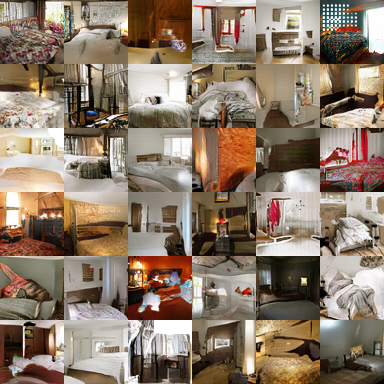}
		\caption{MMD-rbf-b\label{fig:sample_lsun_mgb}}
	\end{subfigure}

	\begin{subfigure}[t]{0.45\linewidth}
		\centering
		\includegraphics[width=1\textwidth]{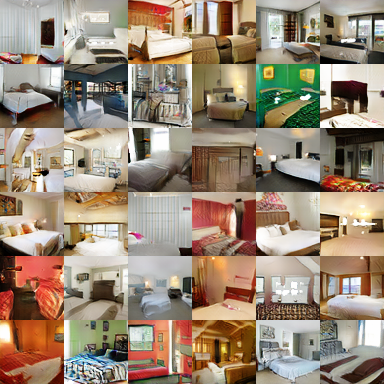}
		\caption{MMD-rep\label{fig:sample_lsun_rep}}
	\end{subfigure}
	~
	\begin{subfigure}[t]{0.45\linewidth}
		\centering
		\includegraphics[width=1\textwidth]{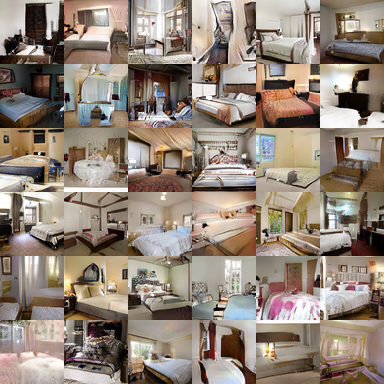}
		\caption{MMD-rep-b\label{fig:sample_lsun_rmb}}
	\end{subfigure}
	\caption{Image generation using different loss functions on \(64\times64\) LSUN bedroom dataset.}  
	\label{fig:samples_lsun}
\end{figure}

\end{appendices}

\end{document}